\setlist[itemize,enumerate]{noitemsep, topsep=0pt, left=0pt}
\theoremstyle{definition}
\newtheorem{theorem}{Theorem}
\numberwithin{theorem}{section} 
\newtheorem{corollary}[theorem]{Corollary}
\newtheorem{lemma}[theorem]{Lemma}
\newtheorem{definition}{Definition}
\numberwithin{definition}{section} 
\newcommand{\method}{\textsc{LoOPT}\xspace}
\newcommand{\ns}[1]{{\color{orange}NS: #1}}
\newcommand{\new}[1]{#1}
\newcommand{\loopy}[2]{$(#1\otimes #2)$}
\newcommand*{\MinNumber}{0.0}%
\newcommand*{\MidNumber}{60} %
\newcommand*{\MaxNumber}{150}%
\newcommand{\ApplyGradient}[1]{%
        \ifdim #1 pt > \MidNumber pt
            \pgfmathsetmacro{\PercentColor}{max(min(100.0*(min(#1, \MaxNumber) - \MidNumber)/(\MaxNumber-\MidNumber),100.0),0.00)} %
            
            % {\pgfmathsetmacro{#1}{min(max(#1,\MinNumber),\MaxNumber)}
            \hspace{-0.33em}\colorbox{green!\PercentColor!yellow}{#1}
        \else
            \pgfmathsetmacro{\PercentColor}{max(min(100.0*(\MidNumber - #1)/(\MidNumber-\MinNumber),100.0),0.00)} %
            \hspace{-0.33em}\colorbox{red!\PercentColor!yellow}{#1}
        \fi
}
\newcommand{\lreg}{\lambda_{\text{reg}}}
\def\eqref#1{equation~\ref{#1}}
\def\1{\bm{1}}
\def\vv{{\bm{v}}}
\def\vx{{\bm{x}}}
\DeclareMathAlphabet{\mathsfit}{\encodingdefault}{\sfdefault}{m}{sl}
\SetMathAlphabet{\mathsfit}{bold}{\encodingdefault}{\sfdefault}{bx}{n}
\newcommand{\softmax}{\mathrm{softmax}}
\newcommand{\inner}[2]{\left\langle #1,#2 \right\rangle}
\DeclareMathOperator*{\argmax}{arg\,max}
\newcommand{\NC}{\mathsf{NC}}
\newcommand{\TC}{\mathsf{TC}}
\newcommand{\relu}{\mathsf{relu}}
\newcommand{\hop}{\mathsf{hop}}
\newcommand{\abs}[1]{|#1|}
\newcommand{\find}{\mathsf{find}}
\newcommand{\id}{\mathsf{id}}
\newcommand{\transformer}{\mathsf{TF}}
\newcommand{\tfblock}{\mathsf{TB}}
\newcommand{\posencoding}{\mathsf{PE}}
\newcommand{\tokenembedding}{\mathsf{TE}}
\newcommand{\transoutput}{\mathsf{OUTPUT}}
\newcommand{\embed}{\mathsf{EMBED}}
\newcommand{\attn}{\mathsf{ATTN}}
\newcommand{\shift}{\mathsf{SHIFT}}
\newcommand{\mha}{\mathsf{MHA}}
\newcommand{\ff}{\mathsf{FF}}
\newcommand{\NOT}{\mathsf{NOT}}
\newcommand{\AND}{\mathsf{AND}}
\newcommand{\OR}{\mathsf{OR}}
\newcommand{\MAJORITY}{\mathsf{MAJORITY}}
\newcommand{\bin}{\mathsf{bin}}
\newcommand{\sbin}{\mathsf{sbin}}
\newcommand{\indct}[1]{\bm{1}\left[#1\right]}
\NewDocumentCommand{\T}{ooo}{%
    \mathsf{T}\IfNoValueF{#1}{
    	[#1%
  			\IfNoValueF{#2}{ ,#2}%
    		\IfNoValueF{#3}{ ,#3}
    	]}%
}
\NewDocumentCommand{\Cot}{oooo}{%
    \mathsf{CoT}\IfNoValueF{#1}{
    	[#1%
  			\IfNoValueF{#2}{ ,#2}%
    		\IfNoValueF{#3}{ ,#3}
    		\IfNoValueF{#4}{ ,#4}
%    			\IfInteger{#3}{,#3}{,#3}}
    	]}%
}
\NewDocumentCommand{\Method}{oooo}{%
    \mathsf{\method}\IfNoValueF{#1}{
    	[#1%
  			\IfNoValueF{#2}{ ,#2}%
    		\IfNoValueF{#3}{ ,#3}
    		\IfNoValueF{#4}{ ,#4}
%    			\IfInteger{#3}{,#3}{,#3}}
    	]}%
}
\newcommand{\rds}[1]{\left[#1\right]_s}
\newcommand{\Floating}{\mathbb{F}}
\newcommand{\interleave}[2]{{#1}^\frown{#2}}
\title{Reasoning with Latent Thoughts: On the Power of Looped Transformers}
\author{Nikunj Saunshi$^{1}$, Nishanth Dikkala$^{1}$, Zhiyuan Li$^{1,2}$, Sanjiv Kumar$^{1}$, Sashank J. Reddi$^{1}$\\
\texttt{\small \{nsaunshi, nishanthd, lizhiyuan, sanjivk, sashank\}@google.com}\\
$^{1}$Google Research, $^{2}$Toyota Technological Institute at Chicago\\\\
Submitted on Oct. 1, 2024
}
\begin{document}

\maketitle

\begin{abstract}
% Large language models have shown remarkable  reasoning abilities and scaling laws suggest that parameter count is a primary driver. Recent works \citep{chen2024can,ye2024physics} argue that, for reasoning, depth plays a crucial role in addition to parameter count.
Large language models have shown remarkable reasoning abilities and scaling laws suggest that large parameter count, especially along the depth axis, is the primary driver. 
In this work, we make a stronger claim --- many reasoning problems require a large depth but not necessarily many parameters. 
This unlocks a novel application of {\em looped models for reasoning}.
Firstly, we show that for many synthetic reasoning problems like addition, $p$-hop induction, and math problems, a $k$-layer transformer looped $L$ times nearly matches the performance of a $kL$-layer non-looped model, and is significantly better than a $k$-layer model.
This is further corroborated by theoretical results showing that many such reasoning problems can be solved via iterative algorithms, and thus, can be solved effectively using looped models with nearly optimal depth.
% Perhaps surprisingly, these benefits translate to practical language modeling settings --- on general language modeling tasks that require reasoning, a $k$-layer model looped $L$ times can be competitive, if not better, to a $kL$-layer model.
Perhaps surprisingly, these benefits also translate to practical settings of language modeling --- on many downstream reasoning tasks, a language model with $k$-layers looped $L$ times can be competitive to, if not better than, a $kL$-layer language model.
In fact, our empirical analysis reveals an intriguing phenomenon: looped and non-looped models exhibit scaling behavior that depends on their effective depth, akin to the inference-time scaling of chain-of-thought (CoT) reasoning.
We further elucidate the connection to CoT reasoning by proving that looped models implicitly generate \emph{latent thoughts} and can simulate $T$ steps of CoT with $T$ loops.
Inspired by these findings, we also present an interesting dichotomy between reasoning and memorization, and design a looping-based regularization that is effective on both fronts.
% Finally, inspired by these findings, we design a looping-based regularization and demonstrate its effectiveness across several downstream reasoning tasks.
\end{abstract}

\section{Introduction}
\label{sec:intro}

\looseness-1Language models have shown a lot of promise in solving problems that require strong reasoning abilities like math, coding, common sense reasoning and logical puzzles~\citep{brown2020language, team2023gemini}.
This has sparked interest in developing techniques to improve reasoning on harder problems \citep{wei2022chain} and has inspired theoretical studies on how Transformers are able to perform reasoning \citep{feng2024towards,sanford2024understanding}.
Reasoning abilities are often emergent in larger language models \citep{wei2022emergent} -- this aligns with various scaling laws \citep{kaplan2020scaling,hoffmann2022training,allen2024physics} that show that the performance of language models is very strongly dependent on the model size (i.e., number of parameters) and much lesser on other architectural design choices.
% So does this mean {\em parameters are all you need for reasoning?}
However, recent works have started to question this view. \citet{ye2024physics} argue that scaling laws for reasoning are more subtle, and {\em depth is very important} in addition to parameter count -- at the same parameter count, deeper but shallower models are better.
This is a deviation from the conventional scaling law wisdom, but it intuitively makes sense because reasoning problems often requires multi-step compositional thinking, and thus, depth can play a crucial role.

\looseness-1In this work, we make a stronger claim -- while depth is important, many reasoning problems do not necessarily require a lot of parameters. How does one solve reasoning problems with large depth but few parameters? We argue that {\em looped models} are perfectly suited for this, where the same function, parameterized with few parameters, is iteratively applied on the input. This leads us to our first important claim:
\vspace{-0.05in}
\begin{center}
    \textit{Claim 1: Many reasoning problems require depth but not necessarily parameters. That is, they can be solved via looped models}
\end{center}
\vspace{-0.05in}
Looped models have been studied in the literature for parameter efficiency \citep{lan2019albert}, adaptive compute \citep{dehghani2018universal}, equilibrium models \citep{bai2019deep} and for in-context learning \citep{yang2023looped,gatmiry2024can}.
In this work, we {\bf initiate the study of looped models in the context of reasoning}. Admittedly, reasoning is not very well-defined and can be of various forms \citep{sun2023survey}. Acknowledging this hurdle, we focus on a non-exhaustive list of problems that intuitively require reasoning and that are inspired by reasoning benchmarks.
Throughout the paper, we use the notation \loopy{k}{L} to denote a $k$-layer model looped $L$ times (precise definition in \Cref{sec:synthetic_reasoning}), which has the same number of parameters as a \loopy{k}{1} model and same flops as a \loopy{kL}{1} non-looped model (see \Cref{fig:looping_illustration}). 
As a first step towards connecting looped models and reasoning, we empirically evaluate looped models on several simple reasoning tasks in the literature (e.g. \Cref{sec:synthetic_reasoning}). Perhaps surprisingly,  we find that a \loopy{k}{L} looped models does almost as well as, if not better than, a non-looped model \loopy{kL}{1} that has the same effective depth but $L$ times more parameters on these reasoning tasks. The looped model is also significantly better than a \loopy{k}{1} model which has the same number of parameters. Our theoretical results on the expressiveness of looped models in representing iterative algorithms with short description further corroborate these empirical findings and provide strong support for our claim. This naturally raises an important question: do looped models benefit language modeling in a similar manner?

\begin{figure}[!tbp]% [H] is so declass\'e!
\centering
    \begin{subfigure}{\textwidth}
    \centering    \includegraphics[width=0.8\textwidth]{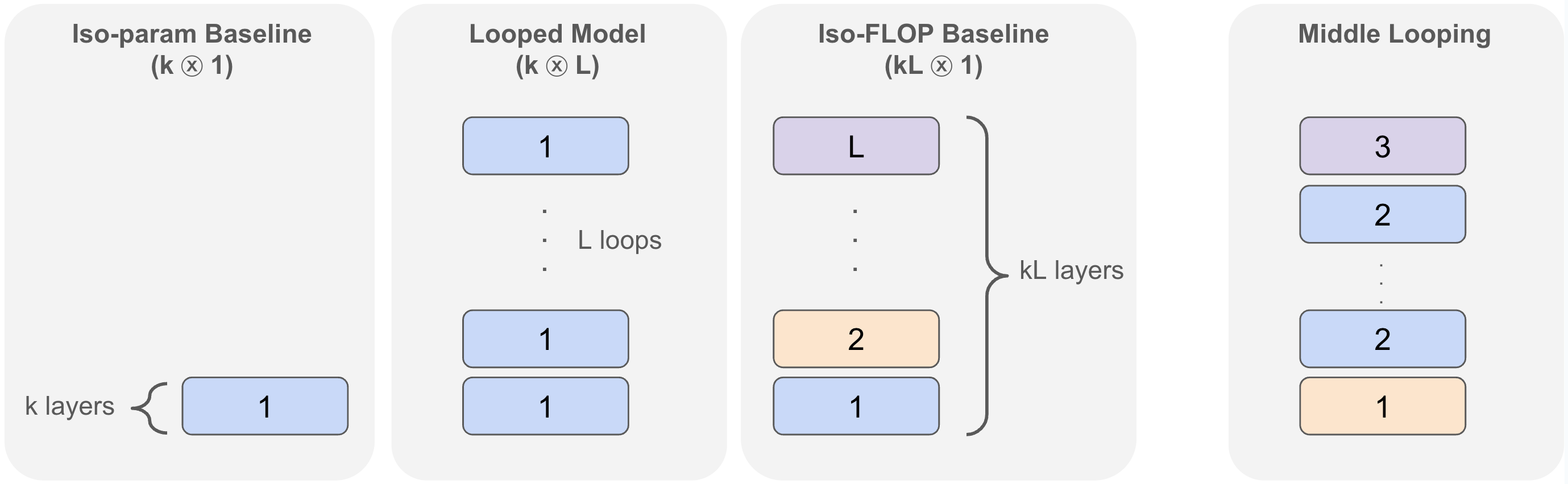}
    % \caption{}
    \end{subfigure}\hfill
    % \vspace{-0.1in}
    \caption{\looseness-1\new{Illustration of the simple and architecture agnostic looping mechanism that we consider. A $k$-layer block looped $L$ times ({\bf middle}) is denoted by \loopy{k}{L}, which can essentially be viewed as a weighted shared model. The iso-param baseline, \loopy{k}{1}, is a $k$-layer model with the same number of \emph{distinct} parameters. The iso-FLOP baseline, \loopy{kL}{1}, is a $kL$-layer model with the same depth but $L$ times more parameters. Middle looping is a strategy that is inspired from prior works on model stacking (e.g. \citep{saunshi2024inductive}).}}
    \label{fig:looping_illustration}
\end{figure}

\begin{center}
    \textit{Claim 2: For language modeling, looped models have an inductive bias towards good reasoning despite having worse perplexity and memorization to an iso-flop non-looped model}
\end{center}
\vspace{-0.05in}
% \ns{Need to explain what we mean by inductive bias}
\looseness-1For the above claim, we again train a \loopy{k}{L} looped model on causal language modeling and compare it to the iso-param \loopy{k}{1} and iso-flop \loopy{kL}{1} non-looped baselines. While the looped model improves over the iso-param baseline, perhaps unsurprisingly, it ends up with worse perplexity than iso-flop baseline, since perplexity depends strongly on number of parameters. However, the downstream evaluations reveal an intriguing trend: looped models have a tendency to improve tasks that require reasoning a lot more than memorization tasks.
Specifically, the looped model has reasoning performance much closer to the iso-flop baseline, sometimes even exceeding it despite having $L$ times fewer parameters and worse perplexity.
This contrasting behavior between the pretraining and downstream metrics has been a subject of study lately \citep{saunshi22understanding,liu2023same} and is attributed to the \emph{inductive biases} introduced due to different architectures and training algorithms. Our empirical analysis also uncovers an interesting phenomenon: accuracy on downstream tasks scale as logarithm of the effective depth. In particular, more loops enhances performance, and the relative benefit of loops is higher for tasks that require more reasoning. This is conceptually similar to inference time scaling discovered for CoT, but with looping as a central component. To further elucidate this interesting relationship of looped models with CoT, we present the following claim.

\vspace{-0.05in}
\begin{center}
    \textit{Claim 3: Looped models generate latent thoughts and can, in theory, simulate CoT reasoning}
\end{center}
\vspace{-0.05in}

Note that CoT reasoning gives the model more time and compute by generating multiple thought tokens before the answer, and it has powered the recent paradigm of inference-time scaling for ``thinking'' models like O1 and DeepSeek's R1 \citep{guo2025deepseek}. We make an observation about CoT reasoning -- it is essentially a looped model that generates 1 thought token in each iteration. However, looped models seem to be much more powerful, since they can generate multiple \emph{latent thoughts} in each iteration. 
We translate this intuition into a theoretical result about how looped models can simulate CoT reasoning.

Motivated by these findings, we propose a regularization scheme that aims to tap into the inductive bias of looped models towards reasoning.
This leads us to our final claim:
\vspace{-0.05in}
\begin{center}
    \textit{Claim 4: Looping-inspired regularization can leverage this inductive bias towards better reasoning}
\end{center}
\vspace{-0.05in}

With the backdrop of these claims, we concretely present the contributions of the paper below:
\begin{itemize}
    \item In this paper we study looped models -- multilayer models with weight sharing -- and their role in reasoning. In particular, we compare a $k$-layer model looped $L$ times, denoted by \loopy{k}{L}, with an {\em iso-param} \loopy{k}{1} non-looped model with $k$ layers and an {\em iso-flop} \loopy{kL}{1} model with $kL$ layers and $L$ times more parameters.
    \item We conduct experiments on synthetic reasoning tasks like addition, $p$-hop induction and GSM-style math word problems in \Cref{sec:synthetic_reasoning}. For these tasks, we surprisingly find that iso-flop looped models, despite having way fewer parameters, can nearly match or outperform a non-looped model. Supporting these experiments, in \Cref{sec:theory} we present theoretical results for why looped models can solve such problems with almost optimal depth.
    \item In \Cref{sec:language_modeling}, we train looped models on causal language modeling at 1B parameter scale. Here, we show that looped models have an inductive bias towards doing well on reasoning benchmarks, despite having much worse perplexity. This finding is novel, since most prior work on looping focused more on perplexity metrics rather than downstream reasoning tasks. We validate this inductive bias by visualizing the perplexity vs downstream performance plots as training process.
    \new{Additionally, we show that looped models demonstrate good scaling behavior on various benchmarks as the number of loops are increased, akin to CoT reasoning.  
    Finally, we show that looped models, along with a scratchpad, can simulate chain-of-thought reasoning.}
    \item Inspired by this inductive bias, in \Cref{sec:regularization}, we propose a regularization that encourages layers to be more similar to each other. We find that training with such a regularization inherits the inductive bias of looped models towards reasoning without affecting perplexity.
\end{itemize}
\vspace{-0.05in}

\section{Looped models on simple reasoning tasks}
\label{sec:synthetic_reasoning}

\vspace{-0.05in}
We first explore our hypothesis of looped models helping reasoning tasks on a set of  tasks constructed in a procedural manner. The illustrative reasoning tasks we consider are: $n$-ary addition, $p$-hop induction head that tests the model's ability to track back for $p$ steps, and i-GSM which consists of synthetically constructed grade-school math problems. While these obviously do not cover the whole spectrum of reasoning problems, they provide useful insights into looped models and provide a basis for the theoretical results in \Cref{sec:theory}.
% Finally, in \Cref{sec:apx_cot_connection}, we discuss an interesting connection to chain-of-thought reasoning.

\textbf{Looped models.} While many variants of looped model have been proposed \citep{lan2019albert,dehghani2018universal,giannou2023looped,yang2023looped,mohtashami2023cotformer}, we use the vanilla version for simplicity of our exploration. For any sequence-to-sequence function $f$, we denote $f^{(L)} = f \circ f \dots \circ f$ to be the function that is $f$ looped $L$ times. In general, the looping mechanism is independent of architectural choices for $f$.
For the rest of the paper, we typically use $f$ to denote a $k$-layer Transformer backbone of a model.
Thus, $f$ looped $L$ times is the same as a $kL$ layer model with weight sharing between all $L$ blocks of $k$ consecutive layers.
We denote such looped models with the notation \loopy{k}{L}.
Please refer to \Cref{fig:looping_illustration} for a succinct illustration of the looping mechanism.
\Cref{subsec:prelim} provides a more formal definition of looped transformers that is used for theoretical analysis.
% \vspace{-0.05in}

\subsection{Experiments with simple reasoning problems}
\label{sec:addition}

\textbf{$n$-ary addition.}
\label{sec:addition}
We consider the problem of adding $n$ numbers with 3 digits each. Addition is popular in the literature to study aspects of reasoning with Transformers, such as use of scratchpad \citep{nye2021show}, chain of thought reasoning \citep{lee2024teaching,li2024chain} and length generalization \citep{cho2024position}.
One reason for its popularity is that addition can have algorithmic solutions, which is a feature of many reasoning problems.
% Here, we use addition to study the role of looping by varying the number of operands in the input. 
For our experiments, we train on a uniform mixture on numbers of operands $n\in\{2, 4, 8, 16, 32\}$ and sample each 3-digit operand uniformly at random between $[0, 999]$. We train all models directly on the input-output pair, without any chain-of-thought steps. Following is an example for $n=4$:
\vspace{-0.05in}
\begin{center}
    Input: ``$315 + 120 + 045 + 824 = $''  ; Output = ``$1304$''.
\end{center}
\vspace{-0.05in}
We train a standard Transformer-based baseline \loopy{12}{1} model with 12 layers. Please refer to \Cref{sec:apx_addition} for details on the training setup. We also train \loopy{k}{12/k} looped model and an iso-param \loopy{k}{1} baseline models for comparison, and vary $k \in \{2, 3, 4, 6\}$. All trained models are finally evaluated separately on each of $n\in\{8, 16, 24, 32\}$ to measure accuracy on increasingly difficult problems. Results are presented in \Cref{table:addition_phop_results}.

\begin{table}[!tbp]
\centering
\caption{\looseness-1Accuracy of looped and non-looped models on the addition problem (left) and $p$-hop induction (right), as described in \Cref{sec:synthetic_reasoning}. \textbf{Left.} For addition, we report accuracies for different number of operands ($n$). For all budgets, a \loopy{k}{12/k} looped model is significantly better than the iso-param \loopy{k}{1} model and also nearly as good as the non-looped iso-flop \loopy{12}{1} baseline model. \textbf{Right.} The findings are very similar for the $p$-hop problem for different values of $p$. Note that a random guessing baseline would get at least 25\% accuracy (since only 4 choices for answer). This suggests that depth via looping and small number of parameters is very effective for these problems.}
\label{table:addition_phop_results}

\begin{minipage}{.45\linewidth}
\centering
\scalebox{0.77}{
\begin{tabular}{lc|cccc}
\toprule
\multicolumn{5}{c}{\textbf{Addition of $n$ numbers}}\\
\midrule
 & Params /  & $n=8$ & $n=16$ & $n=24$ & $n=32$\\
 & FLOPs   &  &  & & \\
\midrule
\midrule
Base \loopy{12}{1} & 12x / 12x & 100.0 & 100.0 & 100.0 & 100.0\\
\hline
\hline
\rowcolor{lightgray}
\multicolumn{3}{c}{1 layer model}  \\
\hline
Base \loopy{1}{1} & 1x / 1x & 0.1 & 0.1 & 0.1 & 0.0 \\
Loop \loopy{1}{12} & 1x / 12x & 99.9 & 100.0 & 99.9 & 99.6\\
\hline
\hline
\rowcolor{lightgray}
\multicolumn{3}{c}{2 layer model}  \\
\hline
Base \loopy{2}{1} & 2x / 2x & 85.8 & 71.5 & 49.3 & 38.8 \\
Loop \loopy{2}{6} & 2x / 12x & 100.0 & 99.8 & 99.7 & 99.5\\
\hline
\hline
\rowcolor{lightgray}
\multicolumn{3}{c}{3 layer model}  \\
\hline
Base \loopy{3}{1} & 3x / 3x & 97.2 & 78.5 & 69.2 & 60.7 \\
Loop \loopy{3}{4} & 3x / 12x & 100.0 & 99.1 & 97.0 & 96.6 \\
\hline
\end{tabular}
}
\end{minipage}
\hspace{1cm}
\begin{minipage}{.45\linewidth}
\centering
\scalebox{0.77}{
\begin{tabular}{lc|cc}
\toprule
\multicolumn{4}{c}{\textbf{$p$-hop with $n$ tokens}}\\
\midrule
 & Params / & $p=16$ & $p=32$ \\
  & FLOPs & $n=256$ & $n=256$ \\
\midrule
\midrule
Base \loopy{6}{1} & 6x / 6x & \textbf{99.9} & \textbf{99.6} \\
\hline
\hline
\rowcolor{lightgray}
\multicolumn{3}{c}{1 layer model}  \\
\hline
Base \loopy{1}{1} & 1x / 1x &  48.9 & 49.0\\
Loop \loopy{1}{6} & 1x / 6x & \textbf{99.9} & \textbf{99.5}\\
\hline
\hline
\rowcolor{lightgray}
\multicolumn{3}{c}{2 layer model}  \\
\hline
Base \loopy{2}{1} & 2x / 2x & 68.8 & 59.4\\
Loop \loopy{2}{3} & 2x / 6x & \textbf{99.9} & \textbf{99.8}\\
\hline
\hline
\rowcolor{lightgray}
\multicolumn{3}{c}{3 layer model}  \\
\hline
Base \loopy{3}{1} & 3x / 3x & 97.2 & 73.0\\
Loop \loopy{3}{2} & 3x / 6x & \textbf{99.9} & \textbf{99.5}\\
\hline
\end{tabular}
}
\end{minipage}

\end{table}

We find that, while the shallower baselines \loopy{k}{1} degrade with lower $k$, the looped model \loopy{k}{12/k} performs very well, and nearly matches the iso-flop \loopy{12}{1} baseline. In fact, even a 1-layer network looped 12 times is able to solve this, despite using merely $1/12^{th}$ of the parameters of the baseline. This suggests the addition problem primarily requires depth, but not necessarily more parameters.

\vspace{-0.05in}
\paragraph{$p$-hop induction.}
\label{sec:khop_desc}

% \subsubsection{$p$-hop}
\looseness-1The $p$-hop problem is a synthetic induction task studied in \cite{sanford2024transformers}, who were inspired by the analysis of induction heads from \cite{elhage2021mathematical}. Specifically, given a sequence of letters $\vv = (v_1 \ldots v_n)$ from an alphabet $\Sigma$, an induction head tries to find the penultimate occurrence of $v_n$ (from left to right) in the sequence and output the character immediately succeeding it. The $p$-hop problem generalizes this idea to sequentially hop $p$ times. 
Intuitively, the $p$-hop problem tests a model's ability to recursively backtrack and retrieve the answer for a given query. This is reminiscent of the reasoning abilities required to solve reading comprehension kind of problems.
We present the formal definition of the $p$-hop problem in \Cref{defi:khop}.
% \vspace{-0.05in}
We perform experiments with looped and non-looped models on the $p$-hop problem, with alphabet size set to 4 and sequences of length $256$, and vary $p$ between 16 and 32 to control the problem difficulty. 
% Note that there is no chain-of-thought breakdown of the $p$-hops provided to the model.
Our observations are presented in Table~\ref{table:addition_phop_results}. Similarly to our findings on the addition task, reasonably deep looped models perform as well as the baseline using much fewer parameters.
% \input{iclr2025/Table_Fig/khop_results}

\iffalse
\begin{table}[]
    \centering
    \begin{tabular}{|c|c|}
    \hline
        \textbf{Setting} & \textbf{Accuracy (\%)} \\
        \hline
        2 layer & 58.14 \\
        \hline
        10 layer & 99.69 \\
        \hline
        Looping with 2 layer, 5 loops & \textbf{99.94}\\
        \hline
    \end{tabular}
    \caption{Comparison of looped vs non-looped models for the $p$-hop problem.}
    \label{table:khop-results-old}
\end{table}
\fi

\paragraph{i-GSM (Synthetic Grade School Math Problems).} Inspired by ~\cite{ye2024physics}, we built our own version of grade-school level math word problems. While we follow many of the design guidelines of \cite{ye2024physics}, we make a few simplifying changes. 
% We first pick a hierarchy of entities with 4 levels from which each example randomly samples a structure graph over entities.
% Each edge in the structure graph defines a instance parameter which is an integer (e.g. an edge between city center and car parks denotes the number of car parks in city center).
% Then a randomly sampled mathematical dependency graph is constructed over the instance parameters by relating each to the others. 
% We pick one of the nodes of the dependency graph to query and the goal is to compute the numerical value of the query node modulo some prime number $P$ (see \Cref{sec:apx_igsm} for details).
We generate the math problem as a DAG of arithmetic computations modulo 7, and restrict the depth of the graph to 4. For simplicity, we retain the problems in the symbolic form and do not map them to English (e.g., see Table~\ref{fig:sample-eigsm-problem}) \footnote{Similar to \cite{ye2024physics}, the simplified setting still allows for over 7 billion unique solution templates.} We train models of depths 1, 2, 4 and 8 and compare them with different looped variants in Table~\ref{table:igsm-results}. The answer is computed modulo 7. Hence, a random guessing baseline would get at least 14\% accuracy.  \Cref{sec:apx_igsm} has more details. Remarkably, we again observe that a depth $k$ model looped $L$ times matches or outperforms a depth $kL$ model and far outperforms a depth $k$ non-looped model. 
This suggests that even a more complex and realistic looking reasoning problem does not need too many parameters and can benefit from depth via looping.

% \begin{table}[!tbp]
% \centering
% \caption{\looseness-1Accuracy of looped and non-looped models on the i-GSM problem from \Cref{sec:synthetic_reasoning}. Note that a random guessing baseline would get at least 14\% accuracy (since the answer is modulo 7). As we see, a \loopy{k}{L} looped model is significantly better than the iso-param \loopy{k}{1} model and also nearly as good as the non-looped iso-flop \loopy{kL}{1} model.}
% \scalebox{0.8}{
% \begin{tabular}{lc|c}
% \toprule
%  & Params / FLOPs & Accuracy \\
% \midrule
% \midrule
% Base \loopy{8}{1} & 8x / 8x & 73.2 \\
% \hline
% \hline
% \rowcolor{lightgray}
% \multicolumn{3}{c}{1 layer model}  \\
% \hline
% Base \loopy{1}{1} & 1x / 1x &  24.5 \\
% Loop \loopy{1}{2} & 1x / 2x & 52.3\\
% Loop \loopy{1}{4} & 1x / 4x & 69.9\\
% Loop \loopy{1}{8} & 1x / 8x & 73.2\\
% \hline
% \hline
% \rowcolor{lightgray}
% \multicolumn{3}{c}{2 layer model}  \\
% \hline
% Base \loopy{2}{1} & 2x / 2x & 54.0 \\
% Loop \loopy{2}{2} & 2x / 4x & 66.9 \\
% Loop \loopy{2}{4} & 2x / 8x & 73.6 \\
% \hline
% \hline
% \rowcolor{lightgray}
% \multicolumn{3}{c}{4 layer model}  \\
% \hline
% Base \loopy{4}{1} & 4x / 4x & 71.3 \\
% Loop \loopy{4}{2} & 4x / 8x & 71.6 \\
% \hline
% \end{tabular}
% }
% \label{table:igsm-results}
% \end{table}

\begin{table}[t]
    \centering
    \caption{\textbf{Left.} Symbolic i-GSM problem and its solution. \textbf{Right.} Accuracy of looped and non-looped models on the i-GSM task from \Cref{sec:synthetic_reasoning}.  \loopy{k}{L} looped model is significantly better than the iso-param \loopy{k}{1} model and performs as well as non-looped iso-flop \loopy{kL}{1} model.}
    \scalebox{0.8}{
    \begin{minipage}{0.4\textwidth}
        \centering
        \small
        \parbox{\linewidth}{
        \paragraph{Question.} \textit{E\#I := 4. E\#J := E\#I. K\#N := I\#N + J\#O + F\#K. F\#K := E\#J. J\#O := F\#K + K\#O + E\#J. H\#J := E\#J + F\#K. I\#P := L\#M + I\#N + K\#O. I\#M := J\#O + J\#P + F\#K. J\#P := H\#J - F\#K. L\#M := I\#N + J\#P + F\#K. I\#N := 2 * J\#P + H\#J + E\#I. K\#O := J\#P + I\#N + E\#J. I\#P?}\\
        \paragraph{Answer with CoT.} \textit{E\#I = 4. $\implies$ E\#I = 4. E\#J = E\#I. $\implies$ E\#J = 4. F\#K = E\#J. $\implies$ F\#K = 4. H\#J = E\#J+F\#K. $\implies$ H\#J = 1. J\#P = H\#J-F\#K. $\implies$ J\#P = 4. I\#N = 2J\#P+2H\#J+2E\#I. $\implies$ I\#N = 4. L\#M = I\#N+J\#P+F\#K. $\implies$ L\#M = 5. K\#O = J\#P+I\#N+E\#J. $\implies$ K\#O = 5. I\#P = L\#M+I\#N+K\#O. $\implies$ I\#P = 0.}
        }
        % \normalsize
        % \caption{Symbolic i-GSM problem and its solution.}
        \label{fig:sample-eigsm-problem}
    \end{minipage}%
    }
    \hspace{0.5cm}
    \begin{minipage}{0.55\textwidth}
        \centering
        \scalebox{0.8}{
\begin{tabular}{lc|c}
\toprule
 & Params / FLOPs & Accuracy \\
\midrule
\midrule
Base \loopy{8}{1} & 8x / 8x & \textbf{73.2} \\
\hline
\hline
\rowcolor{lightgray}
\multicolumn{3}{c}{1 layer model}  \\
\hline
Base \loopy{1}{1} & 1x / 1x &  24.5 \\
Loop \loopy{1}{2} & 1x / 2x & 52.3\\
Loop \loopy{1}{4} & 1x / 4x & 69.9\\
Loop \loopy{1}{8} & 1x / 8x & \textbf{73.2}\\
\hline
\hline
\rowcolor{lightgray}
\multicolumn{3}{c}{2 layer model}  \\
\hline
Base \loopy{2}{1} & 2x / 2x & 54.0 \\
Loop \loopy{2}{2} & 2x / 4x & 66.9 \\
Loop \loopy{2}{4} & 2x / 8x & \textbf{73.6} \\
\hline
\hline
\rowcolor{lightgray}
\multicolumn{3}{c}{4 layer model}  \\
\hline
Base \loopy{4}{1} & 4x / 4x & 71.3 \\
Loop \loopy{4}{2} & 4x / 8x & \textbf{71.6} \\
\hline
\end{tabular}
}
% \caption{\looseness-1Accuracy of looped and non-looped models on the i-GSM task from \Cref{sec:synthetic_reasoning}. We consistently see that a \loopy{k}{L} looped model is significantly better than the iso-param \loopy{k}{1} model and also nearly as good as the non-looped iso-flop \loopy{kL}{1} model.}
\label{table:igsm-results}
\end{minipage}
\end{table}

\section{Language modeling with looped models}
\vspace{-0.01in}
\label{sec:language_modeling}

In this section, we pretrain and evaluate looped models for causal language models.
We train models on 250B tokens of the Pile dataset \citep{gao2020pile} and use a 24-layer 1B parameter model for most experiments, motivated by the setting in \citet{tay2022ul2} (refer to \Cref{sec:apx_language_modeling} for more details).
\vspace{-0.1in}

\subsection{Experiments with 1B language modeling}
\label{sec:expt_1B}

For causal language modeling, we pretrain various looped models on the standard GPT2-style next token prediction objective \citep{radford2019language}.
We train models with different parameter budgets to make sure that the findings are robust.
We remind the reader that the notation \loopy{k}{L} corresponds to a $k$ layer model looped $L$ times.
For each setting, we compare 3 models: \textbf{(a)} \loopy{24}{1}: 24-layer 1B model, \textbf{(b)} \loopy{k}{1}: $k$-layer model with the same configuration as the 24-layer model for other dimensions, \textbf{(c)} \loopy{k}{24/k}: $k$-layer model looped $24/k$ times to match the parameter count of (b) and match the effective depth/FLOPs of (a).
We run experiments for $k\in\{4, 6, 8, 12\}$ to ensure that the findings are robust.
After pretraining on Pile, we evaluate the models on validation perplexity and on downstream benchmarks using  $k$-shot evaluations. Results are summarized in \Cref{table:language_modeling_results}

\begin{table}[!tbp]
\centering
\caption{\looseness-1Downstream evaluations for language models trained on the Pile dataset. Comparisons include a 24-layer 1B-parameter baseline model, iso-flop looped models \loopy{k}{24/k} for various parameter budgets $k$, and the corresponding iso-param baselines \loopy{k}{1}. Downstream evaluations are averaged over tasks within 4 task groups. We also include the \% Gap metric for each $k$ to measure the gap between the iso-param and iso-flop baselines that is covered by the looped model (see \Cref{eq:gap_pct}). Overall the looped models are worse on perplexity and closed book QA (memorization benchmarks), but the \% Gap is much higher for task groups that require reasoning (open book QA, math word problems). In fact for reasoning primitives, which are purely testing for reasoning skills, the looped models are much better than the 1B baseline for all $k$, despite having $24/k \times$ fewer parameters.}
\label{table:language_modeling_results}
\scalebox{0.7}{
% \begin{tabular}{lll|r|rrr|r}
\begin{tabular}{lcc|cccc|c}
\toprule
 & Params / & Perplexity ($\downarrow$) & Closed & Open   & Math Word & All Tasks & Reasoning  \\
  & FLOPs & \tiny{(validation)} & Book QA ($\uparrow$) & Book QA ($\uparrow$)  &  Problems ($\uparrow$) & Average ($\uparrow$) & Primitives ($\uparrow$) \\
  & &  & \tiny{(4 tasks)} & \tiny{(5 tasks)}  &  \tiny{(6 tasks)} & \tiny{(15 tasks)}  & \tiny{(4 tasks)}  \\
\midrule
\hline
\rowcolor{lightgray}
\multicolumn{3}{c}{24 layers}  \\
\hline
Baseline & 24x / 24x & 7.40 & 11.2 & 33.9  & 29.3 & 26.0 & 47.5 \\
\hline
\hline
\rowcolor{lightgray}
\multicolumn{3}{c}{12 layers}  \\
\hline
Base \loopy{12}{1} & 12x / 12x & 8.16 & 8.2 & 26.9  & 26.7 & 21.8 & 35.7 \\
Loop \loopy{12}{2} & 12x / 24x & 7.90 & 9.3 & 30.8  & 34.3 & 26.5 & 51.2\\
% \cline{1-3}
\hline
\% Gap &  & \ApplyGradient{34}\% & \ApplyGradient{37}\% & \ApplyGradient{56}\%  & \ApplyGradient{282}\% & \ApplyGradient{110}\% & \ApplyGradient{131}\%\\
\hline
% \new{
Middle Loop & 12x / 24x & 7.81 & 11.0 & 32.3  & 28.3 & 25.0 & 56.5\\
\loopy{4}{1,4,1} & &  &  &   &  &  & \\

% \cline{1-3}
\hline
\% Gap &  & \ApplyGradient{46}\% & \ApplyGradient{94}\% & \ApplyGradient{78}\%  & \ApplyGradient{62}\% & \ApplyGradient{95}\% & \ApplyGradient{176}\%\\
%%
%%
% \% Gap &  & 34\% & 37\% & 56\%  & 282\% & 110\% & 131\%\\
% }
\hline
\hline
\rowcolor{lightgray}
\multicolumn{3}{c}{8 layers}  \\
\hline
Base \loopy{8}{1} & 8x / 8x & 8.75 & 6.3 & 22.7 & 17.1 & 16.1 & 33.0 \\
Loop \loopy{8}{3} & 8x / 24x & 8.19 & 8.5 & 30.8 & 28.4 & 23.9 & 55.3 \\
\% Gap &  & \ApplyGradient{41}\% & \ApplyGradient{44}\% & \ApplyGradient{72}\% & \ApplyGradient{92}\% & \ApplyGradient{78}\% & \ApplyGradient{153}\% \\
% \% Gap &  & 41\% & 44\% & 72\% & 92\% & 78\% & 153\% \\
\hline
\hline
\rowcolor{lightgray}
\multicolumn{3}{c}{6 layers}  \\
\hline
Base \loopy{6}{1} & 6x / 6x & 9.25 & 4.0 & 19.3 & 17.7 & 14.6 & 24.1 \\
Loop \loopy{6}{4} & 6x / 24x & 8.42 & 8.2 & 28.7 & 29.8 & 23.7 & 56.1 \\
\% Gap &  & \ApplyGradient{44}\% & \ApplyGradient{58}\% & \ApplyGradient{64}\% & \ApplyGradient{104}\% & \ApplyGradient{80}\% & \ApplyGradient{136}\% \\
% \% Gap &  & 44\% & 58\% & 64\% & 104\% & 80\% & 136\% \\
\hline
\hline
\rowcolor{lightgray}
\multicolumn{3}{c}{4 layers}  \\
\hline
Base \loopy{4}{1} & 4x / 4x & 10.12 & 1.8 & 13.8 & 9.7 & 9.0 & 19.4 \\
Loop \loopy{4}{6} & 4x / 24x & 8.79 & 6.7 & 26.2 & 24.8 & 20.4 & 56.9 \\
\% Gap &  & \ApplyGradient{48}\% & \ApplyGradient{52}\% & \ApplyGradient{61}\% & \ApplyGradient{77}\% & \ApplyGradient{67}\% & \ApplyGradient{133}\% \\
% \% Gap &  & 48\% & 52\% & 61\% & 77\% & 67\% & 133\% \\
\hline
\end{tabular}
}
\end{table}

\looseness-1\textbf{Evaluation metrics.}
We evaluate the models on perplexity metric after training is completed. Since there is growing evidence that perplexity, although very useful for training, is a narrow measure of model quality, we also track more holistic downstream evaluations~\citep{liang2023holistic}.
Thus, we evaluate the model on 4 important slices: closed book QA, open book QA, math word problems and reasoning primitives. These comprise of {\bf 19 different tasks} in total.

\begin{itemize}[leftmargin=0.6cm]
    \item {\bf Closed book QA}: This includes tasks like TriviaQA \citep{joshi2017triviaqa}, TydiQA-NoContext \citep{clark2020tydi}, Natural Questions \citep{kwiatkowski2019natural} and Web Questions \citep{talmor2018web} that test the model's ability to answer questions without any context, and thus, primarily measure the memorization abilities of language models.
    
    \item {\bf Open book QA}: This includes tasks like TydiQA-GoldP \citep{clark2020tydi}, SquadV2 \citep{rajpurkar2018know}, Drop \citep{dua2019drop}, QuAC \citep{choi2018quac}, CoQA \citep{reddy2019coqa} that evaluate the model's ability to infer the answer to a question from the extra context that is provided, akin to reading comprehension.
    
    \item {\bf Math word problems}: To evaluate the model's ability to reason, we test them on math word problems considered in \citep{wei2022chain}. This includes tasks like SVAMP \citep{patel2021nlp}, ASDiv \citep{miao2020diverse}, the MAWPS benchmark \citep{koncel2016mawps}.
    % and GSM8K \citep{cobbe2021training}.
    We report 5-shot evaluation for the pretrained model on these tasks.
    
    \item {\bf Reasoning primitives}: \citet{saunshi2024inductive} introduced these datasets to study the inductive bias of stacking towards improving reasoning, by isolating simple reasoning abilities. One such primitive is depth-$k$ variable assignment that requires the model to resolve a chain of assignments of length $k$. 
    % This is very related to the $k$-hop induction problem from \Cref{sec:khop_desc}. 
    An example of depth-0 var-assign is \textit{a=1, b=2, c=6, b=?}, and example for depth-1 var-assign is \textit{a=1, b=2, c=a, d=b, d=?}. We evaluate on the math and coding variants of the depth-0 and depth-1 problems using 5-shot evaluation.
        
\end{itemize}

For each task group $G$ from above, in \Cref{table:language_modeling_results} we report the average accuracy for that task group, denoted by $\text{Avg}_{G}$. Furthermore, for each layer budget $k$, we report the {\bf \% gap} between the iso-param and iso-flop models that is covered by the looped model. More specifically
% \vspace{-0.1in}
\begin{align}
    \text{\% Gap} = \frac{\text{Avg}_{G}\text{\loopy{k}{24/k}} - \text{Avg}_{G}\text{\loopy{k}{1}}}{\text{Avg}_{G}\text{\loopy{24}{1}} - \text{Avg}_{G}\text{\loopy{k}{1}}}\label{eq:gap_pct}.
\end{align}
% \vspace{-0.1in}
\looseness-1This measures how effectively looping can bridge the gap between iso-param and iso-flops baselines. Implicitly, it measures how different task groups behave when a model only has a few parameters but is given depth through looping.
\text{\% Gap} being closer to 0\% means that providing depth via looping does not benefit the task group, and number of parameters is the most important factor. On the other hand, \text{\% Gap} closer to 100\% means parameter count matters much less for the task group, and that depth via looping is more essential.

\textbf{Perplexity results.} Firstly we notice that all \loopy{k}{24/k} looped models have better perplexity compared to the iso-param \loopy{k}{1} baseline, but worse perplexity compared to the non-looped 24-layer baseline.
The looped models only covers up roughly $34-50\%$ of the perplexity gap between the iso-param and iso-flop baselines for various values of $k$.
This perplexity gap is not too surprising since the looped model has $24/k$ times fewer parameters, and thus, lower capacity than the 24-layer baseline.
This was also been observed in prior works \citep{lan2019albert,mohtashami2023cotformer} and is the primary reason looped models have been ignored for language modeling.
However, as we shall see shortly, the downstream metrics paint a more interesting and favorable picture.

\vspace{-0.1in}

\textbf{Results on QA tasks.} We first consider closed book and open book QA categories in \Cref{table:language_modeling_results}. Closed book QA tasks are primarily testing the model's memorization abilities. Open book QA on the other hand tests the model's ability to infer the answer from the additional context that is provided. Thus, intuitively, open book QA tasks require more reasoning. Firstly we notice that the \% Gap for closed book QA (memorization) is very similar to \% Gap for perplexity. The \% Gap for open book QA, on the other hand, is much higher for all parameter budgets. This suggests that looped models relatively improve contextual QA much more than memorization based QA.

\begin{figure*}[!tbp]% [H] is so declass\'e!
\centering
    \begin{subfigure}{0.23\textwidth}
    \centering    \includegraphics[width=\textwidth]{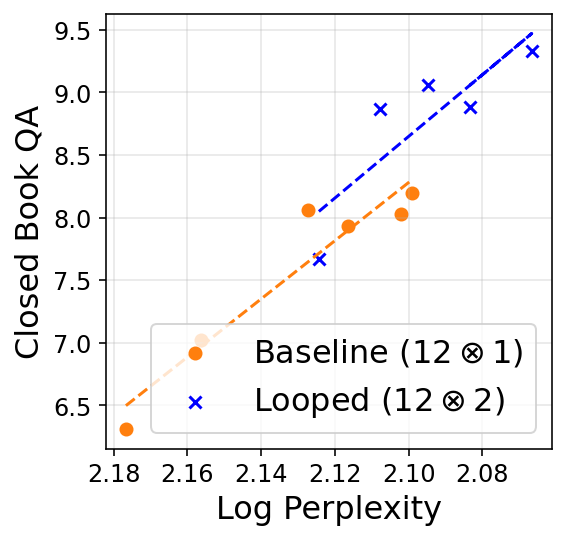}
    % \caption{}
    \label{fig:12L_closedQA}
    \end{subfigure}\hfill
    \centering
    \begin{subfigure}{0.23\textwidth}
\centering    
    \includegraphics[width=\textwidth]{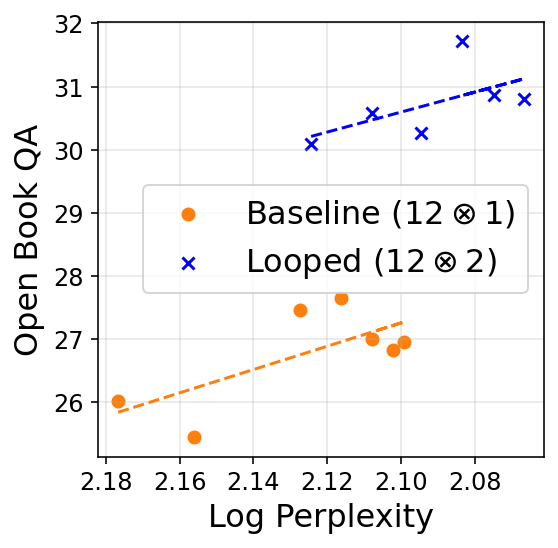}
    % \caption{}
    \label{fig:12L_openQA}
    \end{subfigure}\hfill
\centering
    \begin{subfigure}{0.23\textwidth}
    \centering    \includegraphics[width=\textwidth]{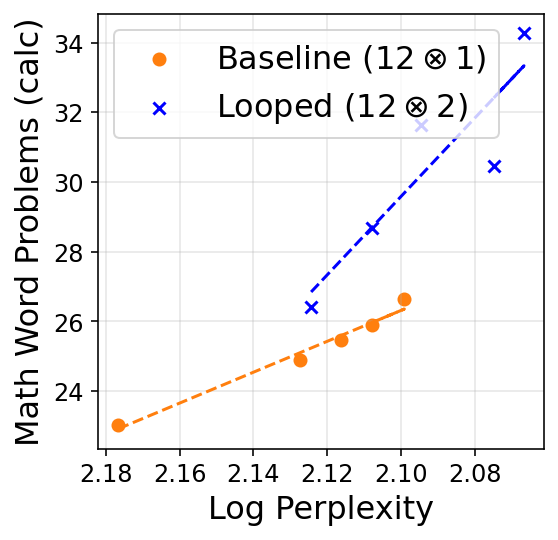}
    % \caption{}
    \label{fig:12L_mwp}
    \end{subfigure}\hfill
\centering
    \begin{subfigure}{0.23\textwidth}
    \centering    \includegraphics[width=\textwidth]{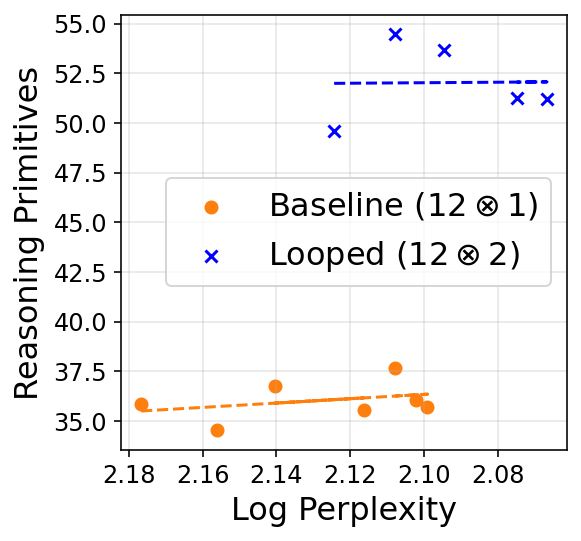}
    % \caption{}
    \label{fig:12L_primitives}
    \end{subfigure}\hfill
    
    \begin{subfigure}{0.24\textwidth}
    \centering    \includegraphics[width=\textwidth]{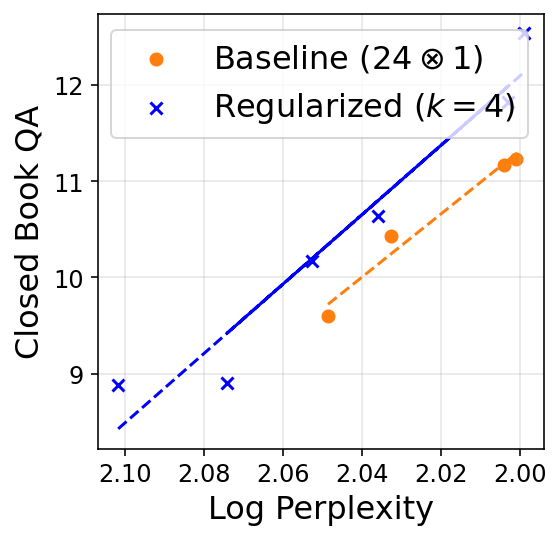}
    % \caption{}
    \label{fig:24L_closedQA}
    \end{subfigure}\hfill
    \centering
    \begin{subfigure}{0.23\textwidth}
\centering    
    \includegraphics[width=\textwidth]{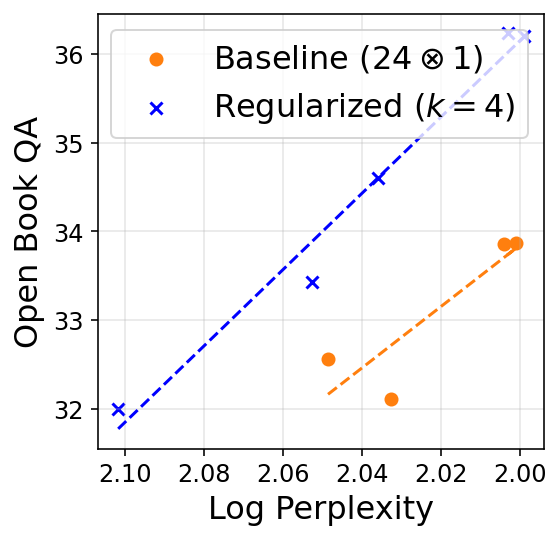}
    % \caption{}
    \label{fig:24L_openQA}
    \end{subfigure}\hfill
\centering
    \begin{subfigure}{0.23\textwidth}
    \centering    \includegraphics[width=\textwidth]{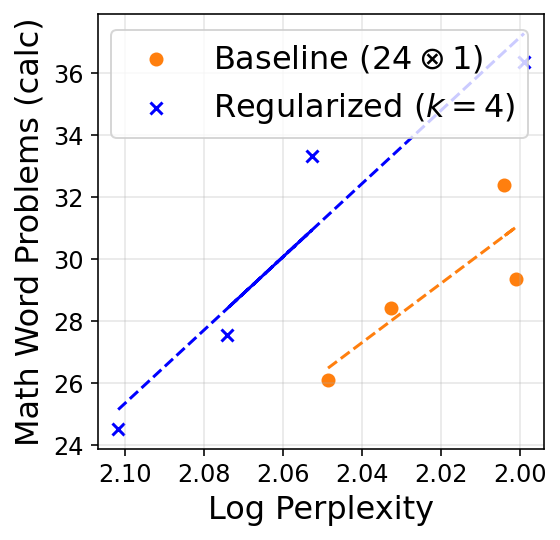}
    % \caption{}
    \label{fig:24L_mwp}
    \end{subfigure}\hfill
\centering
    \begin{subfigure}{0.23\textwidth}
    \centering    \includegraphics[width=\textwidth]{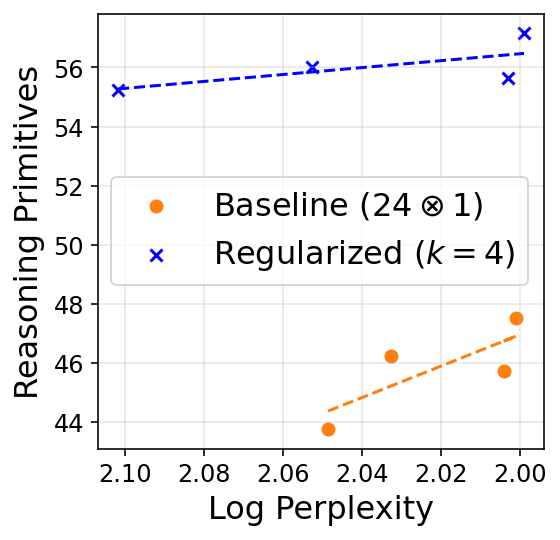}
    % \caption{}
    \label{fig:24L_primitives}
    \end{subfigure}
    \vspace{-0.1in}
    \caption{\looseness-1Downstream evaluation for various task groups on the x-axis, vs validation log perplexity on the y-axis (reversed), as training proceeds. The top plots compare a 12-layer baseline model \loopy{12}{1} and the looped model \loopy{12}{2}.
    The second row compares the baseline 24-layer model and the 24-layer model trained with regularization using block size $k=4$ and $\lreg=10$ (See \Cref{eq:regularization}). For both comparisons we have similar observations. For closed book QA (memorization) tasks looping has very similar trends to baseline. For open book QA tasks and math word problems, looping has much better downstream performance at an equivalent log perplexity. This verifies the inductive bias of looping and regularization towards better reasoning abilities.}
    \label{fig:iso_plots_taskgroups_12L}
\end{figure*}

\textbf{Math problems and reasoning primitves.} We also present the \% Gap for the math word problem in \Cref{table:language_modeling_results}. Surprisingly, we find that \loopy{k}{24/k} looped model can almost match the baseline \loopy{24}{1} model for $k\ge6$, despite having $k$ times fewer parameters. In fact, the 12 layer model looped twice is even signficantly better ($34.3$) than the 24 layer baseline ($29.3$), despite having 50\% of parameters and worse perplexity; suggesting that looping disproportionately improves mathematical reasoning. 

To better understand the effect on reasoning, we direct the readers attention to the evaluations for reasoning primitives in \Cref{table:language_modeling_results}. The results are quite remarkable: {\bf \loopy{k}{24/k} looped models are better than the iso-flop baseline \loopy{24}{1} at reasoning primitives, for all values of $k$}.
This is a priori very surprising, since these are synthetic generated tasks and have nothing to do with the pretraining data or the model architecture.
Thus, solving these tasks necessarily requires reasoning from context, and memorization abilities will not help here.
These results clearly suggest that looped models have a bias towards improving reasoning, despite having worse perplexity and memorization abilities. Next, we formalize the {\em inductive bias} towards reasoning via isoplots.
\vspace{-0.1in}

\subsection{Inductive bias towards reasoning}
\label{sec:inductive_bias}
% \vspace{-0.1in}

In this section, we formalize the inductive bias by plotting the perplexity vs downstream metric iso-plots, as introduced in \citet{saunshi22understanding}.
\Cref{sec:expt_1B} showed that looped models have {\em higher than expected} performance on reasoning problems.
However, since looped models are worse on perplexity, it is hard to make a direct comparison between various models.
One way to bring parity between models is to look at their downstream performances at the same validation pretraining loss \citep{liu2023same}.
\citet{saunshi2024inductive} proposed plotting pretraining loss vs downstream metrics as training proceeds, as a way to study the inductive bias of various methods.
% This can highlight differences in the model behavior at the same pretraining performance.
% In particular, we are interested in how looped models are similar or dissimilar to non-looped models on different task groups.
For each model, we evaluate the log perplexity and downstream metrics at every 20k steps, starting from 120k steps. We plot these values in a scatter plot and fit a linear function with log perplexity and the corresponding downstream metric being input and output respectively.
Please refer to \Cref{fig:iso_plots_taskgroups_12L,fig:iso_plots_taskgroups_8L} for two sets of isoplots.

{\bf Findings.} For all values of $k$, we observe the following:
\begin{itemize}[leftmargin=0.6cm]
    \item The isoplots for \loopy{k}{L} looped model and \loopy{k}{1} baseline are very aligned for closed book QA tasks (if extrapolated). This suggests that log perplexity is a very strong indicator of downstream performance on memorization based tasks.
    \item For open book QA and math word problems, the isoplot line for the looped model is always higher than the baseline model. This suggests that at the same log perplexity, looped models will tend to have higher evaluation on these tasks that require more reasoning.
    \item For reasoning primitives, there is a stark difference between looped and baseline models. The looped model seems to have good performance at most points in training.
\end{itemize}

Overall this suggests a strong inductive bias of looped models towards improving reasoning. 
% \vspace{-0.1in}

\new{

\subsection{Middle looping variant and relationship with Gradual stacking}
\label{sec:stacking_middle}

\looseness-1Recently \citet{saunshi2024inductive} introduced a gradual stacking \citep{gong2019efficient,reddi2023efficient} approach for training language models called MidAS. This approach gradually grows the model depth as training proceeds by duplicating certain layers of the model in each stacking operation. Surprisingly, they found that MidAS not only speeds up pretraining, but also improves reasoning in the same sense as \Cref{fig:iso_plots_taskgroups_12L} -- better reasoning at the same perplexity.
Furthermore, the paper established a strong connection between stacking via MidAS and looped models, owing to the layer duplication operation, and conjectured that this is the reason for such an inductive bias.
Our results from the previous section provides a compelling evidence for this conjecture by showing that looped models also show a very similar inductive bias, thus, further strengthening the connection between stacking and looped models.
Why such an inductive bias occurs is still an open question, and we believe that understanding this is an important future direction.

Furthermore, inspired by their findings, we explore {\bf middle looping} (see \Cref{fig:looping_illustration} for an illustration) --- a variant of looping which maintains independent layers at the start and the end of the network, and perform looping on the middle block of layers. 
The high-level intuition from \citet{saunshi2024inductive} is that the first and last layers play a special role in the model and thus, should be treated differently from the middle layers.
In \Cref{table:language_modeling_results}, we report results for a version of middle looping that is iso-param with a \loopy{12}{1} baseline and iso-flop with a \loopy{24}{1} baseline, just like the \loopy{12}{2} model.
Overall, we find that middle looping has better perplexity and more uniform improvements than the default looping of \loopy{12}{2} (except for math word problems), and thus, might be a more practical looping approach.
We leave the exploration of the best looping strategies for future work.

% \ns{Present basic results}

\subsection{Scaling behavior of looping}
\label{sec:depth_scaling}

\begin{figure*}[!tbp]% [H] is so declass\'e!
\centering
    \begin{subfigure}{0.33\textwidth}
    \centering    \includegraphics[width=0.85\textwidth]{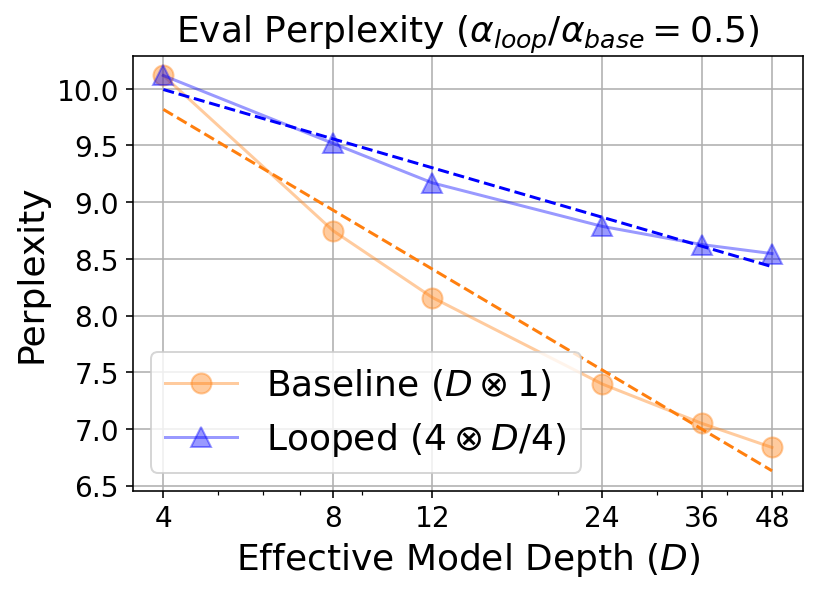}
    % \caption{}
    \label{fig:depth_scaling_eval_perplexity}
    \end{subfigure}\hfill
    \centering
    \begin{subfigure}{0.33\textwidth}
    \centering    \includegraphics[width=0.85\textwidth]{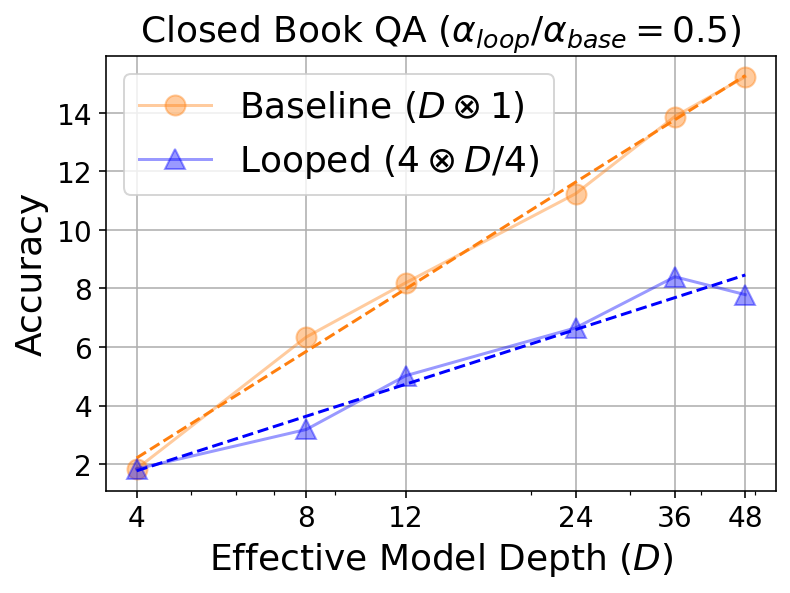}
    % \caption{}
    \label{fig:depth_scaling_closedQA}
    \end{subfigure}\hfill
    \centering
    \begin{subfigure}{0.33\textwidth}
\centering    
    \includegraphics[width=0.85\textwidth]{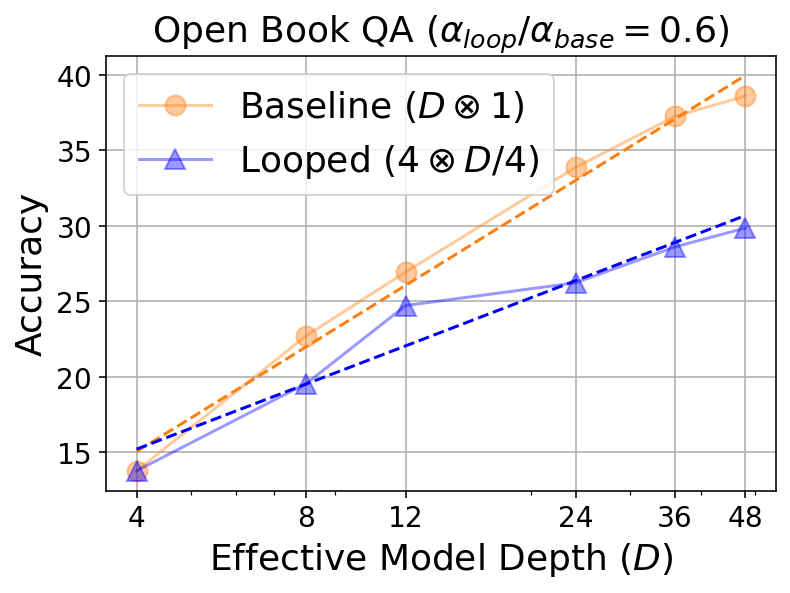}
    % \caption{}
    \label{fig:depth_scaling_openQA}
    \end{subfigure}\hfill
\centering
    \begin{subfigure}{0.33\textwidth}
    \centering    \includegraphics[width=0.85\textwidth]{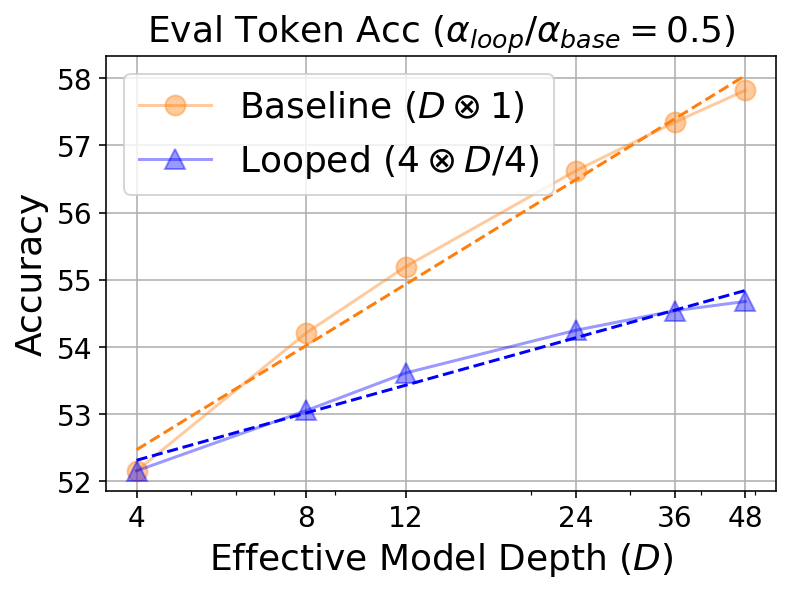}
    % \caption{}
    \label{fig:depth_scaling_eval_token_acc}
    \end{subfigure}\hfill
\centering
    \begin{subfigure}{0.33\textwidth}
    \centering    \includegraphics[width=0.9\textwidth]{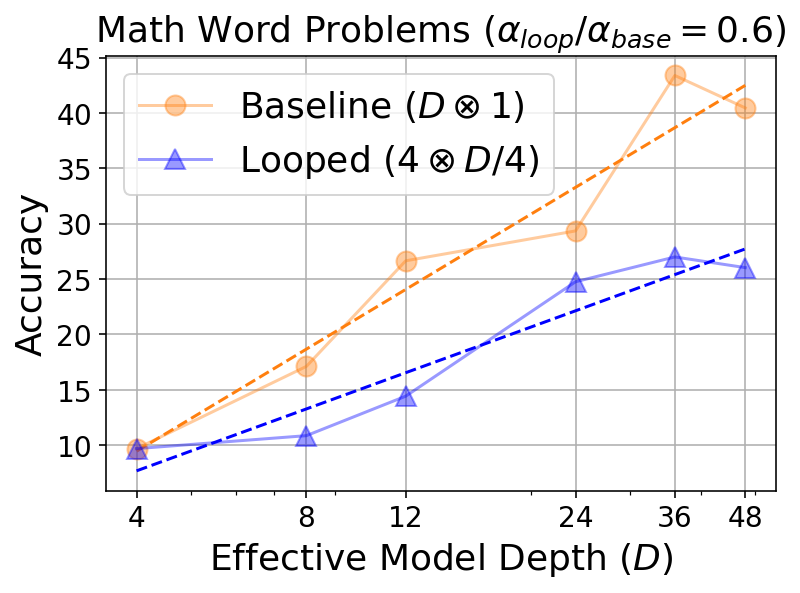}
    % \caption{}
    \label{fig:depth_scaling_mwp}
    \end{subfigure}\hfill
\centering
    \begin{subfigure}{0.33\textwidth}
    \centering    \includegraphics[width=0.9\textwidth]{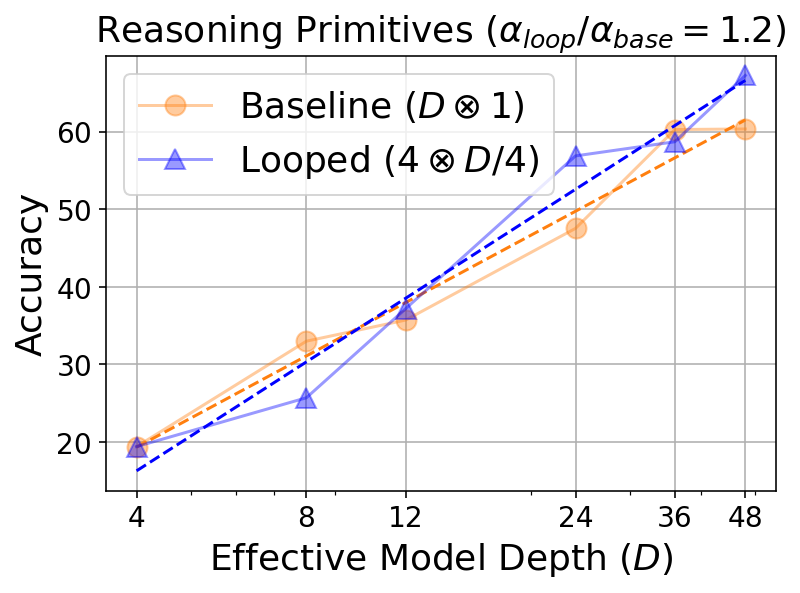}
    % \caption{}
    \label{fig:depth_scaling_primitives}
    \end{subfigure}
    \vspace{-0.1in}
    \caption{\looseness-1\new{Scaling behavior for various task group as the effective depth increases. The blue curve shows how performance scales as the number of loops increases, without increasing parameters, using models of the form \loopy{4}{D/4} for various values of $D$. The orange curve visualizes the scaling behavior of \loopy{D}{1} which increases the depth by adding fresh parameters. For reasoning primitives, the looped model scales as well, or even better, than the baseline despite having $D/4$ fewer parameters.}}
    \label{fig:depth_scaling}
\end{figure*}

In this section, we discuss an intriguing scaling behavior of looping, specifically the impact of the number of loops on various evaluation metrics. In particular, we are interested in scaling behavior of: (a) accuracy as a function of number of loops and (b) comparison of looped and non-looped baseline of the same effective depth.
To this end, we pretrain various looped language models of the form \loopy{4}{L}, i.e., 4-layer model looped $L$ times, for $L \in \{1, 2, 3, 6, 9, 12\}$.
To enable iso-FLOPs comparison, we also train baseline models of the form \loopy{4L}{1}, which has $L$ times more parameters.
For each task group, we plot the average accuracy as a function of the \emph{effective depth}, i.e. $D = 4L$.
From the results presented in \Cref{fig:depth_scaling}, we observe the following.
% For both the looped and baseline models, we find that the accuracy can be well approximated by a linear function of the log of the depth.
\begin{enumerate}[leftmargin=.15in, itemsep=1ex]
    \item In both cases, we find that the accuracies for all task groups continue to increase with more loops/depth, although, unsurprisingly, the returns are diminishing with depth for both looped and non-looped models. Interestingly, for both looped and non-looped models, we found that one can fit a simple scaling law of the following form:
\begin{align}
    \text{Acc} = \alpha \log(D) + \beta,
    \label{eq:depth_scaling_law}
\end{align}
where $D$ is the effective depth and $\alpha$ measures the impact of depth on downstream performance.
\item Furthermore, for each task group, we compute $\alpha_{loop}/\alpha_{base}$ to assess the relative impact of ``depth via looping'' compared to ``depth via additional parameters''.
We find that more loops continue to help, and the relative benefit of loops is higher for reasoning tasks like open book QA and math problems.
Remarkably, the impact of loops is even higher (1.19x) than impact of depth for reasoning primitives, which further consolidates the benefit of looped models for reasoning.

\end{enumerate}

% A natural question to ask is -- why do looped models show a nice linear scaling?

\paragraph{Latent thoughts and connections to CoT reasoning.}
\label{sec:latent_thoughts}
We end this discussion with an important question: \emph{why should we expect this interesting scaling behavior of looped models?} We argue that looped models have a strong connection to chain-of-thought (CoT) reasoning. Such a connection is insightful because recent works on thinking have shown CoT can demonstrate inference-time scaling behavior, where the accuracy on reasoning datasets continues to improve as the length of the models chain of thought increases; i.e., generating more thoughts leads to better reasoning. 

To establish this connection, we make a simple observation about CoT reasoning -- it is essentially a looped model that generates a single thought token in each iteration.
However, looped models can be more powerful since they can generate multiple ``latent thoughts'' in each iteration. 
This can be visualized in \Cref{fig:latent_thoughts}.
We further translate this intuition into a theoretical result (see \Cref{sec:looped_simulate_cot}) about how looped models can in fact simulate CoT reasoning.
Given this connection to CoT reasoning, it is believable that looping can scale well for harder reasoning.
We believe that leveraging looping explicitly for inference-time scaling is a very promising future direction.
}

\section{Looping-inspired regularization}
\label{sec:regularization}

In the previous section, we observed the looped models can improve reasoning with worse perplexity. Can we leverage this observation to improve reasoning without affecting perplexity? Here, we propose a simple approach: regularize the weights of the model to encourage them to be close to a looped model. This could have two advantages, (a) the model still has free parameters to improve perplexity, (b) the closeness to looped model can inherit the desirable inductive bias.
In particular, if an $L$-layer model is denoted as $f_{0}\circ f_{1} \dots, \circ f_{L/k-1}$, where each $f_{i}$ is a block of $k$ layers, we add a regularization term that makes all the weights of the block $f_{i}$ close to $f_{i+1}$ in terms of cosine similarity.
For a parameter group $G$ (e.g. first feed-forward layer, or query matrix in Transformer), we use $\theta_{G}^{(0)}, \theta_{G}^{(1)}, \dots, \theta_{G}^{(L-1)}$ to denotes the weights in all layers.
Then the regularization term is
\begin{align}
    \mathcal{R}_{G}(k) = \frac{1}{L-k}\sum_{i=0}^{\frac{L}{k}-2} \sum_{j=0}^{k-1} \text{Cosine}\left(\theta_{G}^{(i k + j)}, \theta_{G}^{((i+1) k + j)}\right)
\end{align}
The final loss function is a sum of the standard cross-entropy loss and the regularization term averaged over all groups, multiplied by a scalar hyperparameter. Let $\mathcal{G}$ denote the set of all parameter groups; $\mathcal{G} = \left\{\text{Attn-Q}, \text{Attn-K}, \dots, \text{FFN-W2}\right\}$
\begin{align}
    \mathcal{L} = \mathcal{L_{\text{xent}}} + \lreg|\mathcal{G}|^{-1}  \sum_{G \in \mathcal{G}} \mathcal{R}_{G}(k)
    \label{eq:regularization}
\end{align}
\looseness-1In the above formulation, $\lreg=0$ would recover standard training and $\lreg \rightarrow \infty$ would converge to a fully looped model. Intermediate values of $\lreg$ will lead to ``approximately'' looped models. For instance, to emulate the \loopy{4}{6} looped model setting, we use pick $k=4$, $L=24$ and a large regularization strength like $\lreg=10$. All other hyperparameters are kept the same as baseline training for a fair comparison. 
We tried options other than cosine similarity, like $\ell_2$ norm, to bring the weights closer but found that cosine was more robust and effective.

{\bf Cosine similarities.} Firstly, we check if the regularization had the right effect by measuring the cosine similarities between the successive blocks of $k$ layers at the end of training.  We, indeed, find that for all parameter groups, the cosine similarity around 0.98 or higher (see \Cref{fig:cosines_cosreg}). 
%We visualize the cosine similarities for a few groups of layers in \Cref{fig:cosines_cosreg}

\iffalse
{\bf Results.} \Cref{table:regularization_results} summarizes the results with regularization. Overall, we find that regularized models improve over standard training. In particular, for block size of $k=4$, we find that higher regularization strength of $\lreg=10$ was more effective than $\lreg=1$. Thus, we use $\lreg=10$ for all other experiments. Not only does regularization slightly improve perplexity ($7.40 \rightarrow 7.38$) and closed book QA performance ($11.2 \rightarrow 12.5$), it significantly improves open book QA ($33.9 \rightarrow 36.2$), math word problems ($29.3 \rightarrow 36.4$) and reasoning primitives ($47.5 \rightarrow 57.5$). Thus, we are able to achieve the best of both worlds -- better memorization than baseline and better reasoning than looped models.
\fi

{\bf Inductive bias.} To confirm the inductive bias of the regularizer, we visualize the log perplexity vs downstream isoplots for $\lreg=10$ and baseline models in \Cref{fig:iso_plots_taskgroups_12L}. While the plots are similar for closed book QA, a strong inductive bias shows up for open book QA and reasoning problems.
Crucially, the regularized model does well on reasoning without hurting perplexity (see \Cref{table:regularization_results}).

\begin{table}[!tbp]
\centering
\caption{\looseness-1Results for the 24-layer 1B model with and without the regularization introduced in \Cref{sec:regularization}. We try various block sizes $k$ motivated by the looped model settings from \Cref{table:language_modeling_results}. Overall, regularization helps retain the inductive bias towards reasoning, with notable improvements on math word problems and reasoning primitives, without almost neutral perplexity.}
\label{table:regularization_results}
\scalebox{0.75}{
% \begin{tabular}{lll|r|rrr|r}
\begin{tabular}{lc|cccc|c}
\toprule
 &  Perplexity ($\downarrow$) & Closed & Open   & Math Word & All Tasks & Reasoning  \\
 % &  & Perplexity ($\downarrow$) & Closed & Open   & Math Word & All Tasks & Reasoning  \\
  & \tiny{(validation)} & Book QA ($\uparrow$) & Book QA ($\uparrow$)  &  Problems ($\uparrow$) & Average ($\uparrow$) & Primitives ($\uparrow$) \\
  % & Params / FLOPs & \tiny{(validation)} & Book QA ($\uparrow$) & Book QA ($\uparrow$)  &  Problems ($\uparrow$) & Average ($\uparrow$) & Primitives ($\uparrow$) \\
  &  & \tiny{(4 tasks)} & \tiny{(5 tasks)}  &  \tiny{(6 tasks)} & \tiny{(15 tasks)}  & \tiny{(4 tasks)}  \\
  % &  &  & \tiny{(4 tasks)} & \tiny{(5 tasks)}  &  \tiny{(6 tasks)} & \tiny{(15 tasks)}  & \tiny{(4 tasks)}  \\
\midrule
\hline
Baseline & 7.40 & 11.2 & 33.9  & 29.3 & 26.0 & 47.5 \\
% Baseline & 24x / 24x & 7.40 & 11.2 & 33.9  & 29.3 & 26.0 & 47.5 \\
% \hline
\hline
% \rowcolor{lightgray}
Regularized ($k=4,\lreg=1$) & 7.41 & 11.2 & 34.8  & 31.6 & 27.2 & 42.5 \\
Regularized ($k=4,\lreg=10$) & 7.38 & 12.5 & 36.2  & 36.4 & 30.0 & 57.2\\
Regularized ($k=6,\lreg=10$) & 7.40 & 12.0 & 35.8  & 31.0 & 27.5 & 55.8\\
Regularized ($k=8,\lreg=10$) & 7.43 & 11.3 & 34.4 & 32.8 & 27.6 & 56.3\\
Regularized ($k=12,\lreg=10$) & 7.51 & 10.1 & 34.1 & 32.3 & 27.0 & 50.7\\
% \hline
% \hline
% \rowcolor{lightgray}
% \multicolumn{3}{c}{8 layer model}  \\
\hline
\end{tabular}
}
\end{table}

% \vspace{-0.1in}

\vspace{-0.05in}
\section{Theoretical analysis for looped models}
\label{sec:theory}

\vspace{-0.05in}
\looseness-1In this section, we present theoretical results to understand the phenomenon from the previous sections -- {\em why can looped model with few parameters match an iso-flops non-looped baseline on reasoning problems?}
While a complete theory is challenging, since ``reasoning'' is a very broad concept, the goal is to provide some intuition and formalization for the expressive power of looped models. First, we show that looped Transformers can effectively solve group composition (a generalization of the addition problem). Then we show a very general result on how a non-looped model with very few distinct layers can be simulated by a looped model with a small blowup in model size. This result is then used to solve the $p$-hop problem using a one-layer looped transformer. Our construction for group composition and $p$-hop problem are nearly optimal in terms of depth and much more efficient in terms of parameters compared to non-looped models.

\vspace{-0.01in}
\subsection{Preliminaries and Notations}
\label{subsec:prelim}
We first define the standard transformer architecture. Throughout the paper we will fix the dimension of the embedding to be $d\in\mathbb{N}^+$, the vocabulary to be $\mathcal{V}$ and maximum sequence length to be $n_{\max}$. We will use $\id$ to denote the identity mapping. Here, we describe the high-level notation. Please refer to \Cref{sec:apx_detailed_notation} for detailed notations.
We use $\ff$ and $\mha$ to denote the feed-forward and attention layers respectively, and $\theta_{\mha}$ and $\theta_{\ff}$ denote the parameters in these layers.
\vspace{-0.01in}

\begin{definition}[Transformer Block]\label{defi:transformer_block}
Given number of layers $L\in\mathbb{N}^+$ and parameter $\theta_\tfblock = (\theta^{(l)}_\mha,\theta^{(l)}_\ff )_{l=0}^{L-1}$, $L$-layer transformer block $\tfblock_{\theta_\tfblock}:(\mathbb{R}^d)^n\to (\mathbb{R}^d)^n$ for any $n\in\mathbb{N}^+$ is defined as \begin{equation}
    \tfblock_{\theta_\tfblock} \triangleq (\id+ \ff_{\theta^{(L-1)}_\ff})\circ (\id+ \mha_{\theta^{(L-1)}_\mha})\circ \cdots (\id+ \ff_{\theta^{(0)}_\ff})\circ (\id+ \mha_{\theta^{(0)}_\mha}),
\end{equation}
\end{definition}
We also denote $\embed$  and $\transoutput$ to be the input embedding and output softmax layers respectively. Please refer to \Cref{sec:apx_detailed_notation} for precise definitions.
Finally, we define the entire transformer model that maps a sequence of tokens to a distribution over tokens: $p_{\theta}: \cup_{n\le n_{\max}} \mathcal{V}^n\to \Delta^{|\mathcal{V}|-1}$.
\begin{align}
p_{\theta}\triangleq \transoutput_{\theta_\transoutput}\circ \tfblock_{\theta_\tfblock}\circ \embed_{\theta_\tokenembedding,\theta_\posencoding}
\end{align}
where $\theta = (\theta_\tfblock,\theta_\tokenembedding,\theta_\posencoding,\theta_\transoutput)$ denote all the transformer parameter. 
In particular, we use $\transformer_{\theta}(v_1,\ldots,v_n) \triangleq \argmax_{v\in\mathcal{V}} p_{\theta}(v|v_1,\ldots,v_n)$ to denote the deterministic version of the transformer model. 
We now define a looped Transformer model that also subsumes a non-looped model.

\begin{definition}[\loopy{L}{T} Looped Transformer]\label{defi:looped_transformer}
Given the number of loops $T\in\mathbb{N}^+$,  parameters $\theta = (\theta_{\tfblock},\theta_\tokenembedding,\theta_\posencoding,\theta_\transoutput)$, where $\theta_{\transformer} = (\theta^{(l)}_\mha,\theta^{(l)}_\ff )_{l=0}^{L-1}$, we define a \loopy{L}{T} \emph{looped Transformer} as  $p_{\theta,T}\triangleq \transoutput_{\theta_\transoutput}\circ \left(\tfblock_{\theta_\tfblock}\right)^T\circ \embed_{\theta_\tokenembedding,\theta_\posencoding}$.
\end{definition}

\subsection{Group composition problem}
We consider the problem of composing $n$ elements from a group, and prove that a standard 1-layer transformer looped $\mathcal{O}(\log(n))$ times can solve this problem. 
This is a generalization of the modular addition problem and has a long history (see \Cref{sec:apx_group_composition}).
Recently, \citet{liu2022transformers} show that transformers with $\log_2 n$ depth can compute composition over $n$ group elements, regardless of whether the group is solvable or not. However, the construction in \citet{liu2022transformers} uses different attention parameter for each layer. Here, we provide a more parameter efficient construction where we solve this problem by looping a one-layer transformer $\log(n)$ times.
\begin{theorem}\label{thm:group_composition_log_depth}
    For any finite group $G$ and every $n\in\mathbb{N}^+$, there exists a constant-precision looped transformer $\transformer_{\theta,T}$ computing the composition of $n$ elements from $G$ with a $1$-layer transformer block, $T=\lceil\log _2 n\rceil$ loops, $G\cup\{\#\}$ being the vocabulary, $d= 3\left(\lceil\log_2 |G|\rceil +\lceil\log_2 n+1\rceil\right)$ embedding size, $d_\ff=|G|^2 + 6\lceil\log_2 |G|\rceil$ hidden dimension in MLP, $d_\attn=\lceil\log_2 n\rceil$ hidden attention dimension, and $2$ attention heads. More specifically, for any $g_1,\ldots,g_n\in G$, $\transformer_{\theta}(\#,g_1,\ldots, g_n) = g_1\circ\cdots \circ g_n$.
\end{theorem}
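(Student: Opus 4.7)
The plan is to simulate a parallel prefix tree (doubling) for the associative composition: after the $t$-th loop, the invariant will be that position $i$ holds the running product $g_{\max(1,i-2^{t-1}+1)}\circ\cdots\circ g_i$, so after $T=\lceil\log_2 n\rceil$ loops position $n$ carries the full product. Each loop's single transformer block thus has to perform one ``halving'' step: position $i$ looks up position $i-2^{t-1}$, reads its current partial product $b$, and replaces its own value $a$ by $b\circ a$.

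First I would fix the embedding. The three blocks of size $\lceil\log_2 |G|\rceil+\lceil\log_2 n+1\rceil$ each hold a (group element, binary address) pair used for a different role: (i) the current running product together with the position's binary index, (ii) a loop counter $t$ together with workspace for the binary ``target address'' $i-2^{t-1}$, and (iii) a scratch slot that will receive the value fetched by attention. I would place the sentinel $\#$ at position $0$ and encode it as the group identity, so that positions with $i<2^{t-1}$ can safely attend to it without breaking the invariant. Positions are represented in binary so that subtracting $2^{t-1}$ amounts to a predictable bit update driven by the counter.

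Next, the attention: the first head performs the shift-by-$2^{t-1}$ selection. Its query at position $i$ is the binary address $i-2^{t-1}$ (computed from slot (ii) using the current counter), and its key at position $j$ is the binary address $j$. Using the standard hard-attention construction for exact binary-address matching, this head outputs the group element stored in slot (i) at position $i-2^{t-1}$ into slot (iii); this is precisely why the attention dimension $d_\attn=\lceil\log_2 n\rceil$ suffices. The second head handles the boundary, routing positions with $i<2^{t-1}$ to the $\#$ token so they pick up the identity. After the attention layer, slot (i) still holds the old $a$ at position $i$ and slot (iii) holds the retrieved $b$.

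Then the MLP does two things. With $|G|^2$ hidden units I can implement an exact lookup table for the map $(b,a)\mapsto b\circ a$ using indicator ReLU features, one per ordered pair in $G\times G$, and overwrite slot (i) with the composed product. The remaining $6\lceil\log_2|G|\rceil$ hidden units are used to increment the binary counter $t$ in slot (ii) and refresh the encoded target address $i-2^{t-1}$ for the next loop; both are simple finite-state updates implementable by a constant-depth ReLU circuit. After $T$ iterations, the output head reads the group element from slot (i) at position $n$, which by the invariant is $g_1\circ\cdots\circ g_n$. I expect the main obstacle to be the attention step: designing a single shared head that realises the shift-by-$2^{t-1}$ exact-match at every loop, using only binary position codes, a counter-controlled query, and $\lceil\log_2 n\rceil$ attention dimensions. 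Once that construction is in place, the composition and counter update fit comfortably within the stated MLP width.
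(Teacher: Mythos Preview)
Your prefix-doubling plan is a reasonable algorithm, but it does not fit the stated MLP width $d_\ff = |G|^2 + 6\lceil\log_2|G|\rceil$, and this is exactly the point where the paper's construction diverges from yours. You correctly flag the shift-by-$2^{t-1}$ as the obstacle and propose to resolve it by having the MLP ``refresh the encoded target address $i-2^{t-1}$'' each loop. But that refresh is binary subtraction of a power of two from a $\lceil\log_2 n\rceil$-bit integer, which entails borrow propagation across as many as $\lceil\log_2 n\rceil$ bit positions; realising it with a two-layer ReLU network costs width that scales with $\log n$. The prescribed $d_\ff$ has no $n$-dependent term: after the $|G|^2$ indicator units for the composition table, only $6\lceil\log_2|G|\rceil$ units remain, and those are consumed just cancelling the residual on the three $\lceil\log_2|G|\rceil$-dimensional group-element slots (overwriting the old product, clearing scratch). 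Your counter increment and address arithmetic have no budget left, so as written the construction cannot meet the theorem's bounds.

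The paper sidesteps this by choosing a doubling scheme whose target addresses are \emph{loop-independent}. Rather than having position $i$ fetch from $i-2^{t-1}$, it always fetches from the two fixed positions $[2i-n-1]_{+}$ and $[2i-n]_{+}$, and composes the two retrieved elements; one checks inductively that the total product $g_0^{(l)}\circ\cdots\circ g_n^{(l)}$ is preserved at every round and that after $\lceil\log_2 n\rceil$ rounds everything except position $n$ has collapsed to the identity. Because these two targets depend only on $i$ and $n$, they are baked into the positional embedding once and never updated: there is no loop counter, no address arithmetic, and the MLP's sole job is the group multiplication plus residual cancellation. Both attention heads are used for retrieval (one per fixed target), not for boundary handling; the boundary is absorbed by the $[\cdot]_{+}$ clamp to the $\#$ token at position $0$, which carries the identity. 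This static addressing is the key trick that makes the MLP width independent of $n$.
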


The above result matches the depth upper bound shown for non-looped models by \citet{liu2022transformers}, and is very close to the super constant lower bound shown there.
Thus looped models can solve the problem with best known depth.

\subsection{Looped models can simulate non-looped models}

Our second theoretical result shows that a non-looped transformer with repeated layers can be simulated by a looped transformer with fewer parameters and same depth.
\begin{theorem}\label{thm:main}
    For any transformer $p_{\theta}$ with $L$ layers, $d$ embedding size, $d_\ff$ hidden dimension for MLP, $H$ attention heads with $d_\attn$ hidden dimension, at most $R$ distinct transformer layers and bounded activation value, there is a looped transformer $p_{\theta',L}$ working on the same vocabulary $\mathcal{V}$ plus a dummy token $\#$, 
which loops a 1-layer transformer block for $L$ times, with $d+R+2$ embedding size, $R h_\ff +O(L)$ hidden dimension for MLP, $RH$ attention heads with $d_\attn$ hidden dimension, such that for any string $v_1,\ldots, v_n\in\mathcal{V}$, $p_{\theta}(v_1,\ldots,v_n) = p_{\theta',L}(\#, v_1,\ldots,v_n)$.
\end{theorem}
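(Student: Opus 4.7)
The plan is to construct a single transformer block $\tfblock'$ that, when looped $L$ times, simulates the $L$-layer non-looped transformer $p_\theta$. Let the $L$ layers of $p_\theta$ be drawn from a pool of $R$ distinct layer types, and let $\sigma : \{0,\ldots,L-1\} \to \{0,\ldots,R-1\}$ be the schedule so that layer $\ell$ is of type $\sigma(\ell)$. The construction is based on three ingredients: (i) an iteration counter that advances by one per loop and is broadcast through the residual stream; (ii) parallel computation of all $R$ distinct layer types inside a single block; and (iii) a per-iteration selector that picks out the output of layer-type $\sigma(\ell)$ at loop iteration $\ell$ and zeroes out the remaining $R-1$ contributions.

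First I would reserve dimensions. Split the $d+R+2$ embedding dimensions as $d$ for the original residual stream, $R$ for a one-hot ``active layer type'' indicator $e_{\sigma(\ell)}\in\{0,1\}^R$, and $2$ for a dummy-token flag plus a scalar register used when attending to $\#$. The initial embedding writes the $d$-dimensional original embedding into the first block, sets the indicator to $e_{\sigma(0)}$, and marks $\#$ as the counter token (with positional encoding identifying position $0$); all other positions get the original positional encoding in the first $d$ slots. The counter itself is implemented with the $O(L)$ extra MLP hidden neurons: a standard ReLU ``shift register'' that, at iteration $\ell$, reads $e_{\sigma(\ell)}$ from the residual stream of $\#$ and writes $e_{\sigma(\ell+1)}$ back (this only has to realize a fixed length-$L$ lookup table, which a depth-$1$ MLP with $O(L)$ hidden units can do).

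Next I would build the attention and MLP of $\tfblock'$. Instantiate $RH$ attention heads partitioned into $R$ groups of $H$; group $r$ copies the attention parameters of layer type $r$, but with the query, key, and value matrices restricted to read/write only the first $d$ coordinates, and with the value projection augmented to write its output into a dedicated scratch slot indexed by $r$ (rather than directly onto the residual). Because there are only $R$ layer types in total, all these matrices exist in the original transformer. The subsequent MLP is formed by concatenating $R$ copies of the original MLPs (one per layer type, total hidden width $R h_\ff$), each reading from the $d$-dimensional residual and its corresponding scratch slot, plus the $O(L)$ counter neurons. A final linear combination multiplies each group's output by the $r$-th indicator coordinate $e_{\sigma(\ell)}[r]$; using the ReLU identity $\mathrm{ReLU}(x+M\cdot b - M) = x$ when $b=1$ and $0$ when $b=0$ (for bounded activation $x$ and large $M$), one can realize this gated sum inside a constant-depth MLP. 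Combined with the counter update on $\#$, the block's residual update on non-dummy tokens equals exactly the update of layer $\sigma(\ell)$ applied to the first $d$ coordinates. Inductively, after $L$ loops the first $d$ coordinates of each token match those of $p_\theta$; adding an output layer that ignores the extra coordinates and the prepended $\#$ yields equality of the predictive distributions.

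The main obstacle, and the reason the scratch-slot trick is necessary, is that the attention sub-layer writes additively into the residual stream \emph{before} any gate can be applied; one cannot directly prevent the $R-1$ inactive groups from corrupting the hidden state within the same block. The resolution is exactly the indirection above: attention writes into $R$ disjoint scratch slots (so its contribution to the meaningful residual is zero), then the immediately following MLP both (a) reads from the scratch slots and the indicator $e_{\sigma(\ell)}$ to produce the gated attention-plus-MLP update into the first $d$ coordinates, and (b) zeroes out the scratch slots so that no cross-iteration contamination occurs. Careful bookkeeping of which dimensions are written by which sublayer, together with a uniform bound on the pre-activation magnitudes (guaranteed by the bounded-activation hypothesis, which lets us pick $M$ uniformly for the ReLU gating gadget), completes the simulation with the stated parameter budget.
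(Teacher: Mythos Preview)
Your high-level plan---depth counter, $R$ parallel copies of the distinct layers, and a gate that selects the correct one---matches the paper, and you have correctly put your finger on the real obstacle: the attention sublayer writes additively into the residual before any gate can fire. However, your proposed fix does not fit the stated parameter budget. Routing each group's attention output into ``a dedicated scratch slot indexed by $r$'' means each slot must hold a $d$-dimensional vector for the subsequent MLP to read, so $R$ slots cost $Rd$ extra embedding coordinates. The theorem grants only $d+R+2$, and your own dimension accounting (first $d$ for the residual, $R$ for the indicator, $2$ for flags) leaves no room for the slots. A second, smaller issue: you describe the counter as living at $\#$ and being ``broadcast through the residual stream,'' but the residual stream is per-position and MLPs act tokenwise; without an extra attention head to copy from $\#$, non-dummy positions would never see the updated indicator. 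Also note that $e_{\sigma(\ell)}\mapsto e_{\sigma(\ell+1)}$ is not a well-defined function when the schedule $\sigma$ repeats layer types, so the lookup table must be keyed on $\ell$ itself, not on the indicator.

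The paper avoids scratch storage altogether by using the prepended $\#$ as an \emph{attention sink}. It gives $\#$ the all-zero token embedding and stores an $\indct{i=0}$ flag in one of the two spare coordinates so the MLP can be gated off at position $0$; this keeps the value vector at position $0$ equal to zero in every loop. Each of the $R$ head-groups then gets one extra query/key coordinate: for the active group $r=r(\ell)$ this bias is zero and the head behaves exactly as in the original transformer, while for every inactive group the bias is a large constant that drives all attention mass onto position $0$, whose value is zero. Hence the $R-1$ inactive groups contribute nothing to the residual and no per-group scratch space is required. The remaining extra coordinates are exactly the selector $\bm{1}_R - e_{r(\ell)}$ and a scalar depth counter $\ell$ stored at \emph{every} position; the MLP increments $\ell$ and recomputes the selector locally via the $O(L)$ hidden neurons, so no broadcast is needed either.
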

We can also use the above theorem to convert the $O(\log_2 n)$ depth transformer that simulates group composition into a 1-layer looped transformer, although it is less parameter efficient than the result from the previous section.
Please refer to  \Cref{sec:apx_simulate_nonlooped} for the full proof.
%\begin{proof}[Proof Sketch of \Cref{thm:main}]
%The high-level idea of our construction here is to have a one-layer wide transformer block with $R$ times more hidden dimension of attention of MLP layers but with the same embedding size, such that in each layer, the one-layer transformer block have the capacity to do all possible computation, but we choose which result to be saved, by a switching gate using an auxiliary depth counter. 
%The depth counter can be done by constantly more dimension in embedding and hidden dimension in MLP. We can also use a width-$L$ 2-layer MLP to turn the integer valued depth counter to a one-hot vector indicating which transformer layer to simulate. For MLP, we can deactivate undesired neurons by creating a very large negative bias. For attention, we can make the undesired attention head to attend to the dummy token $\#$ in the beginning of the sentence at all positions. This is the only reason we need such a dummy token.
%\end{proof}

\textbf{Theory for $p$-hop.} The experiments on $p$-hop induction from \Cref{sec:khop_desc} surprisingly show that a small model looped multiple times can solve it very effectively. 
We establish a theoretical basis for this finding. More specifically, we show that a constant layer transformer with $\log(p)$ loops suffices to solve $p$-hop induction problem. This result, in fact, matches the lower bound for layers required for non-looped models proved in \citet{sanford2024transformers}.
The result follows from \Cref{thm:main} and the construction for non-looped models from \citet{sanford2024transformers} (restated in \Cref{thm:khop_log_depth}).

\begin{corollary}
    $p$-hop problem (\Cref{defi:khop}) can be solved by looping a 1-layer transformer $\lfloor\log_2 p\rfloor+2$ times, which has $O(\log n)$ bits of precision, $d=d_{\ff}=d_{\attn}=O(1)$ embedding size, hidden dimension for MLP and attention, and at most $3$ attention heads.
\end{corollary}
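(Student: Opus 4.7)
The plan is to chain together two results from the paper: the simulation result in \Cref{thm:main} and the $p$-hop construction from \citet{sanford2024transformers} (restated in the paper as \Cref{thm:khop_log_depth}). The latter provides a non-looped transformer that solves the $p$-hop problem with depth $L = \lfloor\log_2 p\rfloor + 2$; by inspection of that construction, each layer performs one ``hop-doubling'' step using the same attention and MLP templates (differing essentially only through the positional information fed into them), so the number of distinct layer types is $R = O(1)$, with embedding size, MLP hidden dimension, attention hidden dimension, and number of heads already bounded by constants.

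Next, I would feed this non-looped construction into \Cref{thm:main}, which converts any non-looped transformer of depth $L$ with $R$ distinct layers into a $1$-layer block looped $L$ times. Plugging in $R = O(1)$ and $L = \lfloor\log_2 p\rfloor + 2$ yields: embedding size $d + R + 2 = O(1)$, number of attention heads $RH = O(1)$, attention hidden dimension $O(1)$ (the per-head width is preserved), and MLP hidden dimension $R h_{\ff} + O(L) = O(1) + O(\log p)$. The $O(L)$ additive term is the loop-index bookkeeping used inside the 1-layer block so that it can dispatch to the correct ``virtual'' layer on each iteration; since the counter takes only $L = O(\log p)$ values, under $O(\log n)$ precision it can be encoded in $O(\log\log p)$ bits, occupying a constant number of MLP units rather than the conservative one-hot $O(L)$ budget in \Cref{thm:main}. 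Appending the dummy token $\#$ to the vocabulary (as required by \Cref{thm:main}) then gives the claimed looped transformer.

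The main obstacles I anticipate are twofold. First, verifying the $R = O(1)$ claim requires carefully reading the \citet{sanford2024transformers} construction and confirming that the ``hop-doubling'' primitive is indeed implemented by a single shared layer template; if instead the construction uses a constant number of \emph{initial} layers plus $\log p$ identical ``doubling'' layers, then $R$ is still $O(1)$ and the conclusion is unaffected. Second, and more subtly, one must reconcile the $O(L)$ hidden-width overhead in \Cref{thm:main} with the desired $d_\ff = O(1)$ bound; the cleanest route is to use the $O(\log n)$ precision budget to pack the loop counter into $O(\log\log p)$ bits implemented by a constant number of neurons. If this compression proves awkward, a weaker but fully acceptable fallback is to state the corollary with $d_\ff = O(\log p)$, which still exhibits the key separation: a $1$-layer block looped $\lfloor\log_2 p\rfloor + 2$ times suffices, matching the depth lower bound from \citet{sanford2024transformers}.
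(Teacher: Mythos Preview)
Your approach is essentially the paper's: the paper's proof is a single sentence stating that the corollary follows by applying \Cref{thm:main} to the non-looped construction of \citet{sanford2024transformers} restated as \Cref{thm:khop_log_depth}, which already records $R\le 3$ distinct layers and $H=1$ head (so $RH\le 3$, matching the ``at most $3$ attention heads'' bound exactly). Your identification of the $d_\ff$ tension is in fact more careful than the paper's own treatment, which simply asserts the corollary without explaining how the $O(L)$ overhead in \Cref{thm:main} collapses to $O(1)$; the natural resolution is that for the Sanford construction the map $\ell\mapsto r(\ell)$ is a simple threshold function (first layer, middle block, last layer) implementable with $O(R)=O(1)$ ReLU neurons rather than the generic $O(L)$ bump-function budget.
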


\new{

\subsection{Looped models can simulate CoT reasoning}
\label{sec:looped_simulate_cot}

% \ns{Briefly describe how looping is providing latent thoughts and it differs from CoT reasoning.}

In \Cref{sec:latent_thoughts}, we discussed how looped models can be viewed as generating multiple latent thoughts in each iteration. 
The following theorem shows that one can use looped transformer with $L$ loops to simulate $L$ steps CoT of another transformer with similar sizes. 
\begin{theorem}[Looped transformer simulates CoT]\label{thm:cot_informal}
    For any $L$-layer non-looped transformer $\transformer_\theta$ with fixed input length $n$ and the number of CoT steps $m$, there exists a looped transformer $\transformer_{\theta'}$ with $L+\mathcal{O}(1)$ layers, $\Omega(\log (n+m))$, more  embedding dimension and constantly many more attention heads, such that for any input $\vv= (v_i)_{i=1}^n$, the output of non-looped transformer after $m$ steps of CoT, $\transformer^m_\theta(\vv)$, is the same as that of the looped transformer on input $x$ concatenated by $m$ dummy tokens with $m$ loops, $\transformer_{\theta',m}(\vv,\#^m)$.
\end{theorem}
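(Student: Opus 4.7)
The plan is to construct $\transformer_{\theta'}$ so that each loop simulates exactly one CoT step of $\transformer_\theta$ on a growing prefix of the input. The input to the looped model is $(v_1,\ldots,v_n,\#,\ldots,\#)$ of length $n+m$, and the key idea is to maintain, per position, an augmented embedding whose extra $\Omega(\log(n+m))$ channels store (i) a counter $c$ encoded with $\log m$ bits that tracks the current loop iteration, (ii) a binary visibility flag indicating whether this position currently holds a committed token of the CoT sequence, and (iii) the position index $i$ encoded with $\log(n+m)$ bits. The original embedding channels are kept isolated from the auxiliary ones so that the $L$ layers inherited from $\theta$ operate unchanged on the original coordinates.

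First I would state the invariant maintained after loop $t$: positions $1,\ldots,n+t$ hold the embeddings of $v_1,\ldots,v_{n+t}$ with visibility flag $1$; positions $n+t+1,\ldots,n+m$ hold the dummy embedding with visibility flag $0$; and the counter channel equals $t$ everywhere. The base case $t=0$ follows from the design of $\embed_{\theta'}$. For the inductive step, the looped block consists of the original $L$ layers followed by an $O(1)$-layer bookkeeping suffix. The $L$ original layers act on the original channels exactly as in $\theta$, except each attention sublayer is augmented with a visibility-based mask: positions with visibility $0$ are excluded from the key set by injecting a large negative additive bias, realizable by adding a constant number of extra attention heads (or by piping the visibility flag into the existing key/query projections), accounting for the ``constantly many more attention heads'' in the statement. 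Under this masking, the computation on the original channels at loop $t+1$ matches $\transformer_\theta$ applied to $v_1,\ldots,v_{n+t}$, so the logits at position $n+t$ correspond to the CoT token $v_{n+t+1}$.

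The bookkeeping sublayers then perform, in order: (a) unembed the residual at each position and apply a sharpened $\softmax$ (via one attention-like block with a temperature-scaled query-key product) to produce the argmax token embedding in a scratch channel at each position; (b) one attention layer in which position $i$ copies the scratch of position $i-1$ gated by the equality $i = n+c+1$, so that only position $n+t+1$ rewrites its token embedding to $v_{n+t+1}$; (c) a feed-forward layer that flips the visibility flag at position $n+t+1$ from $0$ to $1$ using the same equality gate; and (d) a uniform increment $c \mapsto c+1$ in the counter channel at every position. The equality gate ``$i = n+c+1$'' is a comparison between two $O(\log(n+m))$-bit integer-encoded channels and is expressible as a constant-width ReLU circuit of the form $\mathrm{ReLU}(1-(i-n-c-1))+\mathrm{ReLU}(1+(i-n-c-1))-1$, exactly in the spirit of the primitives used in \Cref{thm:main}.

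The main obstacle I anticipate is step (b)--(c): ensuring that \emph{only} the frontier position $i=n+t+1$ is updated while the embeddings at all other positions remain exactly unchanged through the bookkeeping block. This is handled by multiplying every write in the bookkeeping sublayers by the equality gate above, so that non-frontier positions see a zero update and flow through the residual stream unmodified; the precision of both the position encoding and the counter must be $\Omega(\log(n+m))$ bits to make the equality comparison exact, which is precisely the source of the embedding-dimension blow-up in the statement. Once the invariant propagates through all $m$ loops, position $n+m$ carries the embedding of $v_{n+m}=\transformer^m_\theta(\vv)$, and reading off $\transoutput_{\theta'}$ at that position yields $\transformer_{\theta',m}(\vv,\#^m)=\transformer^m_\theta(\vv)$, as required.
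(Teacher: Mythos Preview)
Your proposal has a genuine gap: it never explains how the token embeddings at positions $1,\ldots,n+t$ are restored after the $L$ inherited layers overwrite them. Your invariant requires those positions to hold raw token embeddings at the start of each loop, and you correctly need this so that the $L$-layer computation at position $n+t$ sees the right context for predicting $v_{n+t+1}$. But the inherited layers write into the very ``original coordinates'' where those token embeddings live (as you say, they ``operate unchanged on the original coordinates''), so after those $L$ layers every committed position $j\le n+t$ holds a depth-$L$ residual, not $\theta_\tokenembedding(v_j)$. Your equality gate only zeros writes in the \emph{bookkeeping} suffix; it does nothing to undo the $L$ layers' residual writes on committed positions. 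After two loops, positions $1,\ldots,n$ have effectively passed through $2L$ transformer layers, and the invariant is lost. Note that you cannot simply freeze the committed positions during the $L$ layers either, since position $n+t$ must attend to their intermediate-layer representations.

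The paper closes this gap with a block-level reset rather than a frontier-only write: it defines a mask $M^t(i)=\indct{i\ge n+t}$ that restores every position $i<n+t$ to its pre-loop value at the end of the block (\Cref{defi:embedding_layer_w_mask,defi:looped_transformer_w_mask}), and then shows this mask can be absorbed into a standard looped transformer via a control gate (\Cref{lem:control_gate}) driven by an MLP comparison on the binary position and counter channels (\Cref{lem:mlp_addition,lem:mlp_comparison}). Because positions $1,\ldots,n{+}t{-}1$ are guaranteed to hold token embeddings at the start of loop $t$, causal masking alone already ensures position $n{+}t{-}1$ attends only to valid context during the $L$ layers, so your visibility-based attention masking becomes unnecessary. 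If you wanted to retain your frontier-only gating instead, you would need a second copy of the original $d$ channels so the $L$ layers can overwrite a working copy each loop while the committed token embeddings stay intact; that costs $d$ extra embedding dimensions, not the $\Omega(\log(n+m))$ the statement allows.
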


Below we sketch the high-level idea behind \Cref{thm:cot_informal}; the full proof can be found in \Cref{sec:apx_cot_connection}. 
\begin{itemize}
    \item (Masking all-but-one) We maintain a counter $t$ (\Cref{lem:mlp_addition}) at each position for the current number of loop and for the $i$th position, we will only "update" the embedding if $i-n\ge t$ and reset the embedding to the input to the looped transformer layer otherwise. This can be implemented by MLP. So similar to CoT, the first $n+i-1$ embedding won't be changed in the $i$th loop.
    \item (Shift by one) We can use attention to obtain the output of the current loop at the previous position and use that as the input of the next loop at current position. (\Cref{lem:shifting_layer})
    \item (Token decoding) We show that we can use MLP with ReLU activation to simulate the encoding and decoding process of CoT. (\Cref{lem:simulating_decoding_embedding})
\end{itemize}

}

\begin{figure*}[!tbp]% [H] is so declass\'e!
\centering
    \begin{subfigure}{0.5\textwidth}
    \centering    \includegraphics[width=0.55\textwidth]{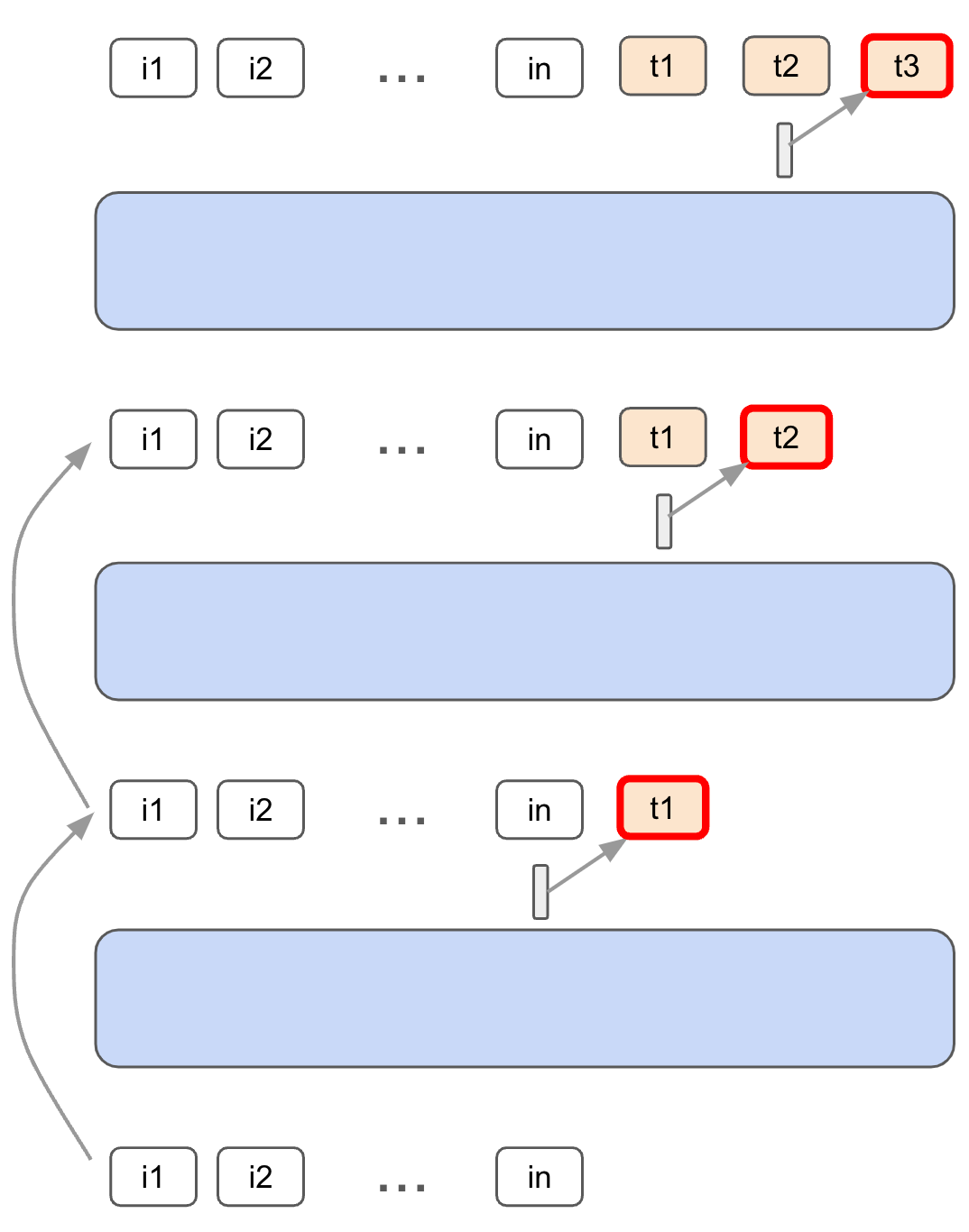}
    % \caption{}
    \label{fig:cot_iteration}
    \end{subfigure}\hfill
    \centering
    \begin{subfigure}{0.5\textwidth}
    \centering    \includegraphics[width=0.55\textwidth]{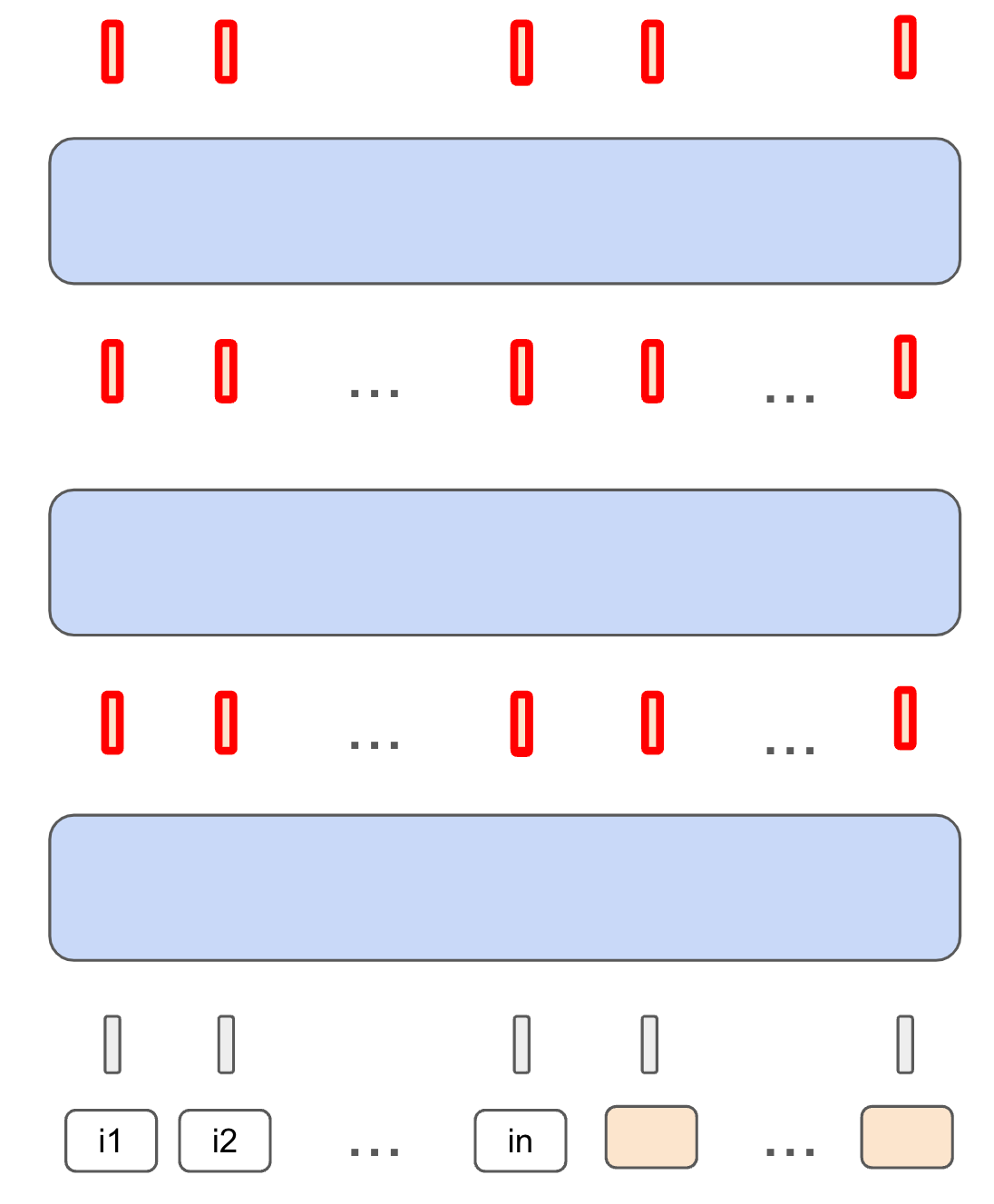}
    % \caption{}
    \label{fig:latent_thoughts_iteration}
    \end{subfigure}
    \vspace{-0.1in}
    \caption{\looseness-1\new{{\bf Left.} Chain-of-thought reasoning can be viewed as a looped model, where each iteration produces one new thoughts token. The new tokens are highlighted in red. {\bf Right.} A looped model can instead generate multiple {\em latent thoughts} in parallel and, in theory, can simulate CoT reasoning my masking the updates appropriately (see \Cref{thm:cot_informal})}}
    \label{fig:latent_thoughts}
\end{figure*}

\section{Related work}
\label{sec:related}
% \vspace{-2mm}
\vspace{-0.12in}

\looseness-1Reasoning is recognized as a core ability for intelligent and robustly model and has thus seen significant focus over the last few years. The synthetic reasoning problems we consider in this work have all been used in the prior works of \cite{sanford2024transformers,ye2024physics,sanford2024understanding,nogueira2021investigating} to theoretically analyze the strengths and limitations of Transformers.
There is also interest in the representation power of Transformers for computational problems \citep{liu2022transformers,strobl2023transformers} and for chain-of-thought reasoning \citep{merrill2023expresssive,feng2023towards,li2024chain}.
The necessity of model depth for performance has been remarked upon, for small models~\citep{liu2024mobilellm} and reasoning \citep{chen2024can,ye2024physics,petty2023impact}. In this work, we make a finer observation that albeit larger depth is necessary, this can be achieved with a limited parameter budget via looping. 

Looping in transformer models has been studied since the works \citep{dehghani2018universal,lan2019albert} where they showed the benefits of looping for supervised learning tasks and BERT pretraining respectively. 
Looping also appears in \citep{schwarzschild2021can,bansal2022end} that study the extrapolation properties of looping for certain algorithmic tasks.
More recently \citet{giannou2023looped,de2024simulation} have studied the theoretical properties of looped decoder models and show that looping can simulate arbitrary Turing machines. 
In addition \cite{yang2023looped,gao2024expressive,gatmiry2024can,gatmiry2024role} study looped models as a way to simulate iterative algorithms for in-context learning.
Recently, \citet{mohtashami2023cotformer} introduced CoTFormer, which tries to improve the perplexity of looped language models and \citep{liu2024mobilellm} explore latency-efficiency parameter sharing for on-device LLMs.
% \ns{Include something about CoTFormer \citep{mohtashami2023cotformer}
In contrast, our work focuses on the surprising inductive bias of looping to improve downstream reasoning tasks, and goes beyond algorithmic and in-context learning tasks.

\looseness-1Different training algorithms (e.g. gradient descent \citep{soudry2018implicit}) and architectural choices (e.g. attention \citep{edelman2022inductive}) have been shown to have certain implicit biases. There is increasing interest in such inductive biases during pretraining \citep{saunshi22understanding,liu2023same}. More recently, \citet{saunshi2024inductive} showed an inductive bias of stacking \citep{reddi2023efficient} towards improving reasoning and hypothesize that a connection of stacking to looped models could be responsible for this. Our results provide further verification for this hypothesis.

\vspace{-0.1in}

\section{Conclusions, limitations and future work}
\vspace{-0.1in}

\looseness-1This work explores a new direction of ``looped models for reasoning''. Not only are looped models able to solve many reasoning problems with very fewer parameters, they also have an inductive bias towards disproportionately improving the reasoning performance of language models, over memorization. The theoretical results on the expressivity of looped models start to provide some hints into their depth-optimality.
% We hope that this exploration opens up many interesting research directions. 
While we test looped models on a subset of reasoning problems, a natural question is whether the results hold for many other forms of reasoning (e.g. multimodal and common-sense reasoning).
In particular, a succinct formalization of reasoning problems itself is an interesting future direction.
Furthermore, the inductive bias towards improved reasoning performance at the same perplexity is very intriguing and deserves further exploration.
We find the scaling behavior of looped models very fascinating, and the connections to {\em latent thoughts} and CoT reasoning start to provide hints into this behavior.
We hope this inspires future exploration on using looped models for more efficient inference-time scaling to aid with deeper reasoning.

% The initial connections to chain-of-thought    how to explain the inductive bias (its not about expressivity or generalization)? can we leverage this for even better reasoning?

% Limitations: reasoning is non-exhaustive, partly because it is a broad topic. can we get good language models along with parameter efficiency?

\bibliography{references}
\bibliographystyle{iclr2025_conference}

\appendix
\section{Experiments}
% You may include other additional sections here.

\subsection{Simple reasoning setup details}

\paragraph{$n$-ary addition.}
\label{sec:apx_addition}

All experiments are run on a standard Transformer architecture with input dimension of 256, 8 heads and 1024 hidden dimension in the feed-forward layers.  We train using Adafactor \citep{shazeer2018adafactor} employing a linear warmup coupled with a cosine decay schedule for the learning rate.
All runs use a batch size of 1024, learning rate of 0.005 and run for 200k steps.
This corresponds to 200M examples, which is insignificant compared to the total possible examples ($> 10^{320}$). Thus memorization of the answer is not an issue. Since training is a bit noisy, for each setting, we run 3 different random seeds and pick the run with the maximum average accuracy. We pick maximum instead of average because we care about expressivity power of these models.

\paragraph{$p$-hop induction.}
\label{sec:apx_khop}

We formally define the $p$-hop problem below:
\begin{definition}[$p$-$\hop$,~\cite{sanford2024transformers}]\label{defi:khop}
For a finite alphabet $\Sigma$, define the map $\hop_p: \Sigma^n \to \Sigma \cup \{\perp\}$ as $\hop_p(\vv) = v_{\find_p(\vv, n)}$ if $\find_p(\vv, n) \neq 0$ else $\perp$, where
\begin{align*}
    \find_1(\vv, i) &= \max\left(\{0\} \cup \{j \le i, v_{j-1} = v_i \}\right) \\
    \find_p(\vv, i) &= \find_1(\vv, \find_{p-1}(\vv, i)) \text{ for } p \ge 2.
\end{align*}
\end{definition}

\begin{figure*}[tbp]% [H] is so declass\'e!
\centering
    \hspace{-0.1in}
    \begin{subfigure}{0.33\textwidth}
    \centering    \includegraphics[scale=0.36]{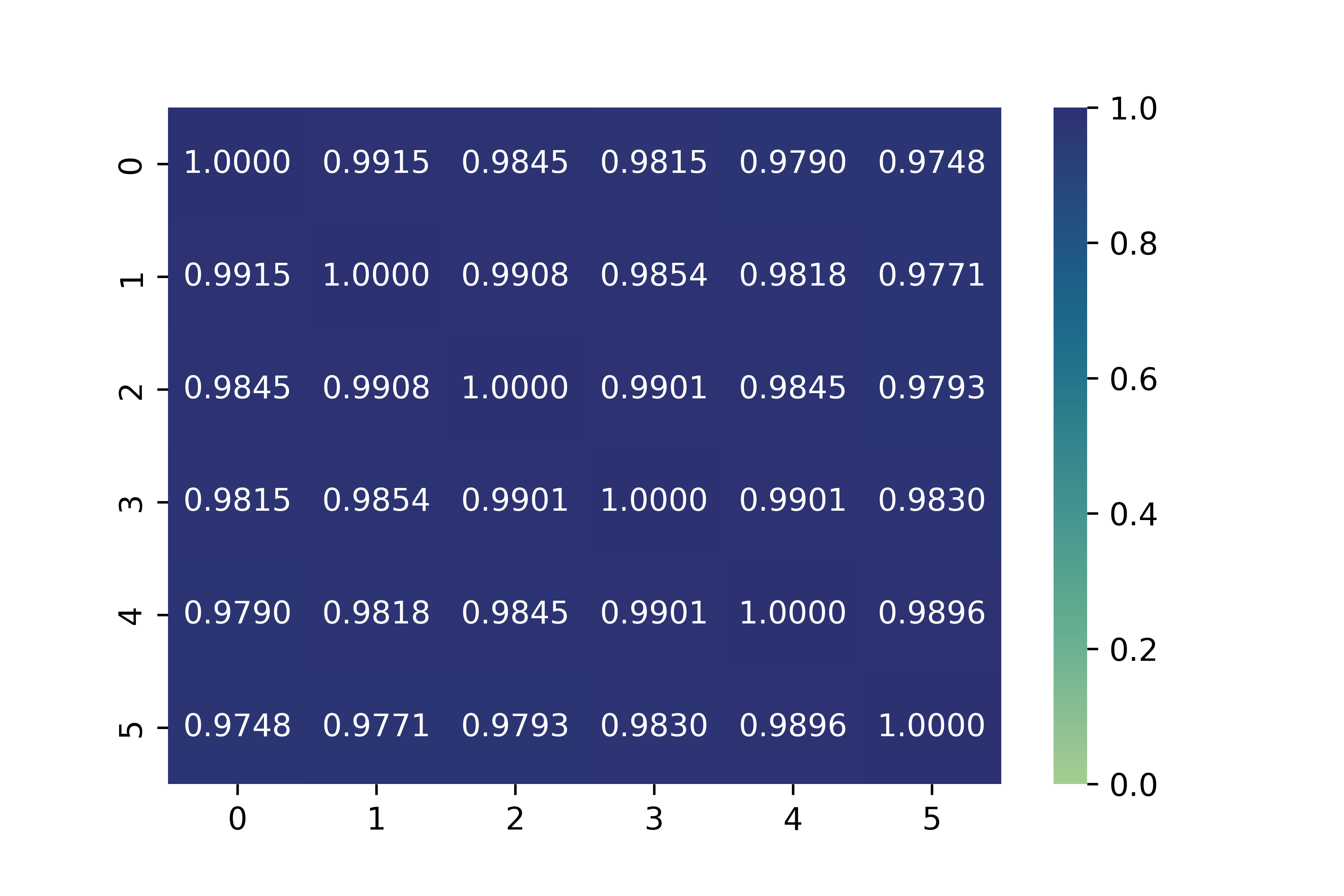}
    \caption{Attn:Q}
    \label{fig:cosines_cosreg_attn_q}
    \end{subfigure}\hfill
\centering
    \begin{subfigure}{0.33\textwidth}
    \centering    \includegraphics[scale=0.36]{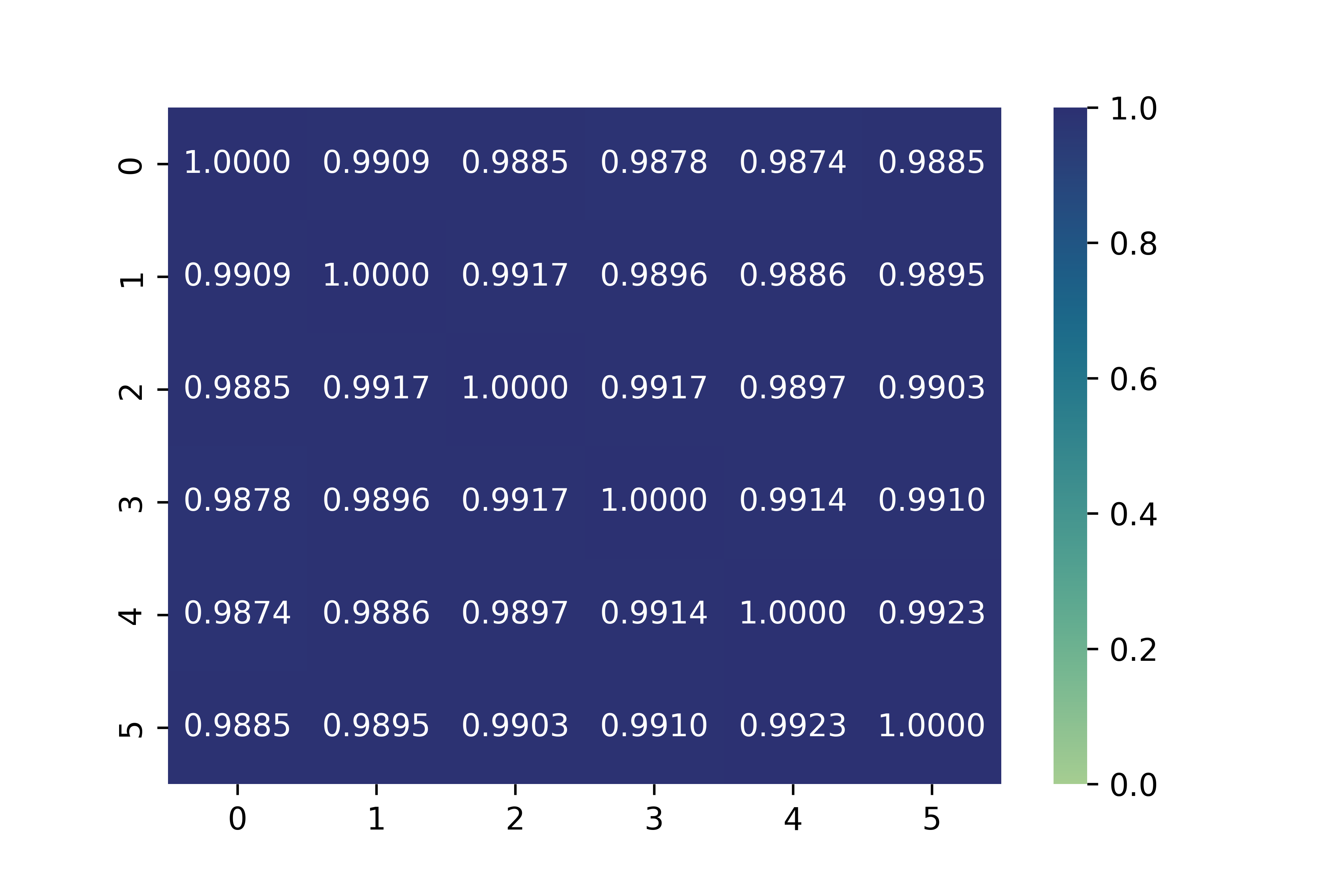}
    \caption{FFN:W1}
    \label{fig:cosines_cosreg_ffn_w1}
    \end{subfigure}\hfill
\centering
    \begin{subfigure}{0.33\textwidth}
    \centering   
     \includegraphics[scale=0.36]{Media/Heatmap.Model_Pile_CosReg10_blk4.VariableKind_self_attention.post.w}
    \caption{Attn:PostNorm}
    \label{fig:cosines_cosreg_postnorm}
    \end{subfigure}\hfill
    \caption{\looseness-1Cosine similarities for different layers in the model trained with the regularization strength $\lreg=10$ for block size $k=4$ (see \Cref{sec:regularization} for details). The 24 layer model will have 6 such blocks of size 4. The heatmap above shows the cosine similarities between weights for all pairs of blocks. Overall we find the final cosine similarity to be very high, thus suggesting a strong connection to looped models.}
    \label{fig:cosines_cosreg}
\end{figure*}

\begin{figure*}[tbp]% [H] is so declass\'e!
\centering
    \hspace{-0.1in}
    \begin{subfigure}{0.33\textwidth}
    \centering    \includegraphics[scale=0.36]{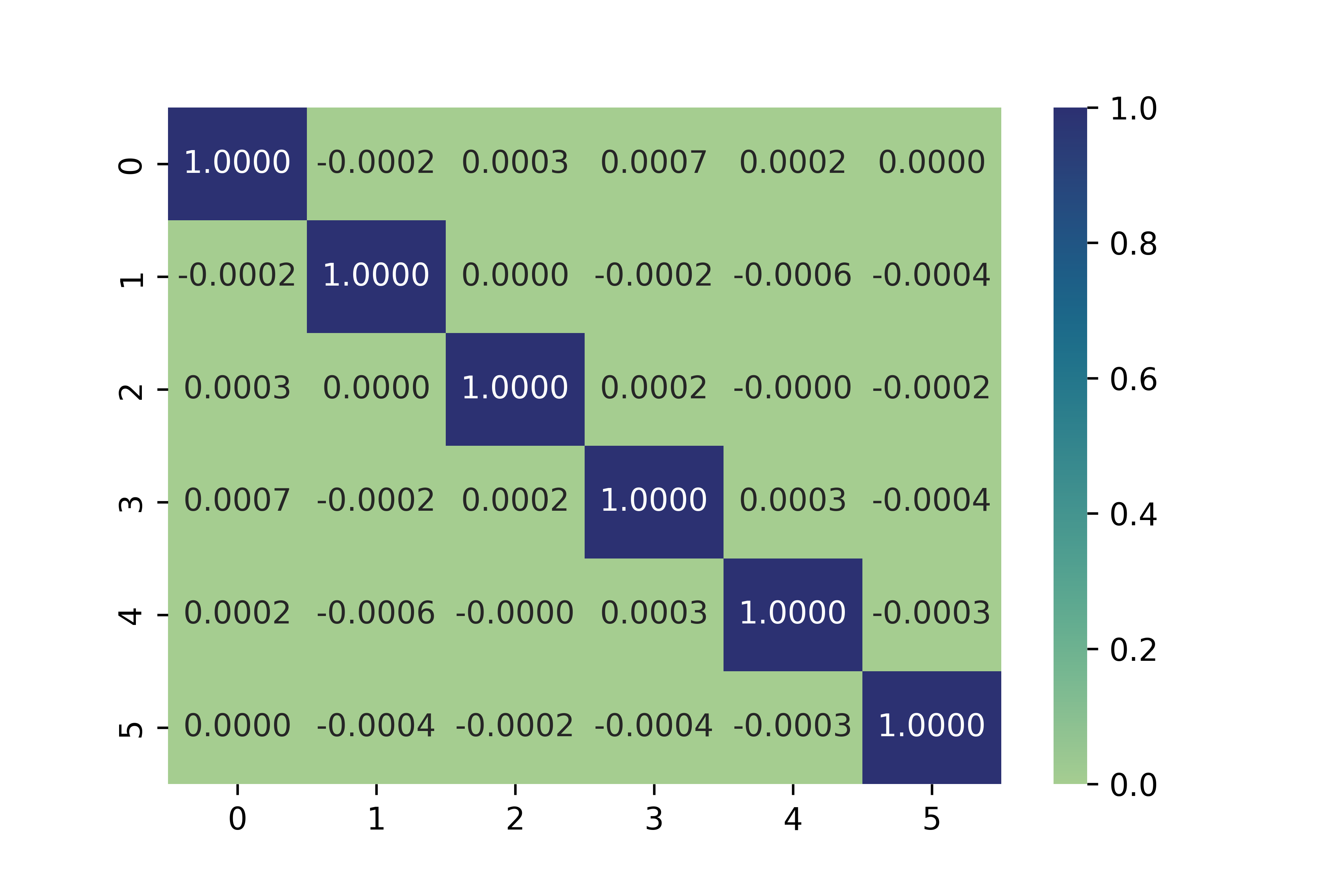}
    \caption{Attn:Q}
    \label{fig:cosines_baseline_attn_q}
    \end{subfigure}\hfill
\centering
    \begin{subfigure}{0.33\textwidth}
    \centering    \includegraphics[scale=0.36]{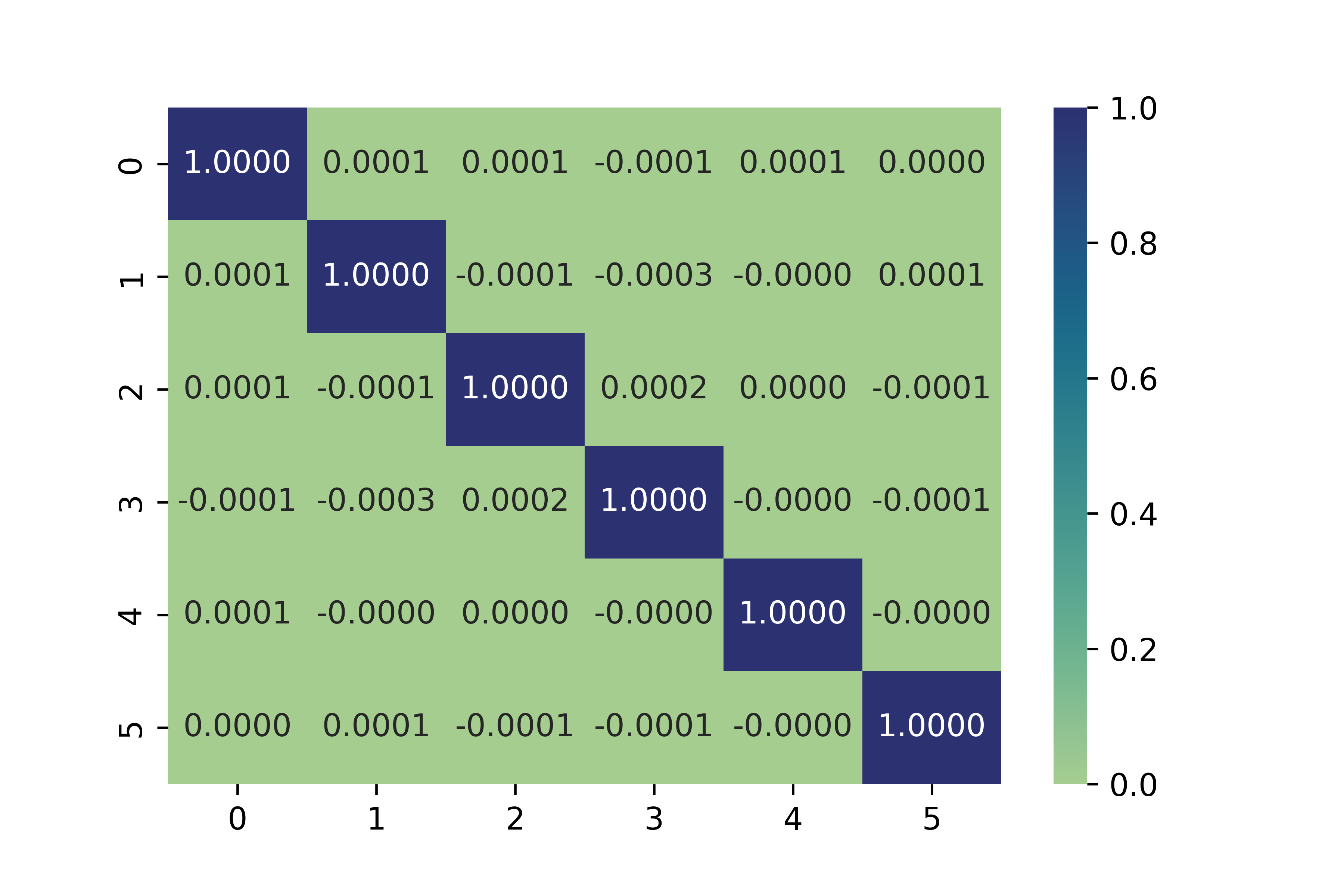}
    \caption{FFN:W1}
    \label{fig:cosines_baseline_ffn_w1}
    \end{subfigure}\hfill
\centering
    \begin{subfigure}{0.33\textwidth}
    \centering   
     \includegraphics[scale=0.36]{Media/Heatmap.Model_Pile_Baseline_blk4.VariableKind_self_attention.post.w.png}
    \caption{Attn:PostNorm}
    \label{fig:cosines_baseline_postnorm}
    \end{subfigure}\hfill
    \caption{\looseness-1Cosine similarities for different layers in the baseline model trained without any regularization strength. Overall the cosine similarities are very low for large matrices, as expected for high dimensions matrices.}
    \label{fig:cosine_similarity}
\end{figure*}

For the $p$-hop problem we sample instances randomly while enforcing that there always exists $p$-hops present in the input sequence. We do this by first picking the sequence of $p$-hops randomly and then shuffling them around in a sequence with filler tokens to be filled by the remaining characters. After the shuffle, we sample the remaining characters to occur in place of the filler tokens while respecting the $p$-hop order. Our train set consists of 4M examples and our test and validation sets consist of 262k examples each.
For all models we train on this dataset, the model dimension is 128, hidden dimension is 512 and 8 attention heads are used. Rotary positional encodings are used as well. We train using Adafactor for 200,000 steps with a batch size of 256 using a base learning rate of $10^{-3}$ and use a linear warmup coupled with a cosine decay schedule for the learning rate.

\paragraph{i-GSM.}
\label{sec:apx_igsm}
We describe how the i-GSM dataset is generated in more detail here.
We start with a hierarchy of entities of depth 4 from which we build a randomly sampled structure graph where directed edges connect entities in level $i$ to those in level $i+1$. Each edge in the structure graph defines a instance parameter which is an integer (for e.g. an edge between city center and car parks denotes the number of car parks in city center). We then construct a randomly sampled mathematical dependency graph which is a DAG over all the instance parameters by relating each to the others. Finally we pick one of the nodes of the dependency graph to query and the goal is to compute the numerical value of this node modulo some prime number $P$. For more details on the sampling process for the structure and dependency graphs, we refer the reader to \cite{ye2024physics}. We make 3 simplifications compared to \cite{ye2024physics}: we phrase our problems in a symbolic language without the English construction of sentences (see \Cref{fig:sample-eigsm-problem}); we do not allow abstract parameters in our problems; we perform arithmetic modulo $7$ as opposed to $23$.
Our train dataset consists of around 4 million examples and we test on around 50k examples. Given the significantly larger number of unique solution templates possible, train-test overlap in the problem template space is going to be limited with high probability. 
For all models we train on this dataset, the model dimension is 128, hidden dimension is 512 and 8 attention heads are used. Rotary positional encodings are used as well. We train using Adafactor for 200,000 steps with a batch size of 256 using a base learning rate of $10^{-3}$ and use a linear warmup coupled with a cosine decay schedule for the learning rate.

\begin{figure*}[!tbp]% [H] is so declass\'e!
\centering
    \begin{subfigure}{0.24\textwidth}
    \centering    \includegraphics[width=\textwidth]{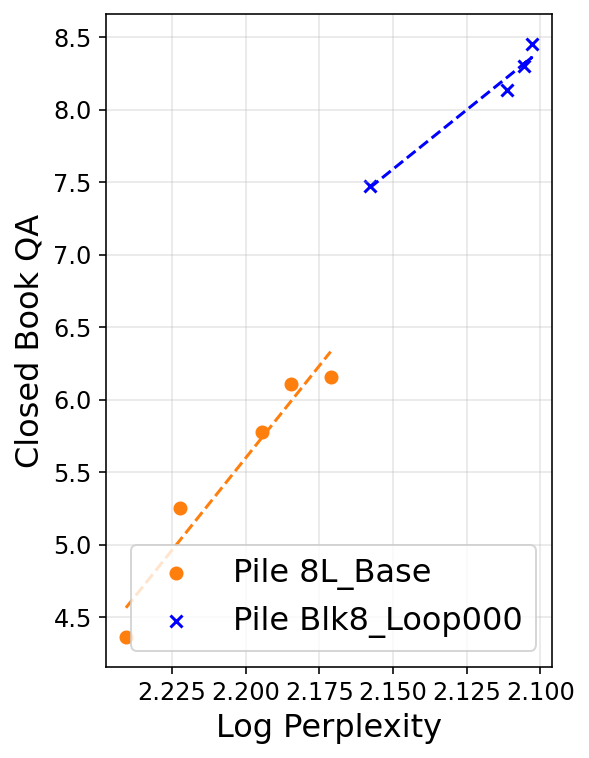}
    % \caption{}
    \label{fig:8L_closedQA}
    \end{subfigure}\hfill
    \centering
    \begin{subfigure}{0.23\textwidth}
\centering    
    \includegraphics[width=\textwidth]{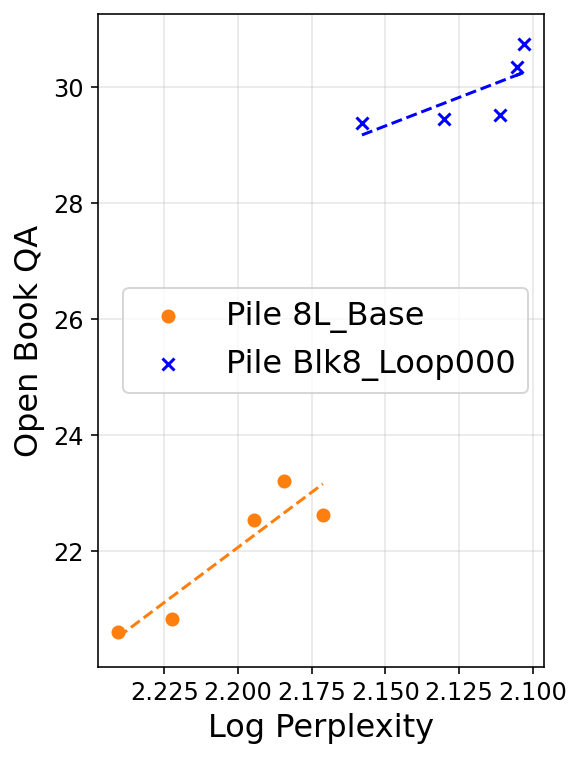}
    % \caption{}
    \label{fig:8L_openQA}
    \end{subfigure}\hfill
\centering
    \begin{subfigure}{0.23\textwidth}
    \centering    \includegraphics[width=\textwidth]{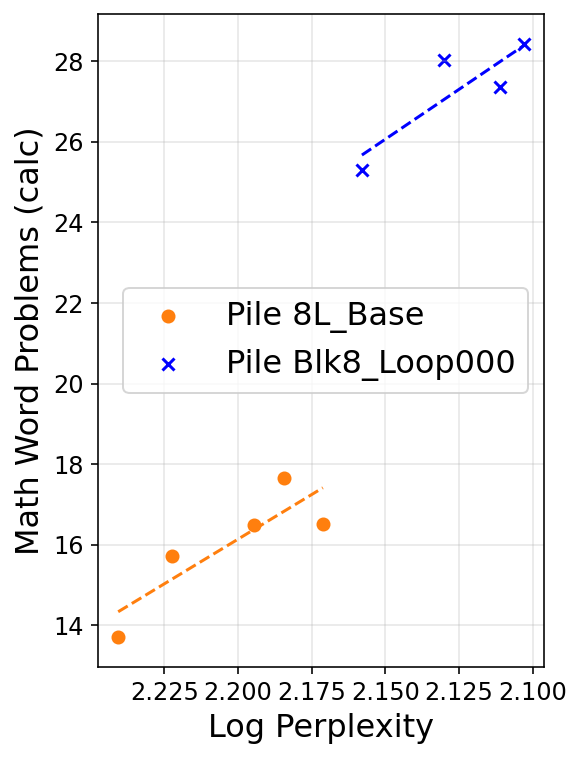}
    % \caption{}
    \label{fig:8L_mwp}
    \end{subfigure}\hfill
\centering
    \begin{subfigure}{0.23\textwidth}
    \centering    \includegraphics[width=\textwidth]{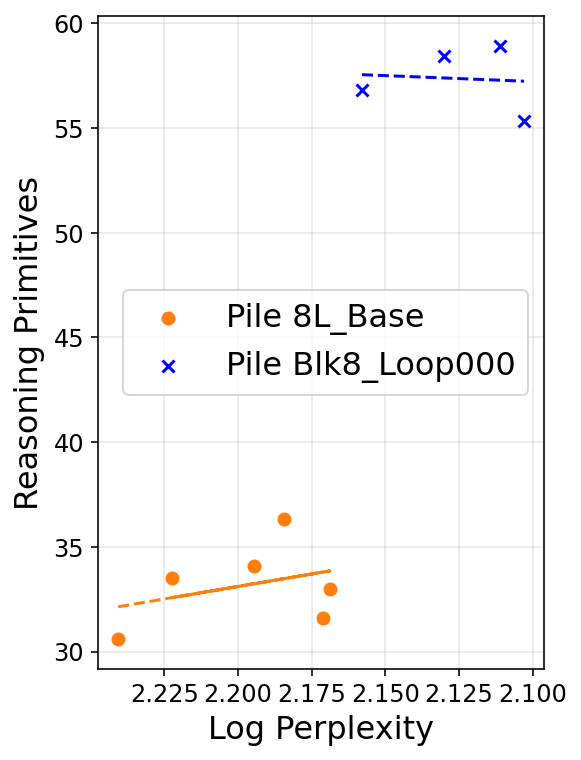}
    % \caption{}
    \label{fig:8L_primitives}
    \end{subfigure}\hfill
    \caption{\looseness-1Downstream evaluation for various task groups on the x-axis, vs validation log perplexity on the y-axis (reversed), as training proceeds. The top plots compare a 8-layer baseline model \loopy{8}{1} and the looped model \loopy{8}{3}.
    Similarly to \Cref{fig:iso_plots_taskgroups_12L}, for closed book QA (memorization) tasks looping has very similar trends to baseline. For open book QA tasks and math word problems, looping has much better downstream performance at an equivalent log perplexity.}
    \label{fig:iso_plots_taskgroups_8L}
\end{figure*}

\subsection{Language modeling setup}
\label{sec:apx_language_modeling}

We train on the Pile data using causal language modeling objective. The dataset is pre-processed and cached to ensure that all models are trained on exactly the same tokens in the same order.
For all experiments, we use a batch size of 512 and sequence length of 1280.
We use a cosine learning rate schedule decaying over 400k steps with a peak learning rate of 0.01, tuned based on the baseline model.
The base model is a 1.5B parameter decoder only Transformer model, with 24 layers, model dimensions of 2048, hidden dimension 5120 and 32 heads.
For the shallower baselines and looped models, we only change the number of layers and keep all other hyperparameters the same.

\subsection{Results for each task group}

In \Cref{sec:language_modeling} we discussed results for various task groups like closed book QA, math word problems etc. Here we present results for each individual task for completeness.
Detailed results for closed book QA are in \Cref{table:main_closedqa_results}, open book QA in \Cref{table:main_openqa_results}, math word problems in \Cref{table:main_mwp_results} and reasoning primitives in \Cref{table:main_primitives_results}.
These tables include, both, looped models from \Cref{table:language_modeling_results} and the models trained with regularization from \Cref{table:regularization_results}.

\begin{table}[!tbp]
\centering
\caption{\looseness-1Downstream evaluations for the models in \Cref{table:language_modeling_results} for closed book QA tasks.}
\label{table:main_closedqa_results}
\scalebox{0.75}{
\begin{tabular}{lc|cccc|c}
\toprule
 & Params / FLOPs & TriviaQA & TydiQA- & Natural & Web & Average\\
 &  &  & NoContext & Questions & Questions & Average\\
\midrule
\hline
Baseline & 24x / 24x & 24.5 & 10.6 & 4.3 & 5.6 & 11.2 \\
\hline
Base \loopy{12}{1} & 12x / 12x & 15.9 & 9.3 & 2.4 & 5.2 & 8.2 \\
Loop \loopy{12}{2} & 12x / 24x & 18.6 & 9.6 & 3.4 & 5.8 & 9.3 \\
Regularized $(k=12)$ & 24x / 24x & 20.9 & 10.9 & 3.6 & 5.0 & 10.1 \\
\hline
Base \loopy{8}{1} & 8x / 8x & 12.3 & 7.4 & 1.8 & 3.9 & 6.3 \\
Loop \loopy{8}{3} & 8x / 24x & 17.3 & 8.8 & 2.5 & 5.2 & 8.5 \\
Regularized $(k=8)$ & 24x / 24x & 23.6 & 10.6 & 4.1 & 6.7 & 11.3 \\
\hline
Base \loopy{6}{1} & 6x / 6x & 7.4 & 4.8 & 1.1 & 2.7 & 4.0 \\
Loop \loopy{6}{4} & 6x / 24x & 16.0 & 9.3 & 2.7 & 4.9 & 8.2 \\
Regularized $(k=6)$ & 24x / 24x & 25.8 & 12.2 & 4.5 & 5.6 & 12.0 \\
\hline
Base \loopy{4}{1} & 4x / 4x & 3.3 & 1.9 & 0.6 & 1.5 & 1.8 \\
Loop \loopy{4}{6} & 4x / 24x & 11.8 & 8.8 & 2.4 & 3.5 & 6.7 \\
Regularized $(k=4)$ & 24x / 24x & 26.1 & 13.3 & 4.5 & 6.2 & 12.5 \\
\end{tabular}
}
\end{table}

\begin{table}[!tbp]
\centering
\caption{\looseness-1Downstream evaluations for the models in \Cref{table:language_modeling_results} for open book QA tasks.}
\label{table:main_openqa_results}
\scalebox{0.75}{
\begin{tabular}{lc|ccccc|c}
\toprule
 & Params / FLOPs & TydiQA-NoContext & SquadV2 & DROP & QuAC & CoQA & Average\\
\midrule
\hline
Baseline & 24x / 24x & 33.4 & 41.3 & 23.4 & 18.6 & 52.7 & 33.9 \\
\hline
Base \loopy{12}{1} & 12x / 12x & 21.8 & 34.6 & 20.0 & 17.3 & 41.1 & 26.9 \\
Loop \loopy{12}{2} & 12x / 24x & 27.7 & 39.5 & 22.3 & 17.6 & 46.8 & 30.8 \\
Regularized $(k=12)$ & 24x / 24x & 33.0 & 44.3 & 23.8 & 17.7 & 51.9 & 34.1 \\
\hline
Base \loopy{8}{1} & 8x / 8x & 12.7 & 33.8 & 16.0 & 15.3 & 35.9 & 22.7 \\
Loop \loopy{8}{3} & 8x / 24x & 29.8 & 38.1 & 21.6 & 17.6 & 46.6 & 30.8 \\
Regularized $(k=8)$ & 24x / 24x & 33.0 & 41.7 & 25.2 & 19.3 & 52.9 & 34.4 \\
\hline
Base \loopy{6}{1} & 6x / 6x & 8.2 & 26.8 & 14.8 & 14.4 & 32.5 & 19.3 \\
Loop \loopy{6}{4} & 6x / 24x & 26.6 & 34.6 & 20.8 & 18.1 & 43.7 & 28.7 \\
Regularized $(k=6)$ & 24x / 24x & 34.5 & 46.1 & 26.2 & 18.6 & 53.4 & 35.8 \\
\hline
Base \loopy{4}{1} & 4x / 4x & 3.4 & 19.0 & 11.1 & 13.1 & 22.3 & 13.8 \\
Loop \loopy{4}{6} & 4x / 24x & 22.0 & 32.4 & 20.1 & 15.8 & 40.7 & 26.2 \\
Regularized $(k=4)$ & 24x / 24x & 33.6 & 49.4 & 24.1 & 19.8 & 54.0 & 36.2 \\

\end{tabular}
}
\end{table}

\begin{table}[!tbp]
\centering
\caption{\looseness-1Downstream evaluations for the models in \Cref{table:language_modeling_results} for math word problems.}
\label{table:main_mwp_results}
\scalebox{0.75}{
\begin{tabular}{lc|cccccc|c}
\toprule
 & Params / FLOPs & ASDiv & MAWPS- & MAWPS- & MAWPS- & MAWPS- & SVAMP & Average\\
 &  &  & AddSub & MultiArith & SingleEq & SingleOp &  & \\
\midrule
\hline
Baseline & 24x / 24x & 26.9 & 49.9 & 2.7 & 38.6 & 40.2 & 17.9 & 29.3 \\
\hline
Base \loopy{12}{1} & 12x / 12x & 23.1 & 44.8 & 2.0 & 36.8 & 37.9 & 15.3 & 26.7 \\
Loop \loopy{12}{2} & 12x / 24x & 31.5 & 55.9 & 2.0 & 47.4 & 50.2 & 18.5 & 34.3 \\
Regularized $(k=12)$ & 24x / 24x & 27.4 & 54.4 & 2.7 & 43.7 & 45.9 & 19.8 & 32.3 \\
\hline
Base \loopy{8}{1} & 8x / 8x & 12.9 & 32.7 & 2.0 & 19.9 & 21.9 & 13.4 & 17.1 \\
Loop \loopy{8}{3} & 8x / 24x & 23.2 & 54.9 & 2.3 & 36.2 & 38.3 & 15.6 & 28.4 \\
Regularized $(k=8)$ & 24x / 24x & 31.0 & 56.7 & 3.5 & 40.7 & 47.5 & 17.3 & 32.8 \\
\hline
Base \loopy{6}{1} & 6x / 6x & 15.4 & 31.1 & 1.3 & 21.3 & 26.3 & 10.9 & 17.7 \\
Loop \loopy{6}{4} & 6x / 24x & 24.5 & 51.6 & 0.7 & 38.0 & 47.7 & 16.3 & 29.8 \\
Regularized $(k=6)$ & 24x / 24x & 32.0 & 47.1 & 3.5 & 44.9 & 42.0 & 16.8 & 31.0 \\
\hline
Base \loopy{4}{1} & 4x / 4x & 7.9 & 10.4 & 1.5 & 11.0 & 19.0 & 8.3 & 9.7 \\
Loop \loopy{4}{6} & 4x / 24x & 19.9 & 49.1 & 1.7 & 29.3 & 35.9 & 12.6 & 24.8 \\
Regularized $(k=4)$ & 24x / 24x & 35.0 & 53.9 & 3.2 & 52.2 & 50.9 & 23.0 & 36.4 \\

\end{tabular}
}
\end{table}

\begin{table}[!tbp]
\centering
\caption{\looseness-1Downstream evaluations for the models in \Cref{table:language_modeling_results} for reasoning primitives.}
\label{table:main_primitives_results}
\scalebox{0.75}{
\begin{tabular}{lc|cccc|c}
\toprule
 & Params / FLOPs & Code & Math & Code & Math & Average\\
 &  &  Depth-0 & Depth-0 & Depth-1 & Depth-1  & \\
\midrule
\hline
Baseline & 24x / 24x & 71.1 & 72.5 & 24.2 & 22.3 & 47.5 \\
\hline
Base \loopy{12}{1} & 12x / 12x & 52.2 & 51.6 & 20.7 & 18.3 & 35.7 \\
Loop \loopy{12}{2} & 12x / 24x & 73.5 & 86.5 & 21.3 & 23.6 & 51.2 \\
Regularized $(k=12)$ & 24x / 24x & 75.1 & 74.9 & 27.0 & 25.7 & 50.7 \\
\hline
Base \loopy{8}{1} & 8x / 8x & 48.7 & 42.0 & 21.4 & 19.8 & 33.0 \\
Loop \loopy{8}{3} & 8x / 24x & 88.4 & 86.9 & 23.1 & 22.9 & 55.3 \\
Regularized $(k=8)$ & 24x / 24x & 92.2 & 86.7 & 23.1 & 23.3 & 56.3 \\
\hline
Base \loopy{6}{1} & 6x / 6x & 26.3 & 29.7 & 20.2 & 20.1 & 24.1 \\
Loop \loopy{6}{4} & 6x / 24x & 90.6 & 88.1 & 24.1 & 21.7 & 56.1 \\
Regularized $(k=6)$ & 24x / 24x & 84.0 & 79.7 & 31.4 & 28.3 & 55.8 \\
\hline
Base \loopy{4}{1} & 4x / 4x & 19.3 & 23.3 & 17.3 & 17.6 & 19.4 \\
Loop \loopy{4}{6} & 4x / 24x & 87.9 & 90.0 & 24.8 & 24.8 & 56.9 \\
Regularized $(k=4)$ & 24x / 24x & 88.0 & 86.9 & 27.9 & 25.8 & 57.2 \\

\end{tabular}
}
\end{table}

\section{Theoretical results}\label{sec:appendix_proof}

\subsection{Detailed notations}
\label{sec:apx_detailed_notation}

\begin{definition}[Embedding Layer]\label{defi:embedding_layer}
Given a finite vocabulary $\mathcal{V}$, embedding dimension $d\in\mathbb{N}^+$, token embedding parameter $\theta_{\tokenembedding}\in\mathbb{R}^{d\times \abs{\mathcal{V}}}$ and position embedding parameter $\theta_{\posencoding}\in \mathbb{R}^{d\times n_{\max}}$, we define the \emph{embedding layer} as a sequence-to-sequence map, denoted by $\embed_{\theta_\tokenembedding,\theta_\posencoding}:\mathcal{V}^n\to (\mathbb{R}^d)^n$ for any $1\le n\le n_{\max}$, where
\begin{equation}
    \embed_{\theta_\tokenembedding,\theta_\posencoding}(v_1,\ldots,v_n) = \left(\theta_\tokenembedding(v_1)+ \theta_\posencoding(1), \ldots, \theta_\tokenembedding(v_n)+ \theta_\posencoding(n)  \right). 
\end{equation}
\end{definition}

\paragraph{Multi-Head Self-Attention Mechanism:} Given attention parameters $\theta_\attn = \{W_Q, W_K, W_V, W_O\}$, where each $W_Q^m, W_K^m, W_V^m, W_O^m \in \mathbb{R}^{d_{\attn} \times d }$, we define the \emph{Self-Attention} layer with a causal mask for a decoder-only transformer in \Cref{alg:defi_attn}. We also define a \emph{Multi-Head Attention} layer as a collection of self-attention layer with non-shared parameters $\theta_\mha = \{\theta_{\attn}^{(h)}\}_{h=1}^H$, and its output is the sum of the outputs from each head. That is, $\mha_{\theta_\mha} = \sum_{h=1}^H \attn_{\theta_\attn^{(h)}}$.\footnote{Though in this paper we focus on attention with casual mask, our definition of looped transformer generalizes to the cases with other attention masks.}

\begin{algorithm}
    \caption{Causal Self-Attention, $\attn$}\label{alg:defi_attn}
    \begin{algorithmic}[1]
    \Require  Parameter $\theta_\attn = (W_Q,W_K,W_V,W_O)$, Input embedding $ x_1,\ldots, x_n)\in \left(\mathbb{R}^{d}\right)^n$.
    \Ensure Output embedding $x'= (x'_1,\ldots, x'_n) \triangleq \attn_{\theta_\attn}(x_1,\ldots, x_n)$.
    \State $q_i \triangleq W_Q  x_i, k_i \triangleq W_K x_i, v_i \triangleq W_V x_i, \forall i\in[n]$
    \State $s_i \triangleq \softmax(\inner{q_i}{k_1},\ldots,\inner{q_i}{k_i}) \| (0,\ldots, 0) $. 
    \State $h'_i \triangleq W_O^\top \sum_{j=1}^n (s_i)_j v_j$.
    \end{algorithmic}
    \end{algorithm}
\paragraph{Feed-Forward Network:} Given the parameters of the fully-connected feedforward network layer $\theta_\ff = (W_1, b_1, W_2, b_2) \in \mathbb{R}^{x_{\ff} \times d} \times \mathbb{R}^{d_{\ff}} \times \mathbb{R}^{d \times d_{\ff}} \times \mathbb{R}^{d_{\ff}}$, we define the feedforward layer $\ff_{\theta_\ff}: \mathbb{R}^{d} \to \mathbb{R}^{d}$ as $\ff_{\theta_{\ff}}(h) \triangleq W_2, \relu(W_1 h + b_1) + b_2$.

\iffalse
\begin{definition}[Transformer Block]\label{defi:transformer_block}
Given number of layers $L\in\mathbb{N}^+$ and parameter $\theta_\tfblock = (\theta^{(l)}_\mha,\theta^{(l)}_\ff )_{l=0}^{L-1}$, we define the $L$-layer transformer block $\tfblock_{\theta_\tfblock}:(\mathbb{R}^d)^n\to (\mathbb{R}^d)^n$ for any $n\in\mathbb{N}^+$ as \begin{equation}
    \tfblock_{\theta_\tfblock} \triangleq (\id+ \ff_{\theta^{(L-1)}_\ff})\circ (\id+ \mha_{\theta^{(L-1)}_\mha})\circ \cdots (\id+ \ff_{\theta^{(0)}_\ff})\circ (\id+ \mha_{\theta^{(0)}_\mha})
\end{equation}
\end{definition}
\fi

\begin{definition}[Output Layer]\label{defi:output_layer}
    Given parameter $\theta_\transoutput \in\mathbb{R}^{d\times |\mathcal{V}|}$, we denote the output layer as $\transoutput_{\theta_\transoutput}:(\mathbb{R}^d)^n\to \Delta^{|\mathcal{V}|-1}$, where 
    \begin{align}
        \transoutput_{\theta_\transoutput}(x_1,\ldots,x_n) \triangleq \softmax(x_n^\top \theta_{\transoutput})
    \end{align}
\end{definition}

Finally, we define the entire transformer model $p_{\theta}: \cup_{n\le n_{\max}} \mathcal{V}^n\to \Delta^{|\mathcal{V}|-1}$ as 
\begin{align}
p_{\theta}\triangleq \transoutput_{\theta_\transoutput}\circ \tfblock_{\theta_\tfblock}\circ \embed_{\theta_\tokenembedding,\theta_\posencoding}
\end{align}
for any $\theta = (\theta_\tfblock,\theta_\tokenembedding,\theta_\posencoding,\theta_\transoutput)$.For convenience, we also write $\left[p_{\theta}(v_1,\ldots,v_n)\right](v)$ as $p_{\theta}(v\mid v_1,\ldots,v_n)$.
In particular, we use $\transformer_{\theta}(v_1,\ldots,v_n) \triangleq \argmax_{v\in\mathcal{V}} p_{\theta}(v|v_1,\ldots,v_n)$ to denote the deterministic version of the transformer model. 

\iffalse
\begin{definition}[Looped Transformer]\label{defi:looped_transformer}
Given number of loops $T\in\mathbb{N}^+$,  parameters $\theta = (\theta_{\tfblock},\theta_\tokenembedding,\theta_\posencoding,\theta_\transoutput)$, where $\theta_{\transformer} = (\theta^{(l)}_\mha,\theta^{(l)}_\ff )_{l=0}^{L-1}$, we define the \emph{looped transformer} as  $p_{\theta,T}\triangleq \transoutput_{\theta_\transoutput}\circ \left(\tfblock_{\theta_\tfblock}\right)^T\circ \embed_{\theta_\tokenembedding,\theta_\posencoding}$ and the corresponding deterministic version as $\transformer_{\theta,T}(v_1,\ldots,v_n) \triangleq \argmax_{v\in\mathcal{V}} p_{\theta,T}(v|v_1,\ldots,v_n)$.
\end{definition}
\fi

\paragraph{Finite-precision Modeling:} In this paper we assume the transformer is of finite precision. More specifically, we follow the setting in \citet{li2024chain} and use the shorthand $\Floating_{s}\triangleq \{c\cdot k\cdot 2^{-s}\mid c\in \{-1,1\}, 0\le k\le 2^{2s}-1, k\in\mathbb{N}\}$ to denote fixed-point numbers of constant precision $s$ and rounding operation $\rds{\cdot}:\mathbb{R}\to \Floating_s$ to denote the correcting rounding, namely the mapping from $\mathbb{R}$ to the closest representable number in $\Floating_s$. (We break the tie by picking the number with smaller absolute value). We assume that (1). all the parameters of the transformer are in $\Floating_{s}$ and (2). correct rounding is performed after every binary operation in the forward pass of the transformer. We will refer the readers to \citet{li2024chain} for detailed discussion on such finite-precision modeling and only list important notations and lemmas that will be used in this paper below. 

We use $1_s$ to denote all-one vectors of length $s$. Similarly we define $\inner{\cdot}{\cdot}_s$, $\times_s$, and $\softmax_s$. We recall that for any $s\in \mathbb{N}^+$ and integer $0\le x\le 2^s-1$, we use $\bin_s(x)\in\{0,1\}^s$ to denote the usual binary encoding of integer $x$ using $s$ binary bits in the sense that $x = \sum_{i=1}^s 2^i (\bin_s(x))_i$ and $\sbin_s(x)\in\{-1,1\}^s$ to denote the signed binary encoding, which is $2\bin_s(x)-(1,\ldots,1)$. Finally we define $B_s = \max \Floating_s = 2^s-2^{-s}$.

%
%\begin{proof}[Proof of \Cref{lem:cot_1_is_squal_to_O1}]
%	This is a special case of \Cref{thm:cot_dpolyn_slogn_main}.
%\end{proof}

\begin{lemma}\label{lem:exp_rounding}[Lemma E.1, \citep{li2024chain}]
	For any $s\in\mathbb{N}^+$, it holds that $\rds{\exp(-B_s)} = 0$.
\end{lemma}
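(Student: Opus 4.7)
The plan is to show that $\exp(-B_s)$ lies below the rounding threshold for $0$ in $\Floating_s$. Since $\Floating_s = \{\pm k \cdot 2^{-s} : 0 \le k \le 2^{2s}-1\}$, the closest positive element to $0$ is $2^{-s}$, with midpoint $2^{-s-1}$. Under the stated tie-breaking rule (ties go to the element of smaller absolute value), we have $\rds{x} = 0$ for every $x \in [0, 2^{-s-1}]$. So it suffices to establish the clean inequality $\exp(-B_s) \le 2^{-s-1}$.

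Next I would reduce this to a purely arithmetic statement by taking logarithms: the goal becomes
\begin{equation*}
B_s = 2^s - 2^{-s} \ge (s+1)\ln 2.
\end{equation*}
This is an exponential-vs-linear comparison and should be easy. I would check the base case $s=1$ explicitly: the left side equals $2 - \tfrac{1}{2} = 1.5$ and the right side equals $2\ln 2 \approx 1.386$, so the inequality holds with room to spare.

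For the inductive step (or direct monotonicity argument), I would show the gap $g(s) = 2^s - 2^{-s} - (s+1)\ln 2$ is increasing in $s \ge 1$. Viewing $s$ as a real parameter, $g'(s) = (2^s + 2^{-s})\ln 2 - \ln 2 = \ln 2\cdot (2^s + 2^{-s} - 1) > 0$ for all $s \ge 1$, so $g(s) \ge g(1) > 0$. (Equivalently, one can argue by induction: $2^{s+1} - 2^{-s-1} - ((s+2)\ln 2) - (2^s - 2^{-s} - (s+1)\ln 2) = 2^s - 2^{-s-1}(1 + 2^{-1}\cdot \text{something}) - \ln 2 > 0$ for $s \ge 1$.) Combining these two steps proves the inequality $B_s \ge (s+1)\ln 2$ for every $s \in \mathbb{N}^+$, which gives $\exp(-B_s) \le 2^{-s-1}$, and hence $\rds{\exp(-B_s)} = 0$ by the rounding threshold observation.

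There is essentially no obstacle here — the lemma is a routine sanity check that constant precision $s$ is enough to make $\exp(-B_s)$ underflow. The only mildly subtle point is being careful about the tie-breaking convention so that the threshold is taken to be $2^{-s-1}$ with equality allowed (rather than strict inequality); but since the inequality above is strict with a large margin, this detail does not actually bind.
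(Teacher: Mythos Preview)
Your argument is correct: identifying the rounding threshold $2^{-s-1}$, reducing to the inequality $2^s - 2^{-s} \ge (s+1)\ln 2$, and verifying this via the base case $s=1$ together with monotonicity of $g(s)=2^s-2^{-s}-(s+1)\ln 2$ is exactly the right approach. The paper does not give its own proof of this lemma at all---it simply cites it as Lemma~E.1 of \citet{li2024chain}---so there is nothing to compare against beyond noting that your direct computation is the natural (and essentially only) way to check this underflow fact. One cosmetic remark: the parenthetical ``equivalently, by induction'' line is garbled and not needed, since the derivative argument $g'(s)=\ln 2\,(2^s+2^{-s}-1)>0$ already settles monotonicity cleanly.
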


\begin{lemma}\label{lem:exp_rounding_up}[Lemma E.2, \citep{li2024chain}]
	For any $s\in\mathbb{N}^+$, it holds that $\rds{\exp(B_s)} = B_s$.
\end{lemma}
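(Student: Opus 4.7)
The claim is that $\rds{\exp(B_s)}=B_s$, where $B_s = 2^s - 2^{-s} = \max \Floating_s$. The plan is to reduce the statement to a one-line monotonicity fact about $\exp$ combined with the elementary observation that any real number exceeding $\max \Floating_s$ has $B_s$ as its unique nearest neighbour in $\Floating_s$.

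First, I would recall from the definition that $\Floating_s = \{c\cdot k\cdot 2^{-s} : c\in\{-1,1\},\ 0\le k\le 2^{2s}-1\}\subset [-B_s,B_s]$, so that $B_s$ is literally the largest element of $\Floating_s$. Second, I would invoke the standard inequality $\exp(x)>x$ valid for every real $x$ (immediate from the power series $\exp(x)=1+x+x^2/2+\cdots\ge 1+x > x$ for $x\ge 0$, and trivial for $x<0$ since $\exp(x)>0>x$). Applying this with $x=B_s$ yields $\exp(B_s)>B_s\ge 0$, so in particular $\exp(B_s)$ lies strictly above the top of the representable range $[-B_s,B_s]$.

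Third, I would show that for any real $y>B_s$, the correctly-rounded value $\rds{y}$ equals $B_s$. The argument is: for every $z\in\Floating_s$ we have $z\le B_s< y$, hence $|y-z| = y-z \ge y-B_s = |y-B_s|$, with equality iff $z=B_s$. Thus $B_s$ is the unique closest element of $\Floating_s$ to $y$, so the tie-breaking rule does not even need to be invoked, and $\rds{y}=B_s$. Specializing to $y=\exp(B_s)$ gives $\rds{\exp(B_s)}=B_s$, as desired.

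The proof involves no real obstacle; the only step that deserves a brief comment is ensuring that the correct-rounding convention (which in general is defined through closest-representable plus a tie-break on equal absolute value) behaves as expected at the upper boundary of $\Floating_s$. Because $\exp(B_s)-B_s>0$ is strictly smaller than the distance from $\exp(B_s)$ to any other element of $\Floating_s\subset[-B_s,B_s]$, there is no tie and nothing subtle to check. This mirrors the companion result \Cref{lem:exp_rounding}, where the analogous argument uses $\exp(-B_s)\in(0,2^{-s-1}]$ together with $0\in\Floating_s$ to conclude rounding to $0$.
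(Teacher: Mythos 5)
Your argument is correct: since $\exp(B_s) > B_s = \max\Floating_s$, every representable value lies weakly below $B_s$, so $B_s$ is the unique nearest element of $\Floating_s$ and the correct-rounding map saturates to it. The paper itself does not reprove this lemma (it is imported verbatim from \citet{li2024chain}), and your saturation argument is the standard one for this fact, so there is nothing further to reconcile.
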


\begin{lemma}\label{lem:FB_simulating_boolean_gates}[Lemma E.5, \citep{li2024chain}]
Unlimited-fanin $\AND,\OR$ (resp. $\MAJORITY) :\{0,1\}^n\to \{0,1\}$ can be simulated by some 2-layer feedforward ReLU network with constant (resp. $\log n$) bits of precision constant hidden dimension and additional $n$ constant inputs of value 1. 

Mathematically, let $\ff[s(n)]$ be the set of functions $C:\{0,1\}^n\to\{0,1\}$ which can be a two-layer feedforward ReLU network with at most $s(n)$ bits of precision and constant hidden dimension $\ff_{\theta}:\{0,1\}^{2n}\to \{0,1\}, \ff_{\theta}(x') = W_2\times_s\relu(\rds{W_1\times_s x'+b_1})$, where $\theta = (W_2,W_1,b_1)$, such that for any $x\in\{0,1\}^n$, 
\begin{align}
\ff_\theta(x_1,1,x_2,1,\ldots,x_n,1) = C(x).	
\end{align}
We have unlimited-fanin $\AND,\OR\in \ff[1]$ and $\MAJORITY\in \ff[\log n]$.  
\end{lemma}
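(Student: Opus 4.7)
\textbf{Proof plan for \Cref{lem:FB_simulating_boolean_gates}.} The plan is to give explicit two-layer constructions, one per gate, and then verify correctness under the finite-precision semantics described around \Cref{lem:exp_rounding,lem:exp_rounding_up}, where a matrix--vector product $W \times_s v$ is evaluated exactly and then its entries are correctly rounded (with saturation) to $\Floating_s$. The core idea in all three cases is to replace the ``large integer'' bias $n$ or $\lceil n/2\rceil$, which cannot be stored in a constant-precision weight, by weights of $\pm 1$ applied to the extra constant-$1$ inputs $(x_i,1)_{i=1}^n$ provided by the lemma statement.

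First I will handle $\AND$. Take a single hidden unit with weight $+1$ on each $x_i$ and weight $-1$ on $n-1$ of the constant coordinates (zero on the remaining one), bias $b_1=0$; then set $W_2 = 1$, $b_2 = 0$. The unrounded pre-activation is $\sum_i x_i - (n-1)$, which equals $1$ when every $x_i = 1$ and is $\le 0$ otherwise. I then argue that rounding to $\Floating_s$ for any $s \ge 1$ followed by $\relu$ still returns $1$ in the first case and $0$ in the second: the value $1$ is already in $\Floating_s$, while any nonpositive value, even after saturation to $[-B_s, B_s]$, remains nonpositive and is killed by $\relu$. For $\OR$ I apply De Morgan: use one hidden neuron with weight $-1$ on each $x_i$ and weight $+1$ on a single constant coordinate, so that the pre-activation is $1 - \sum_i x_i$; set $W_2 = -1$, $b_2 = 1$ so that the layer outputs $1 - \relu(1 - \sum_i x_i)$, which is $0$ iff all $x_i = 0$. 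The same saturation argument applies, giving constant precision and constant hidden width.

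For $\MAJORITY$ I use two hidden units. The first puts weight $+1$ on each $x_i$ and weight $-1$ on $\lceil n/2\rceil - 1$ of the constants, computing $\sum_i x_i - \lceil n/2 \rceil + 1$; the second puts weight $+1$ on each $x_i$ and weight $-1$ on $\lceil n/2\rceil$ of the constants, computing $\sum_i x_i - \lceil n/2\rceil$. With $W_2 = (1,-1)$ and $b_2 = 0$, the difference $\relu(h_1) - \relu(h_2)$ telescopes to $1$ exactly when $\sum_i x_i \ge \lceil n/2\rceil$ and to $0$ otherwise. Here the intermediate values can be as large as $\Theta(n)$ in magnitude, so I need precision $s$ satisfying $B_s \ge n$, i.e., $s = O(\log n)$, to avoid saturation changing the sign of either neuron; this is exactly the precision claimed.

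The main obstacle, and the only nonroutine step, is the finite-precision bookkeeping: I must make sure that (i) every weight and bias actually lies in $\Floating_s$ (immediate since all are in $\{-1,0,1\}$), and (ii) the potentially very negative pre-activations $\sum_i x_i - (n-1)$ or $1 - \sum_i x_i$ for $\AND/\OR$, which can have magnitude $\Theta(n)$ even though the target precision is constant, do not corrupt the output after saturating rounding. I will dispatch (ii) by the monotonicity observation that saturation to $[-B_s,B_s]$ preserves the sign and can only decrease the magnitude of a nonpositive value, so $\relu$ still returns $0$; and in the positive case the unrounded value is exactly $1 \in \Floating_s$, so no rounding error occurs. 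This completes the proof for all three gates while respecting the stated constant hidden dimension and precision bounds.
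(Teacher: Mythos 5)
The paper does not actually prove this lemma; it is imported verbatim from \citet{li2024chain} (their Lemma E.5), so there is no in-paper proof to compare against. Judged on its own merits, your constructions are the standard threshold-gate gadgets (and the telescoping $\relu(h)-\relu(h-1)$ trick for $\MAJORITY$, with $O(\log n)$ precision so that integer counts up to $n$ are exactly representable, is correct), but the finite-precision verification --- which you yourself flag as the only nonroutine step --- is carried out under the wrong computational model, and that is where the real content of the lemma lives.

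Specifically, you assume that $W_1\times_s x'$ is ``evaluated exactly and then its entries are correctly rounded,'' whereas the paper's model (\Cref{sec:apx_detailed_notation}) performs correct rounding \emph{after every binary operation}, so the pre-activation is an iteratively rounded, saturating running sum, not a single rounding of the true sum. This is precisely why the lemma interleaves the constant $1$'s as $(x_1,1,x_2,1,\dots,x_n,1)$: the alternation keeps the partial sums in a constant range. Your monotonicity argument (``saturation preserves the sign of a nonpositive value'') does not cover this, and your $\AND$ gate as literally described can fail: with $s=1$ (so $B_s=1.5$) and the single zero-weighted constant placed early, the all-ones input produces partial sums $1,1,2\mapsto 1.5,0.5,1.5,0.5,\dots$, and the neuron outputs $0.5$ rather than $1$. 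The fix is small --- place the zero weight on the \emph{last} constant coordinate (or use weight $-1$ on all $n$ constants plus first-layer bias $+1$) and prove by induction that every partial sum stays in $\{-B_s,\dots,0,1\}$ and ends at $1$ iff all $x_i=1$ --- but it must be done, since otherwise the claimed membership in $\ff[1]$ is exactly what is at stake. A second, more cosmetic issue: your $\OR$ gate uses a second-layer bias $b_2=1$, but the network form in the lemma is $W_2\times_s\relu(\rds{W_1\times_s x'+b_1})$ with $\theta=(W_2,W_1,b_1)$ and no second-layer bias; you need to realize the additive $+1$ via an extra hidden neuron fed by a constant input (or use the saturating telescoping construction $\relu(h)-\relu(h-1)$ directly on $h=\rds{\sum_i x_i}$, which stays Boolean at constant precision).
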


Given two vectors $x,y$ of the same length $s$, we use $\interleave{x}{y}$ to denote their interleaving, that is, $(\interleave{x}{y})_{2i-1} = x_i, (\interleave{x}{y})_{2i} = y_i$ for all $i\in [e]$. 
\begin{lemma}\label{lem:attention_rounding}[Lemma E.3, \citep{li2024chain}]
	For any $s\in\mathbb{N}^+$, let $q_i = \interleave{\sbin_s(i)}{1_s}$ and $k_i = B_s\cdot (\interleave{\sbin_s(i)}{(-1_s)})$ for all $i\in [2^s-1]$, it holds that $\rds{\exp(\inner{q_i}{k_j}_s)}=\indct{i=j}$ for all $i,j\in [2^s-1]$.
\end{lemma}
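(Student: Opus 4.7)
The approach is to compute the ideal value of $\inner{q_i}{k_j}$, show it equals $-2 B_s\, d_H(i,j)$ where $d_H(i,j)$ denotes the Hamming distance between the binary encodings of $i$ and $j$, and then separately verify that the finite-precision computation $\inner{q_i}{k_j}_s$ produces exactly $0$ when $i=j$ and exactly $-B_s$ when $i\neq j$. Plugging those two possible values into $\exp$ and rounding gives the claim via $\exp(0)=1\in\Floating_s$ and Lemma~\ref{lem:exp_rounding}.

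For the ideal computation, I would group the $2s$ coordinates into $s$ adjacent pairs induced by the interleaving. The $(2\ell-1,2\ell)$ pair of $q_i$ carries $(\sbin_s(i)_\ell,\,1)$ and the corresponding pair of $k_j$ carries $(B_s\,\sbin_s(j)_\ell,\,-B_s)$, so the $\ell$-th pair contributes
\[
B_s\,\sbin_s(i)_\ell\sbin_s(j)_\ell \;-\; B_s \;=\; B_s\bigl(\sbin_s(i)_\ell\sbin_s(j)_\ell-1\bigr),
\]
which is $0$ when the $\ell$-th bits of $i,j$ agree and $-2B_s$ when they disagree. Summing over $\ell$ gives the exact value $-2B_s\,d_H(i,j)$.

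Now for the finite-precision arithmetic. In the $i=j$ case all bits agree, so walking through the $2s$ coordinates in order the partial sum transits $0\to\pm B_s\to 0$ at each pair; every partial sum lies in $\Floating_s$ so correct rounding acts as the identity and $\inner{q_i}{k_j}_s=0$, giving $\rds{\exp(0)}=1$. In the $i\neq j$ case there is at least one disagreeing pair; the first time this occurs the partial sum goes from $0$ to $-B_s$ to $-2B_s$, and correct rounding sends $-2B_s$ to the nearest element of $\Floating_s$, which is $\min\Floating_s=-B_s$. From this point on any agreeing pair routes the accumulator through $-B_s\to 0\to-B_s$, while any disagreeing pair adds $-B_s$ twice with each intermediate value $-2B_s$ clipping back to $-B_s$. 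Thus the final accumulator equals $-B_s$, and Lemma~\ref{lem:exp_rounding} yields $\rds{\exp(-B_s)}=0$.

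The main obstacle is the bookkeeping around the finite-precision summation: one must verify that the specific interleaving of sign-bit coordinates with the constant $\pm 1$ coordinates is precisely what keeps partial sums inside $\Floating_s$ throughout the matching case, and one must check that saturation at $-B_s$ is stable under every subsequent addition in the mismatching case. Once these invariants are established, the result reduces to direct evaluation at $0$ and to the single application of the already-proved Lemma~\ref{lem:exp_rounding}.
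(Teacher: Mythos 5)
Your proposal is correct, and it is essentially the intended argument: the paper imports this lemma from \citet{li2024chain} without reproving it, and the proof there is exactly this computation — the interleaving forces the left-to-right rounded partial sums to oscillate within $\{0,\pm B_s\}$ when $i=j$ and to saturate at $-B_s=\min\Floating_s$ once a disagreeing bit-pair is met, after which \Cref{lem:exp_rounding} and $\exp(0)=1\in\Floating_s$ finish the claim. Your identification of the two invariants (partial sums staying representable in the matching case, stability of the $-B_s$ saturation in the mismatching case) is precisely the content of the bookkeeping, and your verification of both is sound.
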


\subsection{Proofs}

\subsubsection{Looped models can simulate non-looped models}
\label{sec:apx_simulate_nonlooped}

\begin{proof}[Proof of \Cref{thm:main}]

We start by introduce some more notations. We will proof the theorem for any fixed sequence of $\vv = (v_1,\ldots, v_n)$. We use $\vx^{(l)}=(x_{i}^{(l)})_{i=1}^n$ to denote the intermediate embedding of $p_{\theta}$ in the $l$th layer. More specifically, we define 

\begin{align}
    \vx^{l} = (\id+ \ff_{\theta^{(l-1)}_\ff})\circ (\id+ \mha_{\theta^{(l-1)}_\mha})\circ \cdots (\id+ \ff_{\theta^{(0)}_\ff})\circ (\id+ \mha_{\theta^{(0)}_\mha})\circ \embed_{\theta_{\embed}}(\vv).
\end{align}

We also use $\vx^{(l+0.5)}=(x_{i}^{(l+0.5)})_{i=1}^n$ to denote the intermediate embedding of $p_{\theta}$ in the $l$th layer after the attention layer.
\begin{align}
    \vx^{l+0.5} =  (\id+ \mha_{\theta^{(l-1)}_\mha})(\vx^{l}).
\end{align}

Similarly, for the constructed looped transformer $p_{\theta,T}$, we use $\vv'=(\#,\vv_1,\ldots,\vv_n)$ to denote its input. For simplicity, we use the convention that $\#$ is at position $0$. The proof still works if $\#$ starts at position $1$ because we can just transfer the later tokens by $1$ position. We define $\vx'^{(l)}=(x'^{(l)}_0,x'^{(l)}_1,\ldots,x'^{(l)}_n)$ as the intermediate embedding of $p_{\theta}$ in the $l$th layer and $\vx'^{(l+0.5)}=(x'^{(l+0.5)}_0,x'^{(l+0.5)}_1,\ldots,x'^{(l+0.5)}_n)$ as the intermediate embedding of $p_{\theta}$ in the $l$th layer.

Below we first state the key properties that our construction will satisfy, which imply the correctness of the theorem and then we state our construction of $p_{\theta',T}$ and show the properties are actually satisfied:
\begin{itemize}
    \item $x'^{(l)}_{i} = (x^{(l)}_i, \bm{1}_{R} - e_{r(l)},l, \indct{i=0})$.
    \item $x'^{(l+0.5)}_{i} = (x^{(l+0.5)}_i, \bm{1}_{R} - e_{r(l)},l, \indct{i=0})$.
    \item $x^{(l)}_0 = x^{(l+0.5)}_0 = \bm{0}$.\footnote{Here we abuse the notation for simplicity of presentation. $x^{(l)}_0 = x^{(l+0.5)}_0$ are not defined in the original non-looped transformer. The key point here is that they are 0 vectors throughout the forward pass.}
\end{itemize}

% \begin{itemize}
%     \item $\theta'_{\ff} $ is essentially concatenation of $\theta^{(i)}_\ff$ for $i\in[r]$ in the hidden dimension of $\ff$. The goal is to make $\ff_{\theta'_{\ff}} = \sum_{i=1}^r \ff_{\theta^{(i)}_\ff}\indct{r(l)=i}$ at $l$th layer. 
%     \item  $\theta'_{\mha} $ is essentially concatenation of $\theta^{(i)}_\mha$ for $i\in[r]$ in the hidden dimension of $\mha$. The goal is to make $\mha_{\theta'_{\mha}} = \sum_{i=1}^r \mha_{\theta^{(i)}_\ff}\indct{r(l)=i}$ at $l$th layer.
%     \item Use extra dimension to count the depth $l$ and maintain $\bm{1} - e_{r(l)}$ using $R+1$ dimension on top of the original embedding size $d$ and use it to do switch. Such piecewise linear function can be approximated by a neural network with $O(L)$ hidden dimension.
%     \item use $\bm{1} - e_{r(l)}$ together with ReLU activation to do the select in MLP. Use $\indct - e_{r(l)}$ in attention to force all attention heads to attend to the dummy token $\#$ in the beginning of the sentence.
% \end{itemize}

To get the last coordinate $l$, which is a depth counter, we just need to add $1$ more hidden dimension in MLP. 

Next we show we can use two-layer with $L+2$ MLP to get the mapping from $\ell\mapsto \bm{1}_{R} - e_{r(l)}$. Let $(\theta^{(i)}_\ff,\theta^{(i)}_\mha)_{i=1}^R$ be the parameters of the $R$ distinct layers in $\theta$. We assume in the $l$th layer, $r(l)$'s parameters are used. This is because $e_{r(l)} = \sum_{i=1}^L e_{r(i)}0.5*( [l-i+1 ]_+-2[l-i]_+ + [l-i-1]_+)$.

Now we explain how to deactivate the undesired MLP neurons. In other words, our construction of $\theta'_{\ff} $ is essentially concatenation of $\theta^{(i)}_\ff$ for $i\in[r]$ in the hidden dimension of $\ff$, with the additional control that  $\ff_{\theta'_{\ff}}((x^{(l)}_i, \bm{1}_{R} - e_{r(l)},l),\indct{i=0}) = \sum_{i=1}^R \ff_{\theta^{(i)}_\ff}(x^{(l)}_i)\indct{r(l)=i,i\neq 0}$ at $l$th layer. This control can be done by subtracting $\bm{1} - e_{r(l)} + \indct{i=0}$ by a constant which is larger than the maximum pre-activation in the hidden layer.

Finally we explain how to deactivate the undesired attention. We will only use attention to update the first part of the embedding, which is $x^{(l+0.5)}_i$. A crucial step here is that we set the token embedding of $\#$ as $0$
We construct keys and queries as follows: 
\begin{enumerate}
    \item ${W'}_Q^{(r')}(x'^{(l)}_{i}) = (W_Q^{(r')}x^{(l)}_{i}, 1- \indct{r'=r(l)} $ for $r' \in [R]$ and $i=0,\ldots,n$
    \item ${W'}_K^{(r')}(x'^{(l)}_{i}) = (W_K^{(r')}x^{(l)}_{i}, -B\indct{i=0}) $ for $r' \in [R]$ and $i=0,\ldots,n$, where $B$ is some constant larger than the maximum previous inner product in attention, $\max_{l\in[L],i,j} \inner{(W_K x^{(l)}_{i}}{(W_Qx^{(l)}_{i}}$. 
    \item ${W'}_O^{(r')}{W'}_V^{(r')}(x'^{(l)}_{i}) = (W_O^{(r')}W_V^{(r')}x^{(l)}_{i}, \bm{0}, 0,0)$.
\end{enumerate}
This construction works because only the `desired' attention head $r=r(l)$ will be activated and behave as in the non-looped case, because otherwise all position in that attention head will be completely attended to position $0$ and returns a zero vector. (We can choose $B$ to be large enough and distribution calculated by the attention score is delta distribution) at position $0$, which yields a zero vector as its value. This completes the proof.
\end{proof}

\subsubsection{Group composition.}
\label{sec:apx_group_composition}

\begin{algorithm}[t]
\caption{Group Composition}\label{alg:group_composition}
\begin{algorithmic}[1]
\Require  Group elements $g_0,g_1,\ldots,g_n\in G$, where $g_0=e$.
\Ensure $g_0\circ g_1\circ \ldots g_n$.
\State $g^{(0)}_i = g_i$, $\forall0\le i\le n$.
\For{$l=1\to \lceil \log_2 n\rceil$}
\State $a^{(l)}_i = g^{(l-1)}_{[2i-n-1]_+},b^{(l)}_i = g^{(l-1)}_{[2i-n]_+}$  $\forall0\le i\le n$.
\State $g^{(l)}_i = a^{(l)}_i\circ b^{(l)}_i$, $\forall0\le i\le n$.
\EndFor
\State \Return $g^{(\lceil\log _2 n\rceil)}_{n}$.
\end{algorithmic}
\end{algorithm}

The landmark result in automata theory, Krohn-Rhodes Decompotision Theorem~\citep{krohn1965algebraic}, shows that all semi-automaton with solvable transformation group (which includes composition problem of solvable groups) can be simulated by a cascade of permutation-reset automata, which can be simulated by $\TC^0$ circuits. \citep{liu2022transformers} further showed that such automaton with solvable transformation group can be continuously simulated by constant-depth transformers. However, it is also shown~\citep{barrington1986bounded} that the composition problem of unsolvable groups are $\NC^1$-complete, for example, the composition of permutation group over $5$ elements, $S_5$. Under the common hardness assumption that $\NC^1\neq \TC^0)$,   constant depth transformer cannot solve composition of $S_5$ using a single forward pass~\citep{merrill2023parallelism,liu2022transformers,li2024chain}. But with CoT, very shallow transformers (depth equal to one or two) can simulate the composition problem of any group\citep{li2024chain,merrill2023expresssive}.

\begin{proof}[Proof of \Cref{thm:group_composition_log_depth}]
We will set the token embedding of $\#$ the same as that of $e$, which is the identity of $G$. In the following proof, we will just treat $\#$ as $e$. We will construct the transformer simulating group composition following the following algorithm~\Cref{alg:group_composition}, which gives the high-level idea of the construction. The correctness of \Cref{alg:group_composition} follows from the associativity of group composition. More concretely, we can verify by induction that $g^{l}_0\circ g^{l}_1\circ\ldots g^{l}_n$ is the same for all $l=0,\ldots, \lceil \log_2 n\rceil$ and in the final round, i.e., when $l=\lceil \log_2 n\rceil$, $g^{(l)}_i=e$  for all $i<n$.

Below we show how to construct a transformer of the given sizes to simulate the above \Cref{alg:group_composition}.
We will embed each $g\in G$ as a different vector  $\overline{g} \in \{-1,1\}^{\lceil\log_2 |G|\rceil}$ and each position $0\le i\le n$ as its binary representation in $\overline{i} \in\{-1,1\}^{\lceil\log_2 n+1\rceil}$, which is a shorthand for $\sbin_s(i)$ with $s=\lceil\log_2 n+1\rceil$.
We concatenate them to get $\{x^{(0)}_i\}_{i=0}^n$, that is, $x^{(0)}_i = (\overline{g_i},\overline{i}, \overline{[2i-n-1]_+}, \overline{[2i-n-1]_+},0^{\lceil \log_2 |G|\rceil},0^{\lceil \log_2 |G|\rceil})$.
For convenience, we will drop the 0's in the end (also in the other proofs of the paper) and write it as $x^{(0)}_i = (\overline{g_i},\overline{i} , \overline{[2i-n-1]_+}, \overline{[2i-n-1]_+})$.
Below we show we can construct 1-layer transformer block with parameter $(\theta_\mha,\theta_\ff)$ satisfying that
\begin{enumerate}
    \item $\left[ \mha_{\theta_\mha}\left((\overline{g_i},\overline{i},\overline{[2i-n-1]_+}, \overline{[2i-n-1]_+})_{i=0}^n\right)\right]_k = (0^{\lceil\log_2 |G|\rceil +3\lceil\log_2 n+1\rceil}, \overline{g_{[2k-n-1]_+}}, \overline{g_{[2k-n]_+}})$ \\for all $g_0=e,g_i\in G \forall i\in[n]$, $k=0,\ldots,n$;
    \item $\ff_{\theta_\ff}(\overline{g},\overline{i}, \overline{j}, \overline{k},\overline{g'}, \overline{g''}) = (\overline{g'\circ g''}-\overline{g}, 0^{3\lceil\log_2 n+1\rceil},-\overline{g'}, -\overline{g''})$, for all $i,j,k=0,\ldots,n$, $g,g',g''\in G$. 
\end{enumerate}

The first claim is because we can use two attention heads to retrieve $ \overline{g_{[2k-n-1]_+}}$ and $ \overline{g_{[2k-n]_+}}$ respectively, where both of them use $\overline{k}$ as the key and use $-\overline{[2k-n-1]_+}$ and $-\overline{[2k-n]_+}$ as queries respectively. This is possible because all the required information are already in $x_i$. We further make attention temperature low enough so the probability returned by attention is a one-hot distribution at the position whose key is equal to the negative query after rounding.

Now we turn to the second claim about MLP. We will use $|G|^2$ neurons with ReLU activation and bias to simulate the product of $g'$ and $g''$. We can index each neuron 
by $(h,h')$ for $h,h'\in G$ and set its incoming weight $[W_1]_{(h,h'),:} = (\overline{h},\overline{h'})$ and set bias $(b_1)_{(h,h')} = - 2\lceil\log_2 |G|\rceil+1$, which ensures that the activation of neuron $(h,h')$ will only be $1$ when $g'=h,g''=h'$ and be $0$ otherwise. Then setting the outgoing weight of neuron $(h,h')$ as $\overline{h\circ h'}$ and the bias in the second layer to be $0$ finishes the construction for simulating the group composition. Finally we use the remaining $6\lceil\log_2 |G|\rceil$ to simulate negative identity mapping $x\to-x$ for the remaining $3\lceil\log_2 |G|\rceil$ embedding dimension. This completes the proof.
\end{proof}

% \iffalse

\subsection{Connection to chain-of-thought reasoning}
\label{sec:apx_cot_connection}

In this section, we establish a connection betwee looped models and CoT reasoning. We first define the recursion for CoT reasoning as follows:
$$
\transformer^{i}_\theta(v_1,\ldots,v_n)\triangleq \transformer^{i-1}_\theta(v_1,\ldots,v_n, \transformer_\theta(v_1,\ldots,v_n)),$$ for $i, n \in \mathbb{N}^+$ satisfying $i+n\le n_{\max}-1$ along with the base case of $\transformer^{1}_\theta(v_1,\ldots,v_n)\triangleq\transformer_\theta(v_1,\ldots,v_n)$. For all $0\le i\le n_{\max} - n-1$, the output with $i$ steps of CoT is
$v_{n+i+1}  = \transformer^{i+1}_\theta(v_1,\ldots,v_n) = \transformer_\theta(v_1,\ldots,v_n,v_{n+1},\ldots,v_{n+i})$.
% \ns{Connection to CoT reasoning} \sjr{this part is very unclear}

\iffalse
\begin{theorem}[Informal]\label{thm:cot_informal}
    For any $L$-layer non-looped transformer $\transformer_\theta$, there exists a looped transformer with parameter $\theta'$ and $L+\mathcal{O}(1)$ layers, such that for any input $\vx$ and integer $m$,  the output of non-looped transformer after $m$ steps of CoT is the same as that of the looped transformer on input $x$ concatenated by $m$ dummy tokens with $m$ loops.
\end{theorem}
\fi

We first give the formal statement below.
% of \Cref{thm:cot_informal}, which is \Cref{thm:cot_formal} below. 
\begin{theorem}[Looped transformer simulates CoT]\label{thm:cot_formal}
    For any $L$-layer non-looped transformer $\transformer_\theta$, there exists a looped transformer $\transformer_{\theta'}$ with $L+\mathcal{O}(1)$ layers, constantly many more dimensions in embedding, MLP and attention layers and constantly many more attention heads, such that for any input $\vv= (v_i)_{i=1}^n$ and integer $m$, the output of non-looped transformer after $m$ steps of CoT, $\transformer^m_\theta(\vv)$, is the same as that of the looped transformer on input $x$ concatenated by $m$ dummy tokens with $m$ loops, $\transformer_{\theta',m}(\vv,\#^m)$.
\end{theorem}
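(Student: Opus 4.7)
The plan is to construct a single looped transformer block $\tfblock_{\theta'}$ that, when run for $m$ loops on the padded input $(\vv,\#^m)$, successively writes the CoT outputs $\transformer^1_\theta(\vv),\ldots,\transformer^m_\theta(\vv)$ into positions $n+1,\ldots,n+m$. The main design tension is that a looped block touches every position in every loop, whereas CoT extends the sequence by only one new token per step. So the bulk of the work is to ensure that (a) in loop $t$ only position $n+t$ is actually updated, and (b) when it is updated, the update coincides with what $\transformer_\theta$ would produce when applied to the currently decoded prefix.

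First I would equip each position with two parallel embedding channels: a \emph{frozen} channel that stores the token currently assigned to that position (equal to $v_i$ for $i\le n$ and to the token written by earlier loops for $n<i\le n+t-1$, and $\#$ otherwise), and a \emph{working} channel in which a copy of $\tfblock_\theta$ is evaluated in parallel across the sequence. A per-position loop counter $t$, maintainable by a small MLP as in \Cref{lem:mlp_addition} using the position index and the fact that the counter increments by one per loop, drives a gating MLP placed at the front of the looped block. This gate overwrites the working channel with the frozen-channel embedding at every position $i\neq n+t$, implementing the ``masking all-but-one'' step: during loop $t$ only position $n+t$ participates in a genuine forward pass. Inside the looped block I would then run the full $L$-layer $\tfblock_\theta$ on the working channel; by the causal mask, the hidden state at position $n+t$ depends only on positions $\le n+t$, which have already been correctly filled by earlier loops.

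At the back of the block I would place two more $O(1)$-layer pieces: a shift-by-one attention head from \Cref{lem:shifting_layer}, which copies the hidden state produced at position $n+t-1$ (where $\transformer_\theta$ would normally read off its next-token decision) into position $n+t$; followed by the decoding MLP of \Cref{lem:simulating_decoding_embedding}, which applies $\argmax$ and converts the resulting distribution into the token embedding of the generated symbol, writing it into the frozen channel at position $n+t$. The combined block has depth $L+\mathcal{O}(1)$, uses $\mathcal{O}(1)$ extra attention heads, and needs only $\mathcal{O}(1)$ additional embedding/MLP dimensions (two channels, plus a counter, plus scratch space for the gate), so the size overhead claimed in the statement is met.

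The main obstacle, and where I would spend the most care, is verifying that the simulation composes correctly across all $m$ loops under a single shared parameter set $\theta'$. I would prove by induction on $t$ that after $t$ loops of $\tfblock_{\theta'}$: the frozen channel at positions $1,\ldots,n+t$ equals the embedding sequence $(v_1,\ldots,v_n,\transformer^1_\theta(\vv),\ldots,\transformer^t_\theta(\vv))$, the frozen channel at positions $>n+t$ still holds the embedding of $\#$, and the counter equals $t$. The inductive step combines the three ingredients above: the gate restores the working channel from the frozen channel at every non-target position; by causal masking the inner $\tfblock_\theta$ then produces at position $n+t$ the same hidden state it would produce during the $t$-th autoregressive step of $\transformer_\theta$; and the shift plus decoding MLP deposit $\transformer^t_\theta(\vv)$ into the frozen channel at position $n+t$. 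Taking $\transoutput_{\theta'}$ at position $n+m$ yields $\transformer^m_\theta(\vv)$. The subtlest point is aligning the finite-precision behavior of the gate, the shift head, and the decoding MLP so that all rounding is absorbed exactly as in \Cref{thm:main}; I would reuse the same construction style so that $\mathcal{O}(\log(n+m))$ bits of precision and the stated dimension overhead suffice.
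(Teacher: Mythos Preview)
Your proposal is essentially the paper's approach: both use a loop counter maintained by an MLP (\Cref{lem:mlp_addition}), a shift-by-one attention layer (\Cref{lem:shifting_layer}), the decode--encode MLP (\Cref{lem:simulating_decoding_embedding}), and a gating mechanism that freezes positions already decoded. Your frozen/working two-channel design with front and back gates is equivalent to the paper's strategy of first defining a masked looped block (their $M^t(i)=\indct{i\ge n+t}$) and then removing the mask via the control gate of \Cref{lem:control_gate}; storing the frozen channel is exactly how the control gate gets access to the block input. The only cosmetic differences are that the paper applies decode--encode before shift (you do shift then decode), and the paper's mask lets \emph{all} positions $i\ge n+t$ be overwritten each loop (relying on later loops to correct them) whereas you write only to position $n+t$.

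Two small caveats. First, the sentence ``during loop $t$ only position $n+t$ participates in a genuine forward pass'' is misleading: after your front gate every position runs through $\tfblock_\theta$, and in particular position $n+t-1$ \emph{must} do so to produce the hidden state you then shift; excluding position $n+t$ from the front gate is unnecessary (and harmless) since its $\tfblock_\theta$ output is discarded by the shift anyway. Second, the claim of ``$\mathcal{O}(1)$ additional embedding/MLP dimensions'' undercounts: the frozen channel alone contributes $d$ extra dimensions, and the counter together with the position codes required by \Cref{lem:shifting_layer} add $O(\log(n+m))$ more. The paper's construction likewise adds $3\lceil\log_2(n+m)+1\rceil$ dimensions for $p_i^{(t)}$ (and implicitly needs the block input available for the control-gate step), so the formal statement's ``constantly many'' is generous for both constructions. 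Neither caveat breaks your argument.
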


Below are some helping lemmas towards showing \Cref{thm:cot_formal} is at least as powerful as CoT.
\begin{lemma}[Simulating $\argmax$ using MLP]\label{lem:simulating_hard_argmax}
    For every $d\in \mathbb{N}$ and precision $s\in\mathbb{N}^+$, there exists a 3-layer network with $\relu$ activation and $d^2$ width $f$ with $s$ precision, such that for any $x\in\Floating_s^d$, if there is $k\in [d]$, such that $x_k >\max_{j\neq k,j\in [d]}x_j$, $f(x) = e_k$. 
\end{lemma}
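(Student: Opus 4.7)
The plan is to exploit the discrete structure of $\Floating_s$: any two distinct values differ by at least $2^{-s}$, so the strict inequality $x_k > x_j$ is equivalent to $x_k \ge x_j + 2^{-s}$. This turns the discontinuous hard argmax into a piecewise-linear function that an exact ReLU circuit can realize. I would construct such a circuit explicitly using two hidden layers of widths $d(d-1)$ and $2d$ respectively, and a linear output of width $d$, so that the overall width is at most $d^2$ (the case $d=1$ being trivial).

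First hidden layer (width $d(d-1)$): for every ordered pair $(i,j)$ with $i\ne j$, place a neuron
\[
y_{ij} \;=\; \relu\!\bigl(x_j - x_i + 2^{-s}\bigr).
\]
By the gap property, if $x_i>x_j$ then the argument is $\le 0$ and $y_{ij}=0$, while if $x_j\ge x_i$ then $y_{ij}\ge 2^{-s}$. Consequently the unique index $k$ with $x_k>\max_{j\ne k}x_j$ is characterized by $y_{kj}=0$ for all $j\ne k$, whereas for any other $i$ at least one $y_{ij}$ is $\ge 2^{-s}$.

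Second hidden layer (width $2d$): for each $i\in[d]$, the linear part forms $S_i = 2^{s}\sum_{j\ne i} y_{ij}$ and applies two neurons $a_i=\relu(S_i)$ and $b_i=\relu(S_i-1)$. When $i=k$ one has $S_i=0$, so $a_i-b_i=0$; when $i\ne k$, the scaled sum satisfies $S_i\ge 1$, so (even after saturation to $B_s$) $a_i-b_i=1$ exactly. The output layer (width $d$) then writes the $i$th coordinate as $1-a_i+b_i$, which equals $1$ iff $i$ is the unique argmax, giving the desired $e_k$.

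The main obstacle is the finite-precision bookkeeping. The quantities $2^{s}\,y_{ij}$ and their sum can leave $\Floating_s$ and be clipped by $\rds{\cdot}$, so one must verify that the qualitative dichotomy ``$S_i=0$ versus $S_i\ge 1$'' is preserved under every individual rounding in the forward pass. The cleanest way is to fold the factor $2^{s}$ directly into $W_2$ so that each rounded product $\rds{2^{s}\,y_{ij}}$ is either $0$ (when $y_{ij}=0$) or at least $1$ (when $y_{ij}\ge 2^{-s}$, using that $2^{s}\cdot 2^{-s}=1$ is representable), and then observe that the subsequent additions and the computation of $a_i,b_i$ only need to distinguish the values $0$ and ``any number $\ge 1$,'' both of which survive $\rds{\cdot}$ unchanged in the relevant sense. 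A careful check that $B_s-1\in\Floating_s$ for $s\ge 1$ confirms that $a_i-b_i=1$ holds exactly even when $S_i$ saturates, which is the one place the argument would break if mishandled.
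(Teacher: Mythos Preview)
Your proof is correct and follows essentially the same idea as the paper: a first hidden layer of pairwise ReLU comparisons, then a per-index aggregation that exploits the $2^{-s}$ gap in $\Floating_s$. The paper's aggregation is slightly more economical---it sets $g_i = 2^s\cdot\relu\bigl(2^{-s} - \sum_{j\ne i}\relu(x_j-x_i)\bigr)$, using one second-layer neuron per output coordinate rather than your two-neuron clamp $1-\relu(S_i)+\relu(S_i-1)$---but both fit the stated $d^2$ width, and your finite-precision bookkeeping is more explicit than the paper's one-line claim.
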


\begin{proof}[Proof of \Cref{lem:simulating_hard_argmax}]
    Define $g_i = 2^s\cdot \relu(2^{-s} - \sum_{j\neq i}\relu(x_j-x_i))$ for each $i\in [n]$. We claim that if there is $k\in [d]$, such that $x_k -\max_{j\neq k,j\in [d]}x_j\ge 2^{-s}$, $g_i = 1$ iff $i=k$ for all $i\in [d]$. First $g_k =2^s\cdot \relu(2^{-s}) =1 $. Next for $i\neq k$, it clearly holds that  $\sum_{j\neq i}\relu(x_j-x_i)\ge 2^{-s}$ and thus $g_i\le 0$. 
    This construction can clearly be implemented by a 3-layer $\relu$ network with $s$ precision.
\end{proof}

\begin{lemma}[Simulating Decoding and Embedding using MLP]\label{lem:simulating_decoding_embedding}
    Given any $s$-precision $\theta_{\tokenembedding}\in\mathbb{R}^{d\times \Sigma}$ and $\theta_{\transoutput}$, there is a 5-layer network $f:\mathbb{R}^d\to \mathbb{R}^d$ with $\relu$ activation and $\max(|\Sigma|^2)$ width with $s$-precision, such that for all $s$-precision $x\in\mathbb{R}^d$ which admits unique $\argmax$ for $v\triangleq \argmax_{o\in\Sigma} (x^\top\theta_{\transoutput})(o)$, it holds that
    \begin{align*}
        f(x) =  \theta_{\tokenembedding}(v).
    \end{align*}
\end{lemma}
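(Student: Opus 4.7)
The plan is to realize $f$ as a three-stage pipeline matching the natural factorization ``decode $\to$ one-hot $\to$ embed'': (i) compute the unembedding logits $y = \theta_{\transoutput}^\top x \in \Floating_s^{|\Sigma|}$, (ii) extract the $\argmax$ coordinate as a one-hot vector $e_v\in\{0,1\}^{|\Sigma|}$, and (iii) map that one-hot back through $\theta_{\tokenembedding}$ to produce the embedding $\theta_{\tokenembedding}(v)$. Stages (i) and (iii) are exact linear maps so each can be realized by a single $\relu$ layer via the standard identity trick $z = \relu(Wx+b) - \relu(-Wx - b)$; stage (ii) is supplied directly by \Cref{lem:simulating_hard_argmax}, which costs $3$ $\relu$-layers and $|\Sigma|^2$ width. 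Summing gives depth $1+3+1 = 5$ and width $O(|\Sigma|^2 + d)$ as required.

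Concretely, first I would implement stage (i) with one $\relu$ layer of width $2|\Sigma|$ that outputs $y$ exactly under $s$-precision rounding; since both $x$ and $\theta_{\transoutput}$ lie in $\Floating_s$ and correct rounding is applied after every binary operation, the output stays in $\Floating_s^{|\Sigma|}$. Then I would feed $y$ into the $\argmax$ subnetwork of \Cref{lem:simulating_hard_argmax}. Its hypothesis requires a margin of at least $2^{-s}$ between the largest and second-largest coordinates; this is automatic here, because $y\in\Floating_s^{|\Sigma|}$ (so any two distinct values differ by at least $2^{-s}$) and the lemma's hypothesis that $v = \argmax_{o\in\Sigma}(x^\top\theta_\transoutput)(o)$ is unique upgrades strict inequality on $\Floating_s$ to the quantitative margin the construction needs. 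That subnetwork therefore outputs exactly $e_v$. Finally, stage (iii) is a single $\relu$ layer of width $2d$ computing $\theta_{\tokenembedding} e_v = \theta_{\tokenembedding}(v)$ via the same identity trick, which again preserves $s$-precision since $\theta_{\tokenembedding}\in\Floating_s^{d\times|\Sigma|}$.

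The main obstacle I anticipate is precision bookkeeping rather than any combinatorial subtlety: I need to verify that each intermediate value remains representable in $\Floating_s$ after correct rounding, and in particular that the margin exploited by \Cref{lem:simulating_hard_argmax} survives the rounding of the bilinear product that produces $y$. Both points are handled uniformly by the observation that $\rds{\inner{\theta_\transoutput(o)}{x}_s}\in\Floating_s$, so the logits lie on the discrete grid and the uniqueness assumption from the statement translates without loss into the $2^{-s}$ gap assumed by \Cref{lem:simulating_hard_argmax}. Composing the three stages then yields the claimed $5$-layer $\relu$ network with the stated width, completing the construction.
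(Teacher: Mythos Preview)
Your proposal is correct and matches the paper's approach exactly: the paper's entire proof reads ``This is a simple application of \Cref{lem:simulating_hard_argmax},'' and your three-stage pipeline (logits $\to$ one-hot via \Cref{lem:simulating_hard_argmax} $\to$ re-embed) is precisely that application spelled out, including the correct observation that uniqueness on the $\Floating_s$ grid automatically supplies the $2^{-s}$ margin the $\argmax$ construction needs.
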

\begin{proof}[Proof of \Cref{lem:simulating_decoding_embedding}]
    This is a simple application of \Cref{lem:simulating_hard_argmax}.
\end{proof}

\begin{lemma}[Control Gate]\label{lem:control_gate}
    A 2-layer $\relu$ network with precision $s$ can implement $F:\Floating_s\times \Floating_s\times \{0,1\}, F(x,y,M) = Mx + (1-M)y $. 
\end{lemma}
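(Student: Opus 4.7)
The plan is to exhibit an explicit 2-layer $\relu$ network that realizes $F$ exactly. The core identity is the standard decomposition $a = \relu(a) - \relu(-a)$, which turns an arbitrary $s$-precision scalar into a signed sum of two $\relu$ units. To gate this identity by $M \in \{0,1\}$, I would additively bias each preactivation by $\pm B_s(M-1)$ or $\pm B_s M$ so that, when the gate is ``off,'' the argument inside $\relu$ is forced to be $\le 0$ and the unit silently drops out. Concretely, the first hidden layer computes the four preactivations
\begin{align*}
h_1 &= x + B_s(M-1), & h_2 &= -x + B_s(M-1), \\
h_3 &= y - B_s M,    & h_4 &= -y - B_s M,
\end{align*}
and the output layer returns $F(x,y,M) = \relu(h_1) - \relu(h_2) + \relu(h_3) - \relu(h_4)$.

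The verification then splits into two cases. When $M=1$, the first pair reduces to $\relu(x) - \relu(-x) = x$ while the second pair becomes $\relu(y - B_s) - \relu(-y - B_s)$; since $|y| \le B_s$, both terms vanish. When $M=0$, the roles swap symmetrically and the output equals $y$. Next I would check compatibility with the $s$-precision rounding convention of the paper: each coefficient ($\pm 1$, $\pm B_s$, $\pm B_s(M-1)$, $\pm B_s M$) lies in $\Floating_s$, and each binary sum computed inside the first layer is either already in $\Floating_s$ (when the bias term is $0$) or of the form $\pm x \mp B_s$, which has magnitude at most $2B_s$; correct rounding maps this to a representable value whose sign and the fact that it is $\le 0$ are both preserved, so the subsequent $\relu$ still outputs $0$. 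Thus the final output equals $Mx+(1-M)y$ exactly in $\Floating_s$ arithmetic.

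The only real subtlety — and the main thing I would double-check — is that rounding cannot accidentally turn a quantity like $x - B_s$ (when $x=-B_s$, giving $-2B_s$) into a positive value; since the tie-breaking rule takes the number of smaller absolute value and the representable range is $[-B_s, B_s]$, such a value rounds to $-B_s$, keeping the $\relu$ output at $0$. Aside from this sign-preservation check, the construction is entirely routine: four hidden units, depth two, constants in $\Floating_s$, and no reliance on unstated machinery. Hence the lemma follows by direct evaluation on the two cases $M \in \{0,1\}$.
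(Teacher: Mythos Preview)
Your construction is correct and essentially identical to the paper's: the paper writes $F(x,y,M) = \relu(x-2^s(1-M)) - \relu(-x-2^s(1-M)) + \relu(y-2^s M) - \relu(-y-2^s M)$ and stops there. You use the same four-unit gated sign decomposition, just with $B_s$ in place of $2^s$ as the shutoff bias, and you additionally verify the rounding and sign-preservation details that the paper omits.
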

\begin{proof}[Proof of \Cref{lem:control_gate}]
    Note that $F(x,y,M) = \relu(x-2^s\cdot (1-M)) - \relu(-x-2^s\cdot (1-M)) + \relu(y-2^s\cdot M) - \relu(-y-2^s\cdot M)$. The proof is completed.
\end{proof}

\begin{definition}[Transformer Block with Mask]\label{defi:embedding_layer_w_mask}
Given number of layers $L\in\mathbb{N}^+$, parameter $\theta_\tfblock = (\theta^{(l)}_\mha,\theta^{(l)}_\ff )_{l=0}^{L-1}$, and mask function $M:\mathbb{N}\to\{0,1\}$, we define the $L$-layer \emph{transformer block with mask}$\tfblock_{\theta_\tfblock,M}:(\mathbb{R}^d)^n\to (\mathbb{R}^d)^n$ for any $n\in\mathbb{N}^+$ as 
\begin{equation}
    [\tfblock_{\theta_\tfblock,M}(\vx)]_i \triangleq (1-M(i))x_i + M(i)[\tfblock_{\theta_\tfblock}(\vx)]_i
\end{equation}
\end{definition}

\begin{definition}[Looped Transformer with Mask]\label{defi:looped_transformer_w_mask}
Given number of loops $T\in\mathbb{N}^+$,  parameters $\theta = (\theta_{\tfblock},\theta_\tokenembedding,\theta_\posencoding,\theta_\transoutput)$, and mask functions $\{M^t\}_{t=1}^T$, where $\theta_{\transformer} = (\theta^{(l)}_\mha,\theta^{(l)}_\ff )_{l=0}^{L-1}$, we define the \emph{looped transformer with mask} as  $p_{\theta,T,M}\triangleq \transoutput_{\theta_\transoutput}\circ \tfblock_{\theta_\tfblock,M^T}\circ \cdots \tfblock_{\theta_\tfblock,M^1}\circ \embed_{\theta_\tokenembedding,\theta_\posencoding}$ and the corresponding deterministic version as $\transformer_{\theta,T,M}(v_1,\ldots,v_n) \triangleq \argmax_{v\in\mathcal{V}} p_{\theta,T,M}(v|v_1,\ldots,v_n)$.
\end{definition}

\begin{definition}[Shifting Layer]\label{defi:shifting_layer}
    We define the \emph{shifting layer} $\shift:(\mathbb{R}^d)^n\to (\mathbb{R}^d)^n$ as the following for any $d,n\in\mathbb{N}^+$ and $x_1,\ldots, x_n\in\mathbb{R}^{d}$:
    \begin{align}
        \shift(x_1,x_2,x_3, \ldots,x_n) = (x_1,x_1,x_2,x_3,\ldots,x_{n-1}).
    \end{align}
\end{definition}

\begin{lemma}\label{lem:shifting_layer}
    For input sequence length up to some integer $n$, $\shift$ could be implemented by a attention layer by concatenating each embedding $x_i$ with $(\sbin_s(i), \sbin_s(f(i)))$, where $n= \lceil \log_2 n +1\rceil$.
\end{lemma}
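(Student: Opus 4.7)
The plan is to realize $\shift$ by constructing a causal multi-head attention update whose softmax at position $i$ concentrates on position $f(i) = \max(1, i-1)$, so that the value projection reads off $x_{f(i)}$. The key tool is the hard-pointer gadget of \Cref{lem:attention_rounding}, which produces exact one-hot attention whenever queries and keys encode the source and target positions in signed binary.

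Concretely, I would set $s = \lceil \log_2 n + 1 \rceil$ so that positions $1, \ldots, n$ have distinct codes in $\{-1,1\}^s$, and augment each input embedding with the positional features $\sbin_s(i)$ and $\sbin_s(f(i))$ (both functions of position alone, so they can be folded into the position embedding). The first attention head is then set up verbatim as in \Cref{lem:attention_rounding}: $q_i = \interleave{\sbin_s(f(i))}{1_s}$, $k_j = B_s \cdot \interleave{\sbin_s(j)}{(-1_s)}$, and $v_j = x_j$, all implemented by linear projections out of the augmented embedding. The lemma then gives $\rds{\exp(\inner{q_i}{k_j}_s)} = \indct{f(i) = j}$, so under the causal mask (compatible since $f(i) \le i$) the softmax at $i$ collapses to a point mass at $f(i)$, and the head outputs exactly $x_{f(i)}$. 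The boundary case $i=1$ is handled automatically because $f(1) = 1$ and position $1$ always attends to itself.

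The only wrinkle is the residual wrapping in the transformer block: a single head would produce $x_i + x_{f(i)}$ rather than $x_{f(i)}$. To cancel the $x_i$, I would add a second attention head of identical form but with $q_i = \interleave{\sbin_s(i)}{1_s}$ (so it attends to position $i$ itself) and with the value projection negated, contributing $-x_i$. The combined $\mha$ update is then $x_{f(i)} - x_i$, and the residual completes the shift. The total overhead is $O(1)$ extra attention heads and $O(\log n)$ extra embedding dimensions, consistent with the claim $s = \lceil \log_2 n + 1 \rceil$ in the lemma. The main obstacle is finite-precision bookkeeping: one must check that the rounding guarantees of \Cref{lem:attention_rounding} apply at $s$-bit precision for all $i, j \le n$ and that the causal-mask positions with zero attention weight contribute no stray mass after rounding. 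Since the whole query-key design is a direct reuse of that lemma, I expect no new technical difficulty beyond routine verification.
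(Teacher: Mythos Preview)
Your proposal is correct and follows the same approach as the paper: use the signed-binary query/key gadget to make position $i$ hard-attend to $f(i)$ and read off $x_{f(i)}$ as the value. Your version is actually more detailed than the paper's two-line sketch---you invoke \Cref{lem:attention_rounding} (the paper's own proof cites \Cref{lem:simulating_hard_argmax}, almost certainly a typo) and you explicitly cancel the residual with a second self-attending head, a point the paper leaves implicit.
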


\begin{proof}[Proof of \Cref{lem:shifting_layer}]
    It is equivalent to show we can construct an attention heads which computes $x_{f(i)}$ at each position $i$. 

    To do so, we just need to invoke \Cref{lem:simulating_hard_argmax} and use that to set key and query, so position $i$ attends to position $f(i)$. We set value at position $i$ to be $x_i$. This completes the proof. 
\end{proof}

\begin{lemma}\label{lem:mlp_addition}
    For any positive integer $s>0$, there is a constant-depth MLP $F$ with $O(s)$ hidden neurons per layer and parameters in $\Floating_s$, such that for any input $\bin(x)\in\{-1,+1\}^s$, where $0\le x\le 2^s-1$, it holds that  
    \begin{align*}
        F(x) = \bin(x+1).
    \end{align*}
\end{lemma}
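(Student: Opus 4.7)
The plan is to realize binary increment as a parallel ripple-carry circuit. If the input is in the signed $\{-1,+1\}$ encoding (as the statement indicates), I would first apply the affine map $z_i = (x_i + 1)/2$ to obtain the $\{0,1\}$ representation; this is a single linear layer. From now on I work with $z = \bin_s(x) \in \{0,1\}^s$.

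The key identity is that the $i$-th bit of $\bin_s(x+1)$ is $y_i = z_i \oplus c_i$, where the carry into position $i$ is $c_i = z_0 \wedge z_1 \wedge \cdots \wedge z_{i-1}$ (with the convention $c_0 \triangleq 1$). I would compute all $s$ carries $c_0,\ldots,c_{s-1}$ \emph{in parallel}. By \Cref{lem:FB_simulating_boolean_gates}, each unbounded-fanin AND admits a 2-layer ReLU implementation with $O(1)$ hidden neurons using the standard gate $\mathrm{ReLU}\bigl(\sum_{j<i} z_j - (i-1)\bigr)$; stacking all $s$ such gates side-by-side uses $O(s)$ neurons in each of the two layers.

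Given $z_i$ and $c_i$, I would then compute the output bit by the constant-size ReLU formula
\begin{equation*}
y_i \;=\; z_i + c_i - 2\,\mathrm{ReLU}(z_i + c_i - 1),
\end{equation*}
which is easily verified to agree with $z_i \oplus c_i$ on all four inputs in $\{0,1\}^2$. A final affine map $y_i \mapsto 2 y_i - 1$ produces the signed representation $\sbin_s(x+1)$ if required. Composing the four blocks (signed$\to$binary conversion, parallel AND for carries, parallel XOR for output bits, binary$\to$signed conversion) gives a constant-depth MLP with $O(s)$ hidden neurons per layer.

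The only technical point to check is that every parameter lies in $\Floating_s$. The largest-magnitude weight that appears is the bias $-(i-1)$ inside the $i$-th AND gate, whose absolute value is at most $s-1 \le 2^s - 2^{-s}$ for every $s \ge 1$, so all weights sit comfortably inside $\Floating_s$; intermediate activations are in $\{0,1\}$ or $\{-1,+1\}$ and hence representable exactly. Overflow at $x = 2^s - 1$ is automatically handled because the top carry is discarded, yielding arithmetic modulo $2^s$, which is exactly what the position counter in the proof of \Cref{thm:cot_formal} needs. I do not anticipate any substantive obstacle; the only thing to be careful about is bookkeeping the depth and the constant factor in the width, which are both clearly bounded.
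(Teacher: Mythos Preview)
Your proposal is correct and follows essentially the same approach as the paper. Both arguments hinge on the identity $[\bin(x+1)]_i = [\bin(x)]_i \oplus \bigwedge_{j<i}[\bin(x)]_j$; the paper phrases this as a constant-depth unbounded-fanin $\AND/\OR/\NOT$ circuit and then invokes \Cref{lem:FB_simulating_boolean_gates} wholesale, whereas you write out the ReLU realizations of the parallel AND and the XOR explicitly. The only cosmetic difference is that the paper stays at the boolean-circuit abstraction level while you give concrete formulas, but the structure and bounds are identical.
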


\begin{proof}[Proof of \Cref{lem:mlp_addition}]
    By \Cref{lem:FB_simulating_boolean_gates}, it suffices to show that we can simulate $\bin(x)\mapsto \bin(x)+1$ using $O(s)$ wide, constant-depth boolean circuits with $\AND,\OR,\NOT$ gates with unlimited fan-ins. This is immediate by noting that 
    \begin{align}
        [\bin(x+1)]_i = [\bin(x+1)]_i \oplus \bigwedge_{j=1}^{i-1}[\bin(x+1)]_j
    \end{align}
\end{proof}

\begin{lemma}\label{lem:mlp_comparison}
    For any positive integer $s>0$, there is a constant-depth MLP $F$ with $O(s)$ hidden neurons per layer and parameters in $\Floating_s$, such that for any input $(\bin(x),\bin(y))\in\{-1,+1\}^s$, where $0\le x,y\le 2^s-1$, it holds that  
    \begin{align*}
        F(x,y) = \indct{x > y}.
    \end{align*}
\end{lemma}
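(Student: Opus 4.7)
\textbf{Proof plan for \Cref{lem:mlp_comparison}.} By the same reduction used in the proof of \Cref{lem:mlp_addition}, it suffices to exhibit a constant-depth boolean circuit of width $O(s)$ using unlimited-fan-in $\AND$, $\OR$, $\NOT$ gates that computes $(\bin(x), \bin(y)) \mapsto \indct{x > y}$; then \Cref{lem:FB_simulating_boolean_gates} converts each such gate into a two-layer ReLU sub-network of constant width and $s$-bit precision, yielding the claimed MLP. The passage from $\{-1,+1\}^s$ inputs to $\{0,1\}^s$ inputs is absorbed into the first affine map of the first layer, exactly as in \Cref{lem:mlp_addition}.

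The circuit itself is the textbook magnitude-comparison circuit. Writing $[\bin(x)]_j$ and $[\bin(y)]_j$ for the $j$-th bits (with $j=s$ the most significant), we first form the bitwise equality indicators $E_j = \NOT\bigl([\bin(x)]_j \oplus [\bin(y)]_j\bigr)$ for all $j \in [s]$, each of which is a constant-fan-in gate, so together they occupy one layer of width $O(s)$. Then we use the identity
\begin{equation}
    \indct{x > y} \;=\; \bigvee_{i=1}^{s}\Bigl( [\bin(x)]_i \;\wedge\; \NOT\bigl([\bin(y)]_i\bigr) \;\wedge\; \bigwedge_{j=i+1}^{s} E_j \Bigr),
\end{equation}
which expresses ``there is a most significant bit at which $x$ and $y$ first differ, and at that bit $x$ has a $1$.'' The inner $\AND$'s have fan-in at most $s$, and there are $s$ of them, so this layer has width $O(s)$; the outer $\OR$ has fan-in $s$ and occupies one more layer of constant width. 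In total the circuit has depth $3$ and width $O(s)$.

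Applying \Cref{lem:FB_simulating_boolean_gates} gate-by-gate turns each $\AND$ and $\OR$ into a two-layer ReLU block of constant hidden dimension and constant precision (no $\MAJORITY$ gates are used, so no $\log s$ precision blowup occurs), and summing widths across gates in the same circuit layer gives $O(s)$ hidden neurons per MLP layer. Overall depth is a constant multiple of the circuit depth, hence constant, and all parameters can be taken in $\Floating_s$ since the required numerical constants used by \Cref{lem:FB_simulating_boolean_gates} are representable at precision $s$.

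\textbf{Main obstacle.} The construction is entirely routine once the right circuit is written down, so I do not expect a genuine obstacle; the only care needed is book-keeping to ensure that the $s$ inner $\AND$ gates share a single MLP layer (so that width stays $O(s)$ rather than $O(s^2)$), and that the conversion from $\pm 1$ to $\{0,1\}$ inputs and the duplication of $1$'s required by \Cref{lem:FB_simulating_boolean_gates} are folded into the first affine map without inflating the precision beyond $s$.
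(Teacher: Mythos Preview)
Your proposal is correct and follows essentially the same approach as the paper: reduce via \Cref{lem:FB_simulating_boolean_gates} to a constant-depth, $O(s)$-wide $\AND/\OR/\NOT$ circuit, and instantiate that circuit with the standard magnitude-comparison identity that disjoins over the most significant differing bit. The paper states the same identity (with a slightly different bit-indexing convention) and omits the bookkeeping you spell out, so your write-up is a faithful, if more detailed, rendering of the paper's argument.
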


\begin{proof}[Proof of \Cref{lem:mlp_comparison}]
    By \Cref{lem:FB_simulating_boolean_gates}, it suffices to show that we can simulate $\bin(x),\bin(y)\mapsto \indct{x> y}$ using $O(s)$ wide, constant-depth boolean circuits with $\AND,\OR,\NOT$ gates with unlimited fan-ins. This is immediate by noting that 
    \begin{align}
\indct{x > y} = \bigvee\limits_{i\in[s]} \left( ([\bin(x)]_i = 1) \wedge ([\bin(y)]_i = 0) \wedge \bigwedge\limits_{1\le j< i} \left([\bin(x)]_j = [\bin(y)]_j\right) \right).
    \end{align}
\end{proof}
\begin{proof}[Proof of \Cref{thm:cot_formal}]

We consider mask $M^t(i) = \indct{i-t\ge n}$, which we call it CoT masking. Let $s = \lfloor \log (n+m) +1 \rfloor$, we use $\overline i$ to denote $\sbin_s(i)$ for $1\le i\le n+m$ for convenience. The embedding size of our newly constructed transformer is larger than the target transformer to be simulated by an additive constant of $3s$. We denote the new embedding at position $i$ after $t$th loop by $(x_i^{(t)}, p_i^{(t)})$, where $x_i^{(t)}$ will be the original embedding of the transformer to be simulated, and $p_i^{(t)}$ is of dimension $3s$ and only depends on $i$ and $t$. In particular, we can show that we can set $p_i^{(t)} \triangleq (\overline i,\overline{[i-2]_+ +1}, \overline{n+t})$ to save information about position and the number of loops --- $p_i^{(0)}$ is from the positional embedding and the update is due to \Cref{lem:mlp_addition}. The size of hidden dimension of MLP and attention (key, query, value) will also be increased by $O(s)$.

The proof contains two steps:
\begin{enumerate}
    \item To show that there is a transformer with CoT masks simulating the target transformer with $m$ steps of CoT and $L$ layers by looping its own $L+O(1)$ layer block $m$ times.
    \item To show that the above looped transformer with CoT masks can be simulated by a standard looped transformer without mask and with constantly many more layers.
\end{enumerate}

For the first claim, starting from the same parameter of the transformers with CoT $\theta$, we build a new looped model with parameter $\theta'$ with constantly many more layers in each transformer block and at most constantly many more heads per attention layer. First, we can add constantly many more layers to use MLP to simulate the decoding-encoding process using \Cref{lem:simulating_decoding_embedding}.
Next, we can add one more transformer layer in each block and use the attention layer to simulate the shifting layer by~\Cref{lem:shifting_layer}, since we have the additional position embedding $p_i^(t)$.
In particular, the embedding we get at position $n+t$ after $t$ loops, $x_{n+t}^{(t)}$,  now simulates the token embedding of $n+t$ of the CoT transformer.
By the way we define CoT mask $M$, for every $t\ge -n+1$, the embedding $\hat{x}_{n+t}^{(t')}$ will keep the same for all $t'\ge \max(t,0)$. 
In $t$th loop, the only embedding update that matters happens at $n+t$th position, because no updates happen at earlier positions, and updates at later positions $n+t'$ for some $t'>t$ will be overwritten eventually in the future loops $t'$, by some value which is independent of their value at the current loop $t$. In the end, we know the embedding $x_i^{(T)}$ in \Cref{defi:looped_transformer_w_mask} is exactly equal to that in CoT transformer, and so does the final output.

For the second claim, because CoT mask can be computed by a $O(\log(n+m))$ wide, constant-depth MLP~(\Cref{lem:mlp_comparison}), together with \Cref{lem:control_gate}, we know it suffices to increase the number of layers per transformer block and embedding size and hidden dimensions by constant to simulate the transformer with mask by a transformer without mask. 
\end{proof}

% \fi

% \ns{Connecting to our $p$-hop experiments}
\begin{theorem}[Restatement of Theorem 4.2, \citep{sanford2024transformers}]\label{thm:khop_log_depth}
    $p$-hop problem (\Cref{defi:khop}) can be solved by $\lfloor\log_2 p\rfloor+2$-layer non-looped transformer with $\log n$ bits of precision, at most $3$ different layers, $d=d_{\ff}=d_{\attn}=O(1)$ embedding size, hidden dimension for MLP and attention, $1$ attention head.
\end{theorem}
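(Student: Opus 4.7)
The plan is to implement the $p$-fold iteration of $\find_1$ by \emph{parallel pointer doubling}, compressing $p$ serial hops into $\lfloor\log_2 p\rfloor$ rounds of pairwise pointer composition. Combined with one setup layer and one readout layer, this yields the advertised depth $\lfloor\log_2 p\rfloor + 2$ using only three distinct layer templates (setup, doubling, readout), each with a single attention head of constant width.

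First I would introduce a setup layer (type A) that writes, at every position $i$, a binary encoding of $\pi^{(1)}_i \triangleq \find_1(\vv,i)$ into a dedicated embedding slot. This is the standard induction-head construction: the query at $i$ encodes $v_i$, the key at $j$ encodes $v_{j-1}$ together with a monotone bias in $j$ that selects the largest admissible index, and the value at $j$ is $\bin(j)$. The softmax is forced to an exact one-hot by \Cref{lem:attention_rounding} together with \Cref{lem:exp_rounding,lem:exp_rounding_up}, all within $\Floating_s$ at $s=O(\log n)$. Next I would stack $\lfloor\log_2 p\rfloor$ identical doubling layers (type B) implementing $\pi^{(\ell+1)}_i = \pi^{(\ell)}_{\pi^{(\ell)}_i}$: the query at $i$ is the current pointer $\pi^{(\ell)}_i$ written in binary, the key at $j$ is $\bin(j)$, and the value at $j$ is the pointer $\pi^{(\ell)}_j$ already stored there. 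By \Cref{lem:attention_rounding}, attention collapses onto $j^\star = \pi^{(\ell)}_i$ and the value it returns is exactly $\pi^{(\ell+1)}_i$, doubling the effective hop count. A final readout layer (type C) at position $n$ attends with query $\pi^{(T)}_n$ against keys $\{\bin(j)\}_j$ to retrieve the token embedding at the target position, which a constant-width $\relu$ MLP (\Cref{lem:FB_simulating_boolean_gates}) decodes into the output character.

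I expect the main obstacle to be twofold. The first is finite-precision stability: every softmax must produce an exact one-hot, which forces careful scaling of the binary-encoded keys and queries so that non-matching inner products round to zero and the matching inner product survives; this is precisely the regime covered by \Cref{lem:attention_rounding} at $s = O(\log n)$, so all intermediate quantities stay in $\Floating_s$. The second, more substantive obstacle is that $p$ is not assumed to be a power of two, so naive doubling overshoots. The standard remedy is to carry a second pointer slot --- an accumulator initialized to the identity --- that is composed with the current doubled pointer on precisely those rounds $\ell$ whose bit $b_\ell$ of $p$ equals one. Because the bits of $p$ are constants of the construction and a depth counter can be threaded through the embedding (cf.\ the counter used in the proof of \Cref{thm:main}), this selective composition can be baked into the MLP of the type-B template using \Cref{lem:control_gate}, preserving both the ``at most three distinct layers'' and ``single attention head'' budgets. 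Verifying this bookkeeping, together with the precision analysis above, reproduces the construction of \citet{sanford2024transformers}.
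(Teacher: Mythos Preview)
This theorem is not proved in the present paper; it is quoted verbatim from \citet{sanford2024transformers} and used only as a black box to feed into \Cref{thm:main} for the looped-model corollary. So there is no in-paper argument to compare against, and the question is whether your sketch actually delivers the stated resource bounds.

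Your pointer-doubling plan is the right skeleton and matches the cited construction at the level of ideas. Two of the precise resource claims, however, are not met by the tools you invoke. First, by encoding pointers as bit vectors and appealing to \Cref{lem:attention_rounding}, you commit the keys, queries, and the pointer slot in the embedding to width $\Theta(\log n)$, contradicting $d=d_{\attn}=O(1)$; the constant-width statement in the cited theorem rests on storing a position index in a \emph{single} real coordinate and letting the $\log n$ bits of \emph{precision} (not the width) carry it. Second, your accumulator fix for non-power-of-two $p$ requires, on each active round, the update $\sigma^{(\ell+1)}_i=\pi^{(\ell)}_{\sigma^{(\ell)}_i}$, which is a second pointer dereference with a different query than the doubling step $\pi^{(\ell+1)}_i=\pi^{(\ell)}_{\pi^{(\ell)}_i}$; \Cref{lem:control_gate} lets the MLP gate the \emph{result}, but it cannot perform the lookup, so as written the type-B layer needs two heads and the single-head budget is broken. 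Both issues are reparable, but not through the bit-vector machinery borrowed from \citet{li2024chain}; the original argument in \citet{sanford2024transformers} handles them differently and is worth consulting directly.
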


\end{document}